\def \one {{\bf 1}}
\newtheorem{defn}{Def}
\def\bd{\begin{defn}}
\def\ed{\end{defn}}
\newtheorem{rem}{Remark}
\def\br{\begin{rem}}
\def\er{\end{rem}}
\def\bp{\begin{prop}}
\def\ep{\end{prop}}
\def\bc{\begin{cor}}
\def\ec{\end{cor}}
\def\bl{\begin{lem}}
\def\el{\end{lem}}
\newcommand{\floor}[1]{\left\lfloor#1\right\rfloor}
\newcommand{\code}[1]{\mbox{\texttt{#1}}}
\newcounter{lineno}
\newenvironment{pseudocode}{\begin{small}\begin{tabbing}\textbf{mm}\=mm\=mm\=mm\=mm\=mm\=mm\=mm\=mm\=\kill}{\end{tabbing}\end{small}}
\newcommand{\codename}{\setcounter{lineno}{0}\>}
\newcommand{\codeline}{\>\stepcounter{lineno}\textbf{\arabic{lineno}}\'\>}
\begin{document}

\title{Monotonic Calibrated Interpolated Look-Up Tables}

\author{\name Maya Gupta \email mayagupta@google.com \\
\name Andrew Cotter \email acotter@google.com \\
\name Jan Pfeifer \email janpf@google.com \\
\name Konstantin Voevodski \email kvodski@google.com \\
\name Kevin Canini \email canini@google.com \\
\name Alexander Mangylov \email amangy@google.com \\
\name Wojciech Moczydlowski \email wojtekm@google.com \\
\name Alexander van Esbroeck \email alexve@google.com \\
    \addr Google\\
    1600 Amphitheatre Pkwy\\
       Mountain View, CA 94301, USA
       }

\editor{}

\maketitle

\begin{abstract}
Real-world machine learning applications may require functions to be fast-to-evaluate and interpretable.  In particular, guaranteed monotonicity of the learned function with respect to some of the inputs can be critical to user confidence.   We propose meeting these goals for low-dimensional machine learning problems by learning flexible, monotonic functions using calibrated interpolated look-up tables.  We extend the structural risk minimization framework of lattice regression to  train monotonic functions by adding linear inequality constraints. In addition, we propose jointly learning interpretable calibrations of each feature to normalize continuous features and handle categorical or missing data, at the cost of making the objective non-convex.  We address large-scale learning through parallelization, mini-batching, and random sampling of additive regularizer terms.  Case studies on real-world problems with up to sixteen features and up to hundreds of millions of training samples demonstrate the proposed monotonic functions can achieve state-of-the-art accuracy in practice while providing greater transparency to users. 
\end{abstract}

\begin{keywords} interpretability, interpolation, look-up tables, monotonicity
\end{keywords}

\section{Introduction}
Many challenging issues arise when making machine learning useful in practice. Evaluation of the trained model may need to be fast.  Features may be categorical, missing, or poorly calibrated.  A blackbox model may be unacceptable: users may require guarantees that the function will behave sensibly for all samples, and prefer functions that are easier to understand and debug. 
  
We have found that a key interpretability issue in practice is whether the learned model can be guaranteed to be \emph{monotonic} with respect to some features. For example, suppose the goal is to estimate the value of a used car, and one of the features is the number of km it has been driven. If all the other feature values are held fixed, we expect the value of the used car to never increase as the number of km driven increases. But a model learned from a small set of noisy samples may not, in fact, respect this prior knowledge.

\begin{figure}[t!]
\begin{center}
\begin{tabular}{cccc}
 \includegraphics[width=1.4in]{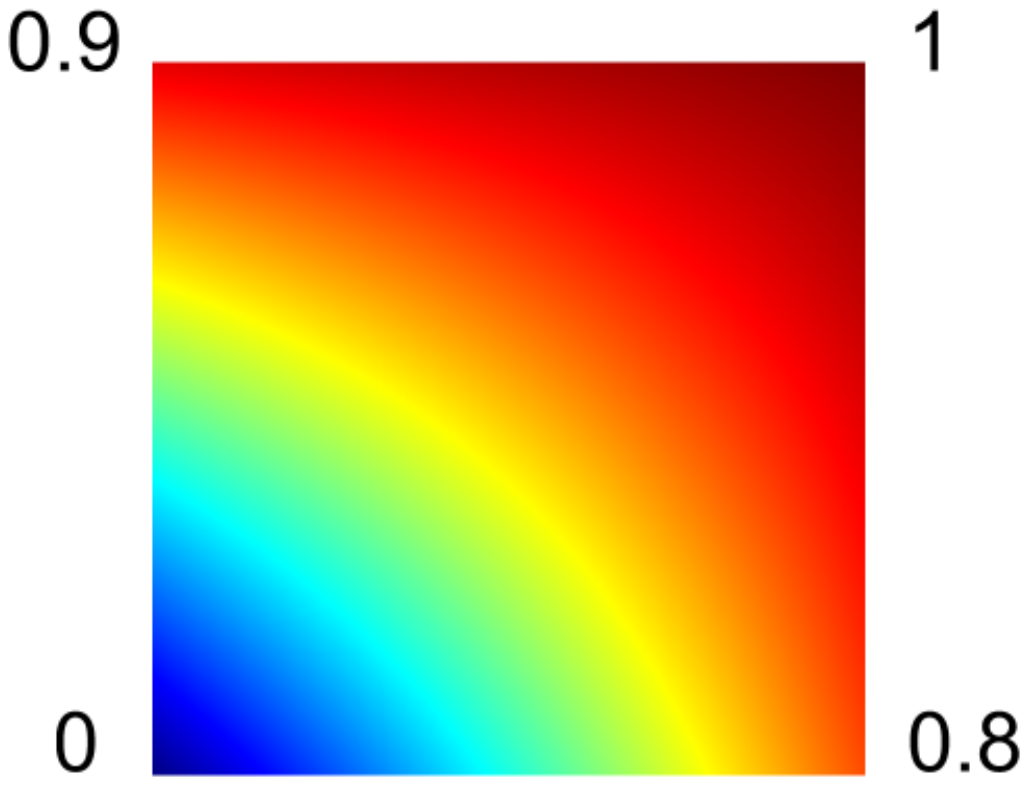} &  \includegraphics[width=1.4in]{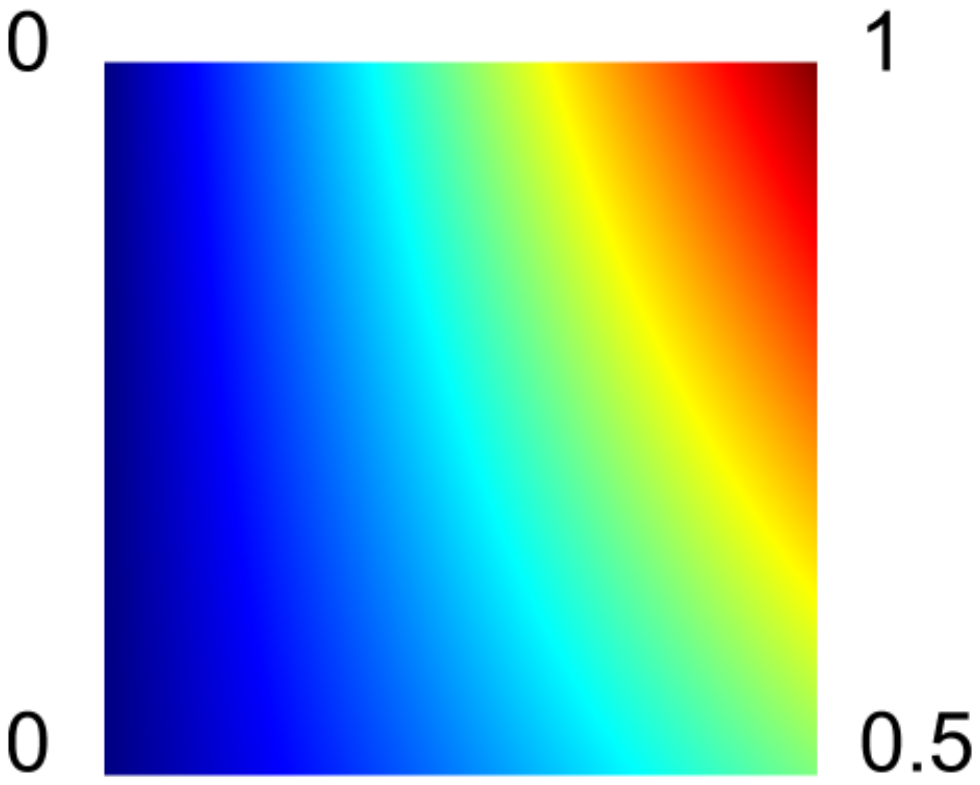}& \includegraphics[width=1.4in]{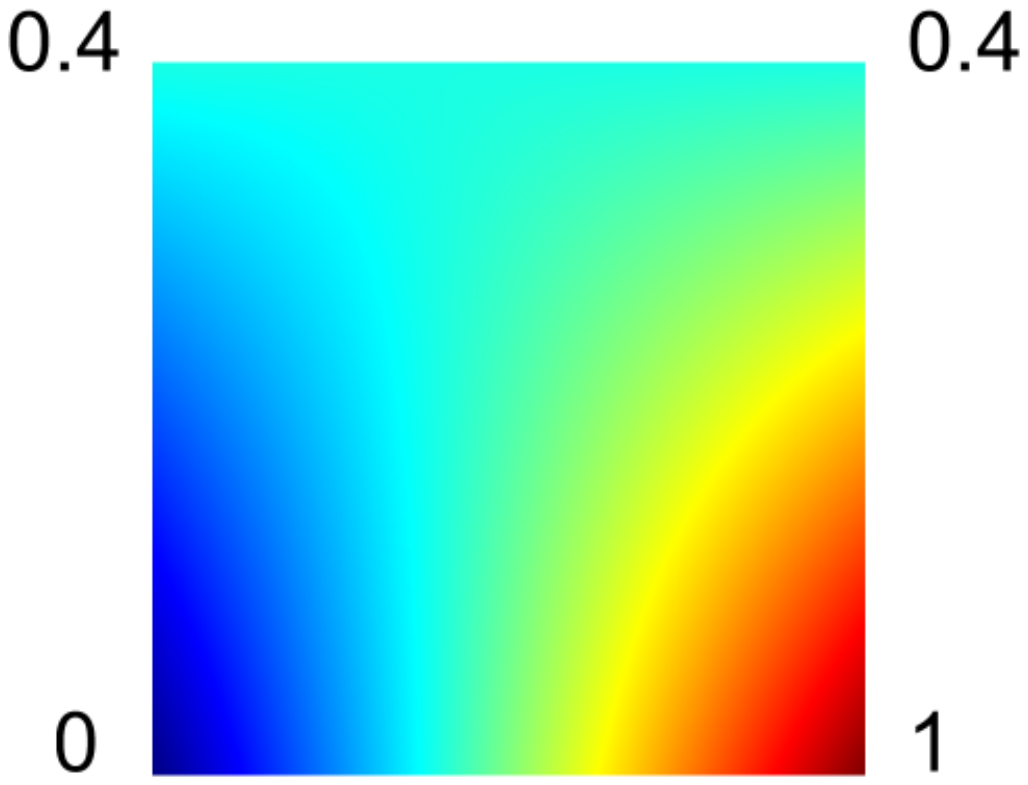}& \includegraphics[width=1.4in]{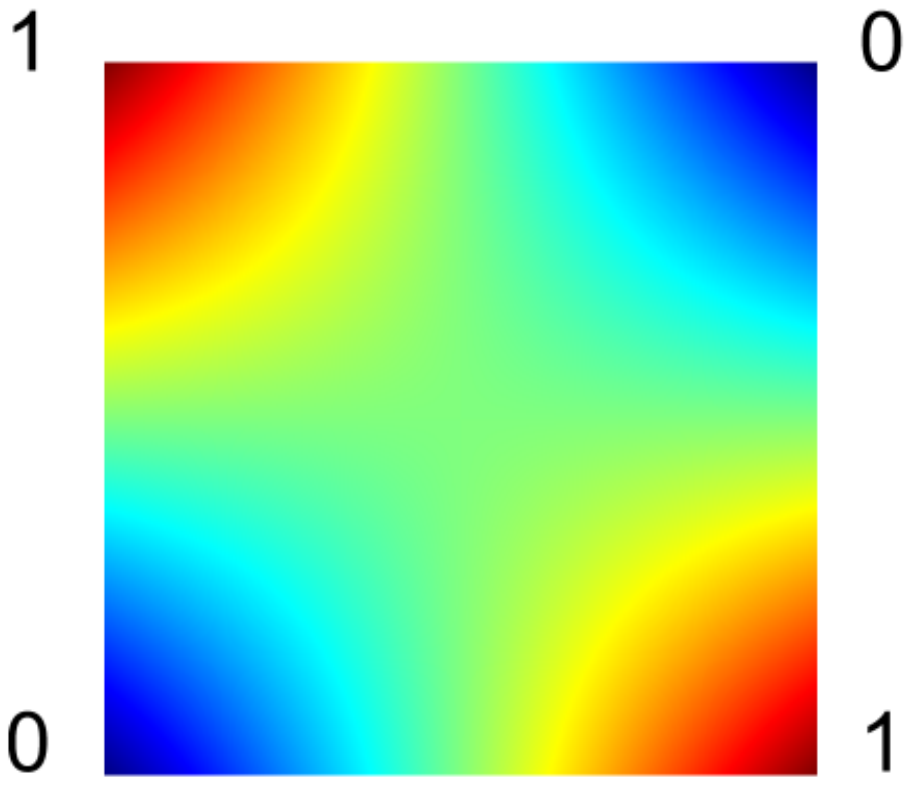} \\
Monotonic & Monotonic & Not Monotonic & Not Monotonic \\
(a) & (b) & (c) & (d) \\
\end{tabular}
\end{center}
\caption{Example $2 \times 2$ interpolated look-up table functions over a unit square. Each function is defined by a  $2 \times 2$ lattice with four parameters, which are the values of the function in the four corners (shown). The function is linearly interpolated from its parameters (see Figure \ref{fig:howItWorks} for a pictorial description of linear interpolation). The function in (a) is strictly monotonically increasing in both features,  which can be verified by checking that each upper parameter is larger than the parameter below it, and that each parameter on the right is larger than the parameter to its left. The  function in (b) is strictly monotonically increasing in the first feature, and monotonically increasing in the second feature (but not strictly so since the parameters on the left are both zero). The function in (c) is monotonically increasing in the first feature (one verifies this by noting that $1 \geq 0$ and $0.4 \geq 0.4$), but non-monotonic in the second feature: on the left side the function increases from $0 \rightarrow 0.4$, but on the right side the function decreases from $1 \rightarrow 0.4$.  The function in (d) is a saddle function interpolating an exclusive-OR, and is non-monotonic in both features.}
\label{fig:exampleLattices}
\end{figure}

In this paper, we propose learning monotonic, efficient, and flexible functions by constraining and calibrating interpolated look-up tables in a structural risk minimization framework. Learning monotonic functions is difficult, and previously published work has only been illustrated on small problems (see Table \ref{tab:relatedWork}).  Our experimental results demonstrate learning flexible, guaranteed monotonic functions on more features and data than prior work, and that these functions achieve state-of-the-art performance on real-world problems.

The parameters of an interpolated look-up table are simply values of the function, regularly spaced in the input space, and these values are interpolated to compute $f(x)$ for any $x$. See Figures \ref{fig:exampleLattices} and \ref{fig:howItWorks} for examples of $2 \times 2$ and $2 \times 3$ look-up tables and the functions produced by interpolating them. Each parameter has a clear meaning:  it is the value of the function for a particular input, for a set of inputs on a regular grid. These parameters can be individually read and checked to understand the learned function's behavior.

Interpolating look-up tables is a classic strategy for representing low-dimensional functions. For example, backs of old textbooks have pages of look-up tables for one-dimensional functions like $sin(x)$, and interpolating look-up tables is standardized by the ICC Profile for the three and four dimensional nonlinear transformations needed to color manage printers \citep{BalaBook}. Using the efficient linear interpolation method we refer to as \emph{simplex interpolation}, we demonstrate that interpolating a look-up table with twenty features took 2 microseconds on a standard CPU. The practical limit to the number of features is more limited by the number of parameters, which scales as $2^D$ for $D$ features.

\begin{figure}[t]
\begin{center}
\begin{tabular}{ccc}
\includegraphics[width=1.9in]{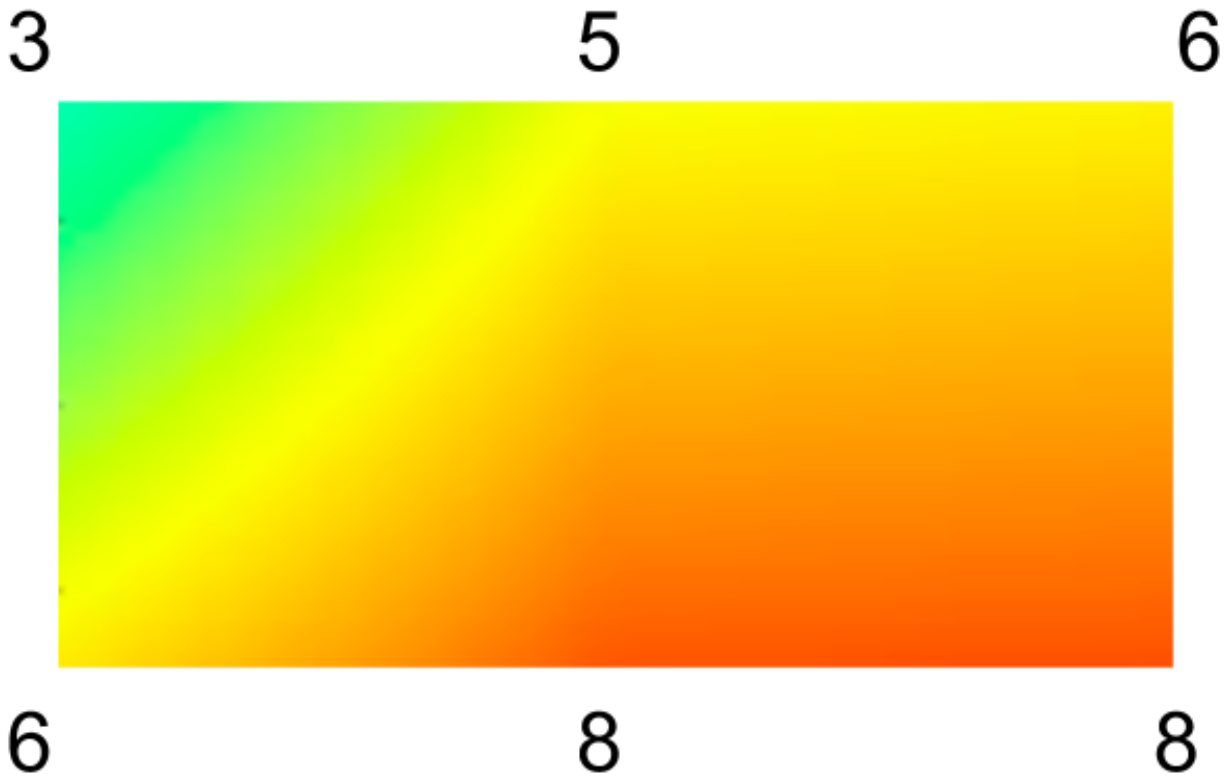} & \includegraphics[width=1.9in]{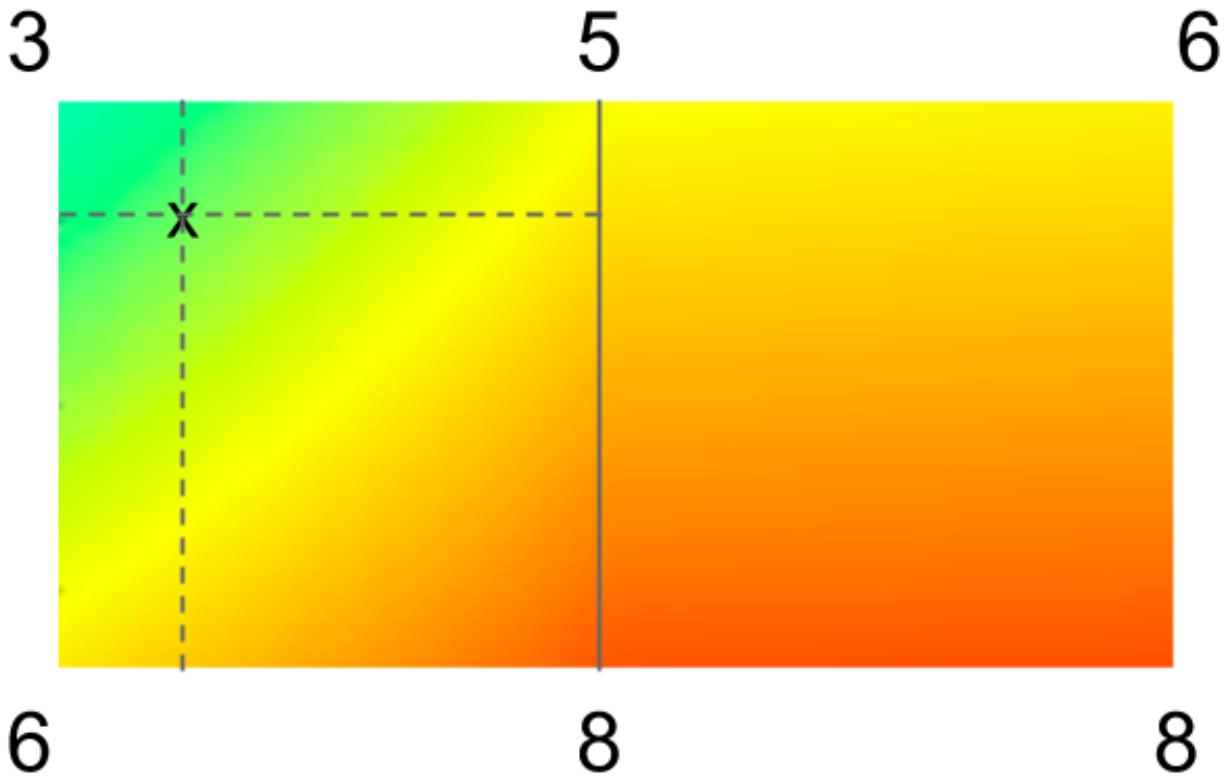}& \includegraphics[width=1.9in]{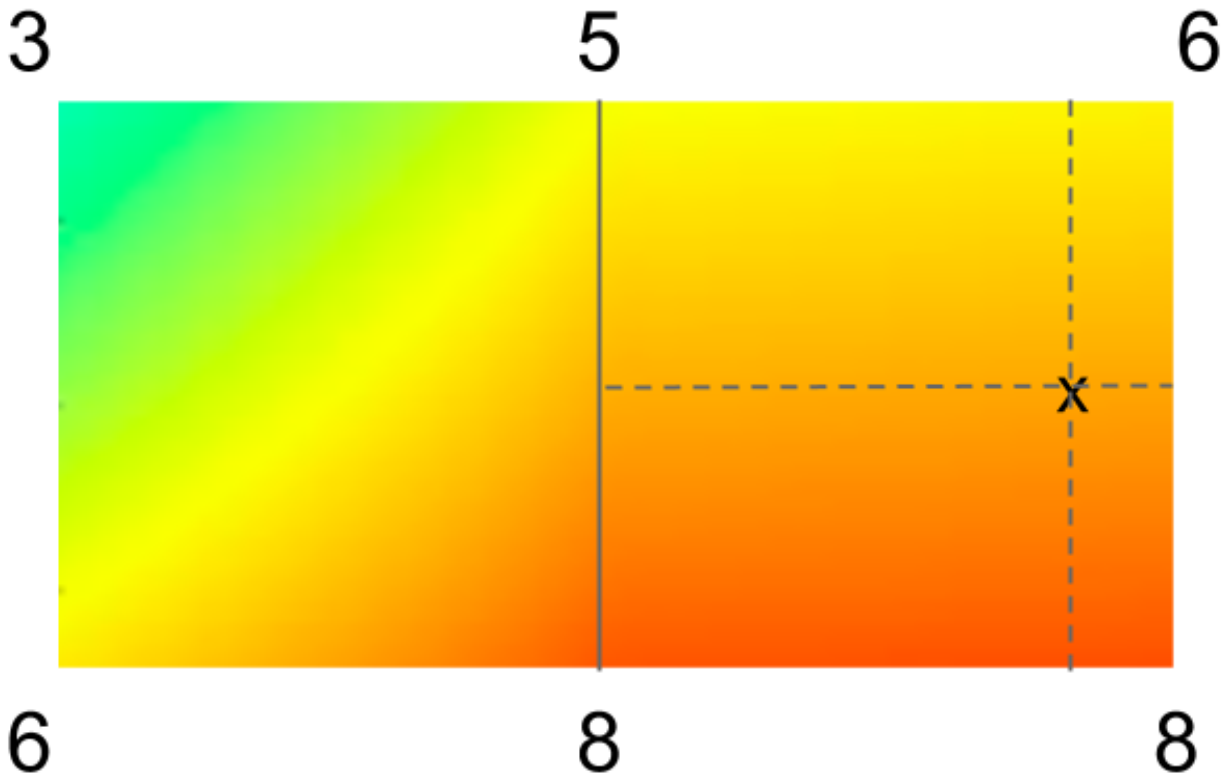} \\
\end{tabular}
\end{center}
\caption{\textbf{Left:} A $3 \times 2$ lattice function with lattice parameters as marked. The function is continuous everywhere, and differentiable everywhere except at the boundary between lattice cells, which is the vertical edge joining the middle parameters 5 and 8. \textbf{Middle:} To evaluate $f(x)$, any $x$ that falls in the left cell of the lattice is linearly interpolated from the parameters at the vertices of the left cell, here 6, 3, 5 and $8$. Linear interpolation is linear not in $x$ but in the lattice parameters, that is $f(x)$ is a weighted combination of the parameter values 6, 3, 5, and 8. The weights on the parameters are the areas of the four boxes formed by the dotted lines drawn orthogonally through $x$, with each parameter weighted by the area of the box farthest from it, so that as $x$ moves closer to a parameter the weight on that parameter gets bigger. Because the dotted lines partition the unit square, the sum of these weights is always one.  \textbf{Right:} Samples $x$ that fall in the right cell of the lattice are interpolated from that cell's parameters: 8, 5, 6 and 8.}
\label{fig:howItWorks}
\end{figure}

Estimating the parameters of an interpolated look-up table using structural risk minimization was proposed by \citet{Garcia:09} and called \emph{lattice regression}. Lattice regression can be viewed as a kernel method that uses the explicit nonlinear feature transformation formed by mapping an input $x \in [0,1]^D$ to a vector of linear interpolation weights $\phi(x) \in \Delta^{2^D}$ over the $2^D$ vertices of the look-up table cell that contains $x$, where $\Delta$ denotes the standard simplex. Then the function is linear in these transformed features: $f(x) = \theta^T\phi(x)$. We will refer to the look-up table parameters $\theta$ as the \emph{lattice}, and to the interpolated look-up table $f(x)$ as the \emph{lattice function}. Earlier work in lattice regression focused on learning highly nonlinear functions over $2$ to $4$ features with fine-grained lattices, such as a $17 \times 17 \times 17$ lattice for modeling a color printer or super-resolution of spherical images \citep{Garcia:10, Garcia:12}. In this paper, we apply lattice regression to more generic machine learning problems with $D = 5$ to $16$ features, and show that $2^D$ lattices work well for many real-world classification and ranking problems, especially when paired with jointly trained one-dimensional pre-processing functions. 

We begin with a survey of related work in  machine learning of interpretable and monotonic functions.  Then we review lattice regression in Section \ref{sec:reviewBasics}. The main contribution is learning monotonic lattices in Section \ref{sec:monotonicity}. We discuss efficient linear interpolation in Section \ref{sec:fasterpussycat}. We propose an interpretable torsion lattice regularizer in Section \ref{sec:regularizers}. We propose jointly learning one-dimensional calibration functions in Section \ref{sec:calibration}, and consider two strategies for supervised handling of missing data for lattice regression in Section \ref{sec:missing}.  In Section \ref{sec:bigData}, we consider strategies for speeding up training and handling large-scale problems and large-scale constraint-handling.  A series of case studies in Section \ref{sec:experiments} experimentally explore the paper's proposals, and demonstrate that monotonic lattice regression achieves similar accuracy as a random forest, and that monotonicity is a common issue that arises in many different applications. The paper ends with conclusions and open questions in Section \ref{sec:discussion}.

\section{Related Work} \label{sec:relatedWork}
We give a brief overview of related work in interpretable machine learning, then survey related work in learning monotonic functions.

\subsection{Related Work in Interpretable Machine Learning}
Two key themes of the prior work on interpretable machine learning are \emph{(i)} interpretable function classes, and \emph{(ii)} preferring simpler functions within a function class. 

\subsubsection{Interpretable Function Classes}
The function classes of decision trees and rules are generally regarded as relatively interpretable. Na{\"i}ve Bayes classifiers can be interpreted in terms of weights of evidence \citep{Good:65,Spiegelhalter:84}. Similarly, linear models form an interpretable function class in that the parameters dictate the relative importance of each feature.  Linear approaches can be generalized to sum nonlinear components, as in generalized additive models \citep{HastieTibshirani:90} and some kernel methods, while still retaining some of their interpretable aspects. 


The function class of interpolated look-up tables is interpretable in that the function's parameters are the look-up table values, and so are semantically meaningful: they are simply examples of the function's output, regularly spaced in the domain. Given two look-up tables with the same structure and the same features, one can analyze how their functions differ by analyzing how the look-up table parameters differ. Analyzing which parameters change by how much can help answer questions like ``If I add training examples and re-train, what changes about the model?"

\subsubsection{Prefer Simpler Functions}
Another body of work focuses on choosing simpler functions within a function class, optimizing an objective of the form: minimize empirical error and maximize simplicity, where simplicity is usually defined as some manifestation of Occam's Razor or variant of Kolmogorov complexity. For example, \citet{Ishibuchi:07} minimize the number of fuzzy rules in a rule set,  \citet{Osei:07} prunes a decision tree for  interpretability,  \citet{Ratsch:06} finds a sparse convex combination of kernels for a multi-kernel support vector machine, and \citet{Nock:02} prefers smaller committees of ensemble classifiers. Similarly, \citet{Herrera:09} measure the interpretability of rule-based classifiers in terms of the number of rules and number of features used. More generally, this category of interpretability includes model selection criteria like the Bayesian information criterion and Akaike information criterion \citep{HTF}, sparsity regularizers like sparse linear regression models, and feature selection methods. Other approaches to simplicity may include simplified structure in graphical models or neural nets, such as the structured neural nets of \citep{Strannegaard:12}. 

While sparsity-based approaches to interpretability can provide regularization that reduces over-fitting and hence increases accuracy, it has also been noted that such strategies may create a trade-off between interpretability and accuracy \citep{Casillas:02,Nock:02,Yu:09,Shukla:12}. We hypothesize this occurs when the assumed simpler structure is a poor model of the true function. 

Monotonicity is another way to choose a semantically simpler function to increase interpretability (and regularize). Our case studies in Section \ref{sec:experiments} illustrate that when applied to problems where monotonicity is a good prior model, we do not see a trade-off with accuracy.  



\subsection{Related Work in Monotonic Functions}\label{sec:relatedMonotonic}
A function $f(x)$ is monotonically increasing with respect to feature $d$ if $f(x_i) \geq f(x_j)$ for any two feature vectors $x_i, x_j \in \mathbb{R}^D$ where $x_i[d] \geq x_j[d]$ and $x_i[m] = x_j[m]$ for $m \neq d$.

A number of approaches have been proposed for enforcing and encouraging monotonicity in machine learning. The computational complexity of these algorthims tends to be high, and most methods scale poorly in the number of features $D$ and samples $n$, as summarized in Table \ref{tab:relatedWork}. \\

We detail the related work in the following sections organized by the type of machine learning, but these methods could instead be organized by strategy, which mostly falls into one of four categories:

\begin{enumerate}
\item Constrain a more flexible function class to be monotonic, such as linear functions with positive coefficients, or a sigmoidal neural network with positive weights.

\item Post-process by pruning or reducing monotonicity violations after training. 

\item Penalize monotonicity violations by pairs of samples or sample derivatives when training.

\item Re-label samples to be monotonic before training.

\end{enumerate}

\subsubsection{Monotonic Linear and Polynomial Functions}
Linear functions can be easily constrained to be monotonic in certain inputs by requiring the corresponding slope coefficients to be non-negative, but linear functions are not sufficiently flexible for many problems. Polynomial functions (equivalently, linear functions with pre-defined crosses of features) can also be easily forced to be monotonic by requiring all coefficients to be positive. However, this is only a sufficient and not necessary condition: there are monotonic polynomials whose coefficients are not all positive. For example, consider the simple case of second degree multilinear polynomials defined over the unit square $f:[0,1]^2\rightarrow \mathbb{R}$ such that:
\begin{equation}
f(x) = a_0  + a_1 x[0] + a_2 x[1] + a_3 x[0] x[1]. \label{eqn:simplepoly}
\end{equation}
Forcing the derivative to be positive on the domain $x \in [0,1]^2$, one sees that the complete set of monotonic functions of the form (\ref{eqn:simplepoly}) on the unit square is described by four linear inequalities:
\begin{align*}
a_1 &> 0 \\
a_2 &> 0 \\
a_1 + a_3  & > 0 \\
a_2 + a_3  & > 0.
\end{align*}
The general problem of checking whether a particular choice of polynomial coefficients produces a monotonic function requires checking whether the polynomial's derivative (also a polynomial) is positive everywhere, which is equivalent to checking if the derivative has any real roots, which can be computationally challenging (see, for example, Sturm's theorem for details).

Functions of the form (\ref{eqn:simplepoly}) can be equivalently expressed as a $2 \times 2$ lattice interpolated with multilinear interpolation, but as we will show in Section \ref{sec:monotonicity},  with this alternate parameterization it is easier to check and enforce the complete set of monotonic functions.

\subsubsection{Monotonic Splines}
In this paper we extend lattice regression, which is a spline method with fixed knots on a regular grid and a linear kernel  \citep{Garcia:12}, to be monotonic.  There have been a number of proposals to learn monotonic one-dimensional splines. For example, building on \citet{Ramsay:98}, \citet{Shively:09} parameterize the set of all smooth and strictly monotonic one-dimensional functions using an integrated exponential form $f(x) = a + \int_0^x e^{b + u(t)}dt$, and showed better performance than the monotone functions estimators of \citet{Neelon:04} and \citet{Holmes:03} for smooth functions.  In other related spline work, \citet{Wahba:87} considered smoothing splines with linear inequality constraints, but did not address monotonicity. 


\subsubsection{Monotonic Decision Trees and Forests:}
Stumps and forests of stumps are easily constrained to be monotonic. However, for deeper or broader trees, all pairs of leaves must be checked to verify monotonicity \citep{Pardoel:02}. Non-monotonic trees can be pruned to be monotonic using various strategies that iteratively reduce the non-monotonic branches \citep{BenDavid:92,Pardoel:02}. Monontonicity can also be encouraged during tree construction by penalizing the splitting criterion to reduce the number of non-monotonic leaves a split would create \citep{benDavid:95}.  \citet{Feelders:02} achieved completely flexible monotonic trees using a strategy akin to bogosort \citep{bogosort}: train many trees on different random subsets of the training samples,  then select one that is monotonic.

\begin{landscape}

\begin{table}
\begin{tabular}{l|llcrr}
& Method & Monotonicity Strategy & Guaranteed Monotonic?   &  Max  $D$ & Max $n$   \\ \hline
\citet{Archer:93} & neural net& constrain function & yes  & 2 & 50 \\
\citet{wang:94} & neural net& constrain function & yes  & 1 & 150  \\
\citet{Mukarjee:94}  & kernel estimate & post-process & yes  & 2 & 2447 \\
\citet{benDavid:95} & tree & penalize splits & yes   & 8 & 125  \\
\citet{Sill:97} & neural net & penalize pairs & no  & 6 &  550  \\
\citet{Sill:98} & neural net & constrain function & yes  & 10 & 196 \\
\citet{KayUngar:2000} & neural net & constrain function & yes  & 1 & 100 \\
\citet{Feelders:02} & tree & randomize & yes & 8 & 60  \\
\citet{Pardoel:02} & tree & prune & yes  & 11 & 1090  \\
\citet{Spouge:03} & isotonic regression & constrain & yes  & 2 & 100,000 \\
\citet{Feelders:08} & k-NN & re-label samples & no  & 12 &  768 \\
\citet{lauer:2008r} & svm & sample derivatives  & no  & none & none  \\
\citet{Dugas:NIPS,Dugas:JMLR} & neural net & constrain function & yes  & 4 & 3434  \\
\citet{Shively:09} & spline & constrain function & yes  & 1 & 100 \\
\citet{Kotlowski:09} & rule-based & re-label samples & yes & 11 & 1728 \\
\citet{Daniels:2010} &  neural net & constrain function & yes & 6 & 174 \\
\citet{riihimaki:2010} & Gaussian process &  sample derivatives & no  & 7 & 1222 \\
\citet{QuHu:11} & neural net & derivatives / constrain & yes & 1 & 30  \\
\citet{neumann:2013} & neural net & sample derivatives & no & 3 & 625   \\
\end{tabular}
\caption{Some related work in learning monotonic functions. Many of these methods \emph{guarantee} a monotonic solution, but some only encourage monotonicity. The last two columns gives the largest number of features $D$ and the largest number of samples $n$ used in any of the experiments in that paper (generally not the same experiment). }\label{tab:relatedWork}
\end{table}
\end{landscape}

\subsubsection{Monotonic Support Vector Machines}
With a linear kernel, it may be easy to check and enforce monotonicity of support vector machines, but for nonlinear kernels it will be more challenging.  \citet{lauer:2008r}  encouraged support vector machines to be more monotonic by constraining the derivative of the function at the training samples. \citet{riihimaki:2010} used the same strategy to encourage more monotonic Gaussian processes.

\subsubsection{Monotonic Neural Networks}\label{sec:monotonicNeuralNets}
In perhaps the earliest work on monotonic neural networks, \citet{Archer:93} adaptively down-weighted samples during training whose gradient updates would violate monotonicity, to produce a positive weighted neural net. Other researchers explicitly proposed constraining the weights to be positive in a single hidden-layer neural network with the sigmoid or other monotonic nonlinear transformation  \citep{wang:94,KayUngar:2000,Dugas:NIPS,Dugas:JMLR,Minin:2010}. \citet{Dugas:JMLR} showed with simulations of four features and 400 training samples that both bias and variance were reduced by enforcing monotonicity. However, \citet{Daniels:2010} showed this approach requires $D$ hidden layers to arbitrarily approximate any $D$-dimensional monotonic function. In addition to a general proof, they provide a simple and realistic example of a two-dimensional monotonic function that cannot be fit with one hidden layer and positive weights. 

\citet{AbuMostafa:93} and \citet{Sill:97} proposed regularizing a function to be more monotonic by penalizing squared deviations in monotonicity for pairs of the input samples. This strategy only works if all the features are constrained to be monotonic (otherwise it is not clear how to order a given pair of input samples). Unfortunately, it generally does not guarantee monotonicity everywhere, only with respect to those sampled pairs. (And in fact, to guarantee monotonicity for the sampled pairs, an exact penalty rather than squared error would be needed with a sufficiently large regularization parameter to ensure the regularization was equivalent to a constraint). 

\citet{lauer:2008r}, \citet{riihimaki:2010}, and \citet{neumann:2013} encouraged \emph{extreme learning machines} to be more monotonic by constraining the derivative of the function to be positive for a set of sampled points.

\citet{QuHu:11} did a small-scale comparison of encouraging monotonicity by constraining input pairs to be monotonic, encouraging monotonic neural nets by constraining the function's derivatives at a subset of samples (analogous to \citet{lauer:2008r}), and using a sigmoidal function with positive weights.  They concluded the positive-weight sigmoidal function is best. 

\citet{Sill:98} proposed a guaranteed monotonic neural network with two hidden layers by requiring the first linear layer's weights to be positive, using hidden nodes that take the maximum of groups of first layer variables, and a second hidden layer that takes the minimum of the maxima. The resulting surface is piecewise linear, and as such can represent any continuous differentiable function arbitrarily well. The resulting objective function is not strictly convex, but the authors propose training such monotonic networks using gradient descent where samples are associated with one active hyperplane at each iteration. \citet{Daniels:2010} generalized this approach to handle the ``partially monotonic" case that the function is only monotonic with respect to some features.

\subsubsection{Isotonic Regression and Monotonic Nearest Neighbors}
Isotonic regression re-labels the input samples with values that are monotonic and close to the original labels. These monotonically re-labeled samples can then be used, for example, to define a monotonic piecewise constant or piecewise linear surface. This is an old approach; see \citet{Barlow:72} for an early survey. Isotonic regression can be solved in $O(n)$ time if monotonicity implies a total ordering of the $n$ samples. But for usual multi-dimensional machine learning problems, monotonicity implies only a partial order, and solving the $n$-parameter quadratic program is generally $O(n^4)$, and $O(n^3)$ for two-dimensional samples \citep{Spouge:03}.   Also problematic for large $n$ is the $O(n)$ evaluation time for new samples.


\citet{Mukarjee:94} proposed a suboptimal monotonic kernel regression that is computationally easier to train than isotonic regression. It computes a standard kernel estimate, then locally upper and lower bounds it to enforce monotonicity, for overall $O(n)$ evaluation time.
 
The \emph{isotonic separation} method of \citet{Chandrasekaran:05} is like the work of \citet{AbuMostafa:93} in that it penalizes violations of monotonicity by pairs of training samples.  Like isotonic regression, the output is a re-labeling of the original samples, the solution is at least $O(n^3)$ in the general case, and evaluation time is $O(n)$.

\citet{BenDavid:89,BenDavid:92} constructed a monotonic rule-based classifier by sequentially adding training examples (each of which defines a rule) that do not violate monotonicity restrictions.

\citet{Feelders:08} proposed re-labeling samples before applying nearest neighbors based on a monotonicity violation graph with the training examples at the vertices. Coupled with a proposed modified version of k-NN, they can enforce monotonic outputs. Similar  pre-processing of samples can be used to encourage any subsequently trained classifier to be more monotonic \citep{Feelders:10}.

Similarly, \citet{Kotlowski:09} try to solve the isotonic regression problem to re-label the dataset to be monotonic, then fit a monotonic ensemble of rules to the re-labeled data, requiring zero training error.  They showed overall better performance than the ordinal learning model of \citet{BenDavid:89} and isotonic separation \citep{Chandrasekaran:05}.






\section{Review of Lattice Regression}\label{sec:reviewBasics} 
Before proposing \emph{monotonic} lattice regression, we review lattice regression \citep{Garcia:09,Garcia:12}. Key notation is listed in Table \ref{tab:notation}.

Let  $M_d \in \mathbb{N}$ be a hyperparameter specifying the number of vertices in the look-up table (that is, lattice) for the $d$th feature. Then the lattice is a regular grid of $M  \stackrel{\triangle}{=} M_1 \times M_2 \times \ldots M_D$ parameters (a look-up table) placed at natural numbers so that the lattice spans the hyper-rectangle $\mathcal{M} \stackrel{\triangle}{=} [0, M_1 - 1] \times [0, M_2 - 1] \times \ldots [0, M_D - 1]$. See Figure \ref{fig:exampleLattices} for examples of  $2 \times 2$ lattices, and Figure \ref{fig:howItWorks} for an example $3 \times 2$ lattice.  For machine learning problems we find $M_d = 2$ for all $d$ to be a good default, as detailed in the case studies in Section \ref{sec:experiments}. For image processing applications with only two to four features, much larger values of $M_d$ were needed \citep{Garcia:12}. 

The feature values are assumed to be bounded and linearly scaled to fit the lattice, so that the $d$th feature vector value $x[d]$ lies in $[0, M_d - 1]$. (We propose learning non-linear scalings of features jointly with the lattice parameters in Section \ref{sec:calibration}.)

\begin{table}[hb!]
\begin{center}
\begin{tabular}{ll}
$D$ & number of features \\
$n$ & number of samples \\
$M_d \in \mathbb{N}$ & number of vertices in the lattice along the $d$th feature \\
$M \in \mathbb{N}$ & total number of vertices in the lattice: $M=\prod_d M_d$ \\
$\mathcal{M}$ & hyper-rectangular span of the lattice: $[0, M_1 - 1] \times \ldots \times  [0, M_D - 1]$  \\
$x_i$ & $i$th training sample with $D$ components. Domain depends on section. \\
$y_i \in \mathbb{R}$ & $i$th training sample label \\
$x[d] $ & $d$th component of feature vector $x$ \\ 
$\phi(x) \in [0,1]^{M}$ & linear interpolation weights for $x$ \\
$\theta \in \mathbb{R}^M$ & lattice values (parameters) \\
$v_j \in \{0,1\}^D$ & $j$th vertex of a $2^D$ lattice
\end{tabular}
\end{center}
\caption{Key Notation}
\label{tab:notation}
\end{table}

Lattice regression is a kernel method that maps $x \in \mathcal{M}$ to a transformed feature vector  $\phi(x) \in [0,1]^{M}$.  The values of $\phi(x)$ are the interpolation weights for $x$ for the $2^D$ indices corresponding to the $2^D$ vertices of the hypercube surrounding $x$; for all other indices, $\phi(x) = 0$.

The function $f(x)$ is linear in $\phi(x)$ such that $f(x) = \theta^T \phi(x)$.  That is, the function parameters $\theta$ each correspond to a vertex in the lattice, and $f(x)$ linearly interpolates the $\theta$ for the lattice cell containing $x$.

Before reviewing the lattice regression objective for learning the parameters $\theta$, we review standard multilinear interpolation to define $\phi(x)$.

\subsection{Multilinear Interpolation}\label{sec:multilinearInterpolation}
The familiar bilinear interpolation commonly used to up-sample images is the $D=2$ case of the multilinear interpolation that we review here.   See Figure \ref{fig:howItWorks} for a pictorial explanation. 

For notational simplicity, we assume a $2^D$ lattice such that $x \in [0,1]^D$. For multi-cell lattices, the same math and logic is applied to the lattice cell containing the $x$.  Denote the $k$th component of $\phi(x)$ as $\phi_k(x)$. Let $v_k \in \{0,1\}^D$ be the $k$th vertex of the unit hypercube. The multilinear interpolation weight on the vertex $v_k$ is
\begin{equation}\label{eqn:weight}
\phi_k(x)  = \prod_{d = 0}^{D-1} x[d]^{v_k[d]} (1 - x[d])^{1-v_k[d]}.
\end{equation}
Note the exponents in (\ref{eqn:weight}) are $v_k$ and $1-v_k[d]$, which either equal $0$ and $1$, or  equal $1$ and $0$, so these exponents act like selectors that multiply in either $x[d]$ or $1-x[d]$ for each dimension $d$.  Equivalently, one can write
\begin{equation}
  \phi_k(x) = \prod_{i=0}^{D-1} \left(
    \left(1 - \code{bit}[i,k]\right) \left(1 - x[i] \right) +
  \code{bit}[i,k] x[i] \right),\label{eqn:bitwise}
\end{equation}
where $\code{bit}[i,k] \in \{0,1\}$ denotes the $i$th bit of vertex $v_k$, and can be computed 
$\code{bit}[i,k] = \left(k \gg i\right) \& 1$ using bitwise arithmetic.

The resulting $f(x) = \theta^T \phi(x)$ is a multilinear polynomial over each lattice cell. For example, a $2 \times 2$ lattice interpolated with multilinear interpolation  (\ref{eqn:weight})  produces the function:
\begin{equation}\label{eqn:long}
f(x) = \theta[0] (1 - x[0]) (1 - x[1]) + \theta[1] x[0](1 - x[1]) + \theta[2] (1 - x[0]) x[1]  + \theta[3] x[0] x[1].
\end{equation}
Expanding (\ref{eqn:long}), one sees it is  a different parameterization of the multilinear function given in (\ref{eqn:simplepoly}), where the parameter vectors are related by a linear matrix transform:  $a = T\theta$ for $T \in \mathbb{R}^{4 \times 4}$. But parameterizing with $\theta$ makes the parameters easier to read, and as we show in Section \ref{sec:monotonicity}, makes it easier to learn the complete set of monotonic functions.  

The linear interpolation is applied per lattice cell. At lattice cell boundaries the resulting function is continuous, but not differentiable, and is piecewise polynomial, and hence a spline. It can be equivalently formulated using a linear basis function. Higher-order basis functions like the popular cubic spline will lead to smoother and potentially slightly more accurate functions \citep{Garcia:12}. However, higher-order basis functions destroy the interpretable localized effect of the parameters, and increase the computational complexity. 

The multilinear interpolation weights are just one type of linear interpolation. In general, linear interpolation weights are defined as solutions to the  system of $D+1$ equations:
\begin{equation}
 \sum_{k=0}^{2^D}  \phi_k(x) v_k  = x \textrm{ and } \sum_{k=0}^{2^D} \phi_k(x) = 1. \label{eqn:linInterp}
\end{equation}
This system of equations is under-determined and has many solutions for an $x$ in the convex hull of a lattice cell. The multilinear interpolation weights given in (\ref{eqn:weight}) are the maximum entropy solution to (\ref{eqn:linInterp})  \citep{GuptaGrayOlshen}, and thus have good noise averaging and smoothness properties compared to other solutions.  We discuss a more efficient linear interpolation in Sec. \ref{sec:simplex}.

\subsection{The Lattice Regression Objective}\label{sec:objective}
Consider the standard supervised machine learning set-up of a training set of randomly sampled pairs $\{(x_i, y_i)\}$ pairs, where $x_i \in \mathcal{M}$ and $y_i \in \mathbb{R}$, for $i = 1, \ldots, n$. Historically, people created look-up tables by first fitting a function $h(x)$ to the $\{x_i, y_i\}$ using a regression algorithm such as a  neural net or local linear regression, and then evaluating $h(x)$ on a regular grid to produce the look-up table values \citep{BalaBook}. However, even if they fit the function to minimize empirical risk on the training samples, they did not minimize the \emph{actual} empirical risk because these approaches did not take into account that the produced look-up table would be interpolated at run-time, and this interpolation changes the error on the training samples. 

\citet{Garcia:09} proposed directly optimizing the look-up table parameters $\theta$ to minimize the empirical error between the training labels and the interpolated look-up table:
\begin{align} \label{eqn:latticeRegression}
\arg \min_{\theta \in \mathbb{R}^M} \sum_{i=1}^n \ell (y_i, \theta^T \phi(x_i)) + R(\theta),
\end{align}
where $\ell$ is a loss function such as squared error, $\phi(x_i) \in [0,1]^{M}$ is the vector of linear interpolation weights over the lattice for training sample $x_i$ (detailed in Section \ref{sec:multilinearInterpolation} and Sec. \ref{sec:simplex}), $f(x_i) = \theta^T\phi(x_i)$ is the linear interpolation of $x_i$ from the look-up table parameters $\theta$, and $R(\theta)$ is a regularizer on the lattice parameters. In general, we assume the loss $\ell$ and regularizer $R$ are convex functions of $\theta$ so that solving (\ref{eqn:latticeRegression}) is a convex optimization. Prior work focused on squared error loss, and used graph regularizers $R(\theta)$ of the form $b^TKb$ for some PSD matrix $K$, in which case (\ref{eqn:latticeRegression}) has a closed-form solution which can be computed with sparse matrix inversions \citep{Garcia:09,Garcia:10,Garcia:12}.


\section{Monotonic Lattices}\label{sec:monotonicity}
In this section we propose constraining lattice regression to learn monotonic functions.

\subsection{Monotonicity Constraints For a Lattice}
In general, simply checking whether a nonlinear function is monotonic can be quite difficult (see the related work in Section \ref{sec:relatedMonotonic}). But for a linearly interpolated look-up table, checking for monotonicity is relatively easy: if the lattice values increase in a given direction, then the function increases in that direction. See Figure \ref{fig:exampleLattices} for examples.  Specifically, one must check that $\theta_s > \theta_r$ for each pair of adjacent look-up table parameters $\theta_r$ and $\theta_s$. If all features are specified to be monotonic for a $2^D$ lattice, this results in $D2^{D-1}$ pairwise linear inequality constraints to check. 

These same pairwise linear inequality constraints can be imposed when learning the parameters $\theta$ to ensure a monotonic function is learned.  The following result  establishes these constraints are sufficient and necessary for a $2^D$ lattice to be monotonically increasing in the $d$th feature (the result extends trivially to larger lattices):

\begin{lemma}[Monotonicity Constraints] Let $f(x)= \theta^T\phi(x)$ for $x \in [0,1]^D$ and $\phi(x)$ given in (\ref{eqn:weight}). The partial derivative $\partial f(x)/\partial x[d] >0$ for fixed $d$ and any $x$ iff $\theta_{k'} > \theta_{k}$ for all $k,k'$ such that $v_k[d] =0$, $v_{k'}[d] = 1$ and $v_k[m] = v_{k'}[m]$ for all $m \neq d$. 
\end{lemma}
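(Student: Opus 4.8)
The plan is to differentiate $f$ explicitly and recognize that the resulting derivative is itself a convex combination of the pairwise differences $\theta_{k'}-\theta_k$. Starting from $f(x)=\sum_k \theta_k\phi_k(x)$ with $\phi_k$ as in (\ref{eqn:weight}), I would split each product by isolating dimension $d$: write $\phi_k(x)=x[d]^{v_k[d]}(1-x[d])^{1-v_k[d]}\,\psi_k(x)$, where $\psi_k(x)\stackrel{\triangle}{=}\prod_{m\neq d}x[m]^{v_k[m]}(1-x[m])^{1-v_k[m]}$ collects the factors not involving $x[d]$. Since the exponent $v_k[d]$ is either $0$ or $1$, the $x[d]$-factor is either $(1-x[d])$ or $x[d]$, so $\partial\phi_k/\partial x[d]$ equals $-\psi_k(x)$ when $v_k[d]=0$ and $+\psi_k(x)$ when $v_k[d]=1$. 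Summing over $k$ gives $\partial f/\partial x[d]=\sum_{k:v_k[d]=1}\theta_k\psi_k(x)-\sum_{k:v_k[d]=0}\theta_k\psi_k(x)$.

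The key step is to pair the vertices. Each vertex $k$ with $v_k[d]=0$ has a unique partner $k'$ with $v_{k'}[d]=1$ and $v_{k'}[m]=v_k[m]$ for all $m\neq d$ --- exactly the pairs appearing in the statement --- and because $\psi_k$ depends only on the coordinates $m\neq d$, on which $k$ and $k'$ agree, we have $\psi_k(x)=\psi_{k'}(x)$. Denoting this common value $\psi_{[k]}(x)$, the derivative collapses to
\[
\frac{\partial f(x)}{\partial x[d]}=\sum_{\text{pairs }(k,k')}(\theta_{k'}-\theta_k)\,\psi_{[k]}(x).
\]
I would then observe that the $2^{D-1}$ weights $\{\psi_{[k]}(x)\}$ are precisely the multilinear interpolation weights over the $(D-1)$-dimensional subcube spanned by the coordinates $m\neq d$: they are nonnegative on $[0,1]^D$ and, by the partition-of-unity property in (\ref{eqn:linInterp}), sum to one. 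Hence the derivative is a convex combination of the differences $\theta_{k'}-\theta_k$.

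Both directions then follow from the elementary bound that a convex combination lies between its smallest and largest entries, with the extremes attained at subcube vertices. For the ``if'' direction, assuming every $\theta_{k'}>\theta_k$ gives $\partial f/\partial x[d]\ge\min_{(k,k')}(\theta_{k'}-\theta_k)\cdot\sum_{[k]}\psi_{[k]}(x)=\min_{(k,k')}(\theta_{k'}-\theta_k)>0$ for every $x$. For the ``only if'' direction, I would evaluate the derivative at a vertex of the subcube, i.e.\ fix $x[m]=v_k[m]\in\{0,1\}$ for $m\neq d$; there $\psi_{[j]}(x)=1$ for the matching class $[j]=[k]$ and $0$ otherwise, so $\partial f/\partial x[d]=\theta_{k'}-\theta_k$, and positivity forces $\theta_{k'}>\theta_k$ for that pair. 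Ranging over all pairs yields the full set of inequalities.

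The computation itself is routine; the crux is recognizing the pairing that turns the derivative into a convex combination, after which the result is immediate. The one point needing care is the interplay between the strict inequalities and the boundary of the cube: because $f$ is a genuine multilinear polynomial, its derivative extends to a continuous (indeed polynomial) function on all of $\mathbb{R}^D$, so evaluating it at the corners of $[0,1]^D$ is legitimate and the bounds $\min\le\partial f/\partial x[d]\le\max$ are attained exactly at vertices --- this is what lets the strictness pass cleanly in both directions. The extension to larger $M_d$ is immediate, since the same argument applies cell-by-cell.
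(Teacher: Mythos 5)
Your proof is correct and follows essentially the same route as the paper's: the paper's sufficiency argument performs exactly your pairing, writing $f(x)=\sum_{k,k'}\alpha_k\left(\theta_k(1-x[d])+\theta_{k'}x[d]\right)$ with $\alpha_k$ the shared product of the $m\neq d$ factors, so that $\partial f/\partial x[d]=\sum_{k,k'}\alpha_k(\theta_{k'}-\theta_k)$, which is your convex-combination identity. If anything, yours is slightly tighter: you invoke the partition-of-unity property to get the strict lower bound $\min_{(k,k')}(\theta_{k'}-\theta_k)>0$ (the paper remarks only that $\alpha_k\in[0,1]$, which by itself would not rule out the derivative vanishing), and for necessity you evaluate the derivative at the corners of the $(D-1)$-subcube directly, whereas the paper deduces $\theta_{k'}>\theta_k$ from the function values $f(v_k)=\theta_k$ and $f(v_{k'})=\theta_{k'}$ at adjacent vertices.
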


\begin{proof}
First we show the constraints are necessary to ensure monotonicity. Consider the function values $f(v_k)$ and $f(v_{k'})$ for some adjacent pair of vertices $v_k, v_{k'}$ that differ only in the $d$th feature. For  $f(v_k)$ and $f(v_{k'})$, all of the interpolation weight falls on $\theta_k$ or $\theta_{k'}$ respectively, such that $f(v_k) = \theta_k$ and $f(v_{k'}) = \theta_{k'}$. So $\theta_{k'} > \theta_k$ is necessary for $\partial f(x)/\partial x[d]  > 0$ everywhere.

Next we show the constraints are sufficient to ensure monotonicity. Pair the terms in the interpolation $f(x) =  \theta^T \phi(x)$ corresponding to adjacent parameters $\theta_k, \theta_{k'}$ so that for each $k, k'$ it holds that $v_k[d] = 0, v_{k'}[d] = 1, v_k[m] = v_{k'}[m]$ for $m \neq d$:
\begin{align}
f(x) &= \sum_{k, k'} \theta_k \phi_k(x) + \theta_{k'} \phi_{k'}(x), \textrm{ then expand } \phi_k(x) \textrm{ and } \phi_{k'}(x) \textrm{ using } (\ref{eqn:weight}): \nonumber \\
&= \sum_{k, k'} \alpha_k \left(\theta_k x[d]^{v_k[d]}(1-x[d]^{(1-v_k[d])}) + \theta_{k'} x[d]^{v_{k'}[d]}(1-x[d]^{(1-v_{k'}[d])})\right), \nonumber \\ 
& \textrm{ where } \alpha_k \textrm{ is the product of the $m \neq d$ terms in (\ref{eqn:weight}) that are the same for $k$ and $k'$}, \nonumber \\
&= \sum_{k, k'}  \alpha_k \left(\theta_k (1-x[d]) + \theta_{k'} x[d]\right)  \textrm{ by the definition of $v_k$ and $v_{k'}$.} \label{eqn:hippos} 
\end{align}
The partial derivative of (\ref{eqn:hippos}) is $\frac{\partial f(x)}{\partial x[d]} = \sum_{k, k'} \alpha_k(\theta_{k'} - \theta_k)$. Because each $ \alpha_k \in [0,1]$,  it is sufficient that $\theta_{k'} > \theta_k$ for each $k,k'$ pair to guarantee this partial is positive for all $x$.
\end{proof}

\subsection{Monotonic Lattice Regression Objective}
We relax strict monotonicity to monotonicity by allowing equality in the adjacent parameter constraints (for an example, see the second function from the left in Figure \ref{fig:exampleLattices}). Then the set of pairwise constraints can be expressed as $A \theta \leq 0$ for the appropriate sparse matrix $A$ with one $1$ and $-1$ per row of $A$, and one row per constraint. Each feature can independently be left unconstrained, or constrained to be either monotonically increasing or decreasing by the specificiation of $A$. 

Thus the proposed monotonic lattice regression objective is convex  with linear inequality constraints:
\begin{equation} \label{eqn:latticeRegressionMonotonic}
\arg \min_{\theta} \sum_{i=1}^n \ell (y_i, \theta^T \phi(x_i)) + R(\theta), \textrm{ s.t. } A \theta \leq b.
\end{equation}
Additional linear constraints can be included in $A \theta \leq b$ to also constrain the fitted function in other practical ways, such as  $f(x) \in [0,1]$ or $f(x) \geq 0$.

The approach extends to the standard learning to rank from pairs problem \citep{LiuBook}, where the training data is pairs of samples $x_i^+$ and $x_i^-$ and the goal is to learn a function such that $f(x_i^+) \geq f(x_i^-)$ for as many pairs as possible. For this case, the monotonic lattice regression objective is:
\begin{equation} \label{eqn:latticeRegressionRanking}
\arg \min_{\theta} \sum_{i=1}^n \ell (1, \theta^T \phi(x_i^+) - \theta^T \phi(x_i^-)) + R(\theta),  \textrm{ s.t. } A \theta \leq b.
\end{equation}
The loss functions in (\ref{eqn:latticeRegression}), (\ref{eqn:latticeRegressionMonotonic}) and (\ref{eqn:latticeRegressionRanking}) all have the same form, for example, squared loss $\ell(y,z) = (y -
z)^2$, hinge loss $\ell(y, z) = \max(0, 1 - yz)$, or logistic loss $\ell(y,z) = \log(1 + \exp(y - z))$. 

\section{Faster Linear Interpolation}\label{sec:fasterpussycat}
Interpolating a look-up table has long been considered an efficient way to specify and evaluate a low-dimensional non-linear function \citep{BalaBook,Garcia:12}.  But computing linear interpolation weights with (\ref{eqn:bitwise}) requires $O(D)$ operations for each of the $2^D$ interpolation weights, for a total cost of $O(D2^D)$. In Section \ref{sec:fast}, we show the multilinear interpolation weights of (\ref{eqn:bitwise}) can be computed in $O(2^D)$ operations. Then, in Section \ref{sec:simplex}, we review and analyze a different linear interpolation that we refer to as \emph{simplex} interpolation that takes only $O(D \log D)$ operations.

\subsection{Fast Multilinear Interpolation}\label{sec:fast}
Much of the computation in (\ref{eqn:bitwise}) can be shared between the different weights.  In Algorithm \ref{alg:dynamic-programming} we give a dynamic programming solution that loops $D$ times, where the $d$th loop takes $2^d$ time,  so in total there are $\sum_{d=0}^{D-1} 2^d = O(2^D)$ operations.  

\begin{algorithm*}
\begin{pseudocode}
\codename $\code{CalculateMultilinearInterpolationWeightsAndParameterIndices}(x)$\\
\codeline Initialize $\code{indices}= [0]$, $\code{weights} = [1]$\\
\codeline For $d = 0$ to $D - 1$:   \\
\codeline \>For $k = 0$ to $2^d - 1$:   \\ 
\codeline \>\> Append $s_d + \code{indices}[k]$ to \code{indices} \\
\codeline \>\> Append $x[d]  \times \code{weights}[k]$ to $\code{weights}$ \\
\codeline \>\>Update $\code{weights}[k] = \left(1-( x[d]) \right) \times \code{weights}[k]$\\
\codeline Return $\code{indices}$ and $\code{weights}$
\end{pseudocode}
\caption{Computes the multilinear interpolation weights and corresponding vertex indices for a unit lattice cell $[0,1]^D$ and an $x \in [0,1]^D$. Let the lattice parameters be indexed such that $s_d = 2^d$ is the difference in the indices of the parameters corresponding to any two vertices that are adjacent in the $d$th dimension, for example, for the $2 \times 2$ lattice, order the vertices [0 0], [1 0], [0 1], [1 1] and index the corresponding lattice parameters in that order.}
\label{alg:dynamic-programming}
\end{algorithm*}

The following lemma establishes the correctness of Algorithm \ref{alg:dynamic-programming}.

\begin{lemma}[Fast Multilinear Interpolation]  Under its assumptions, Algorithm \ref{alg:dynamic-programming} returns the indices of the $2^D$ parameters corresponding to the vertices of the lattice cell containing $x$:
\begin{equation}
\code{indices}[k] = \sum_{d=0}^{D-1}  \left( \floor{x[d]} + \code{bit}_{i}(k) \right) s_{d}, \textrm{ for } k = 1, 2, \ldots , 2^D\\
\end{equation}
and the corresponding $2^D$ multilinear interpolation weights given by (\ref{eqn:bitwise}).
\end{lemma}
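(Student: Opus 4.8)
The plan is to establish correctness by induction on the outer-loop index $d$, carrying a loop invariant that completely determines both arrays after each pass. Concretely, I claim that once the iteration with outer variable $d$ has finished, \code{indices} and \code{weights} each have length $2^{d+1}$, and for every $k \in \{0,1,\ldots,2^{d+1}-1\}$,
\begin{align*}
\code{indices}[k] &= \sum_{i=0}^{d} \code{bit}_i(k)\, s_i, \\
\code{weights}[k] &= \prod_{i=0}^{d} x[i]^{\code{bit}_i(k)} \left(1 - x[i]\right)^{1 - \code{bit}_i(k)},
\end{align*}
where $\code{bit}_i(k)$ is the $i$th bit of $k$. The base case is the initialization $\code{indices}=[0]$, $\code{weights}=[1]$, which is the invariant read with the convention $d=-1$ (empty sum and empty product give index $0$ and weight $1$). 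Specializing to $d=D-1$ then recovers exactly the claimed index formula with $\floor{x[d]}=0$ on the unit cell, together with the multilinear weights of (\ref{eqn:weight})--(\ref{eqn:bitwise}).

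For the inductive step I would assume the invariant after dimension $d-1$, so the arrays have length $2^d$ and are indexed by their low $d$ bits, and analyze the single pass that processes dimension $d$. The central bookkeeping fact is that at the start of the inner loop the array holds $2^d$ entries, so the append made on iteration $k$ lands in position $2^d + k$; since $k<2^d$, adding $2^d$ merely sets the $d$th bit while leaving bits $0,\ldots,d-1$ identical to those of $k$. Hence the new entry at $2^d+k$ satisfies $\code{bit}_i(2^d+k)=\code{bit}_i(k)$ for $i<d$ and $\code{bit}_d(2^d+k)=1$. The appended index $s_d + \code{indices}[k]$ therefore equals $\sum_{i=0}^{d}\code{bit}_i(2^d+k)\,s_i$, and the appended weight $x[d]\cdot\code{weights}[k]$ contributes precisely the missing $x[d]^1$ factor, matching the product formula for position $2^d+k$. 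The in-place update $\code{weights}[k]\leftarrow(1-x[d])\,\code{weights}[k]$ supplies the $(1-x[d])^1$ factor for the retained entry $k$, whose index is unchanged and whose $d$th bit is $0$, reproducing the product formula for position $k$.

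The part I expect to require the most care is confirming that the interleaving of reads, appends, and in-place overwrites never corrupts a value. I would verify that on iteration $k$ the algorithm reads $\code{indices}[k]$ and $\code{weights}[k]$ \emph{before} overwriting $\code{weights}[k]$, and that every append writes only to positions $\geq 2^d$, which the inner loop never reads (it reads only $k<2^d$). Since distinct $k$ touch distinct low positions and the appends extend the tail monotonically, each read returns the inductive-hypothesis value of the $(d-1)$-dimensional table, so no entry is consumed after modification. Granting this non-interference, the two halves above combine to give the invariant for $d$, closing the induction. The passage from the unit cell to an arbitrary lattice cell (the $\floor{x[d]}$ term) then follows by initializing \code{indices} to the base index $\sum_d \floor{x[d]}\,s_d$ of the containing cell, which shifts every \code{indices} entry by a constant and leaves the weight recursion untouched.
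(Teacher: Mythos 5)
Your proposal is correct and follows essentially the same route as the paper: an induction on the outer-loop dimension carrying exactly the loop invariant (array length $2^{d+1}$, the bit-indexed formulas for \code{indices} and \code{weights}) that the paper states after the $D'$th iteration. The only difference is one of detail, not substance --- you spell out the position bookkeeping (appends land at $2^d+k$, setting the $d$th bit) and the read/write non-interference that the paper dismisses as ``straightforward to verify,'' and you handle the $\floor{x[d]}$ offset by reduction to the unit cell rather than carrying it through the invariant.
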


\begin{proof}
At the end of the $D'$th iteration over the dimension in Algorithm \ref{alg:dynamic-programming}:
\begin{align*}
\code{size}\left(\code{indices}\right) &= \code{size}\left(\code{weights}\right) = 2^{D'+1} \\
\code{indices}[k] &= \sum_{d=0}^{D'} \left(\floor{x[d]} +  \code{bit}_d(k) \right) s_d \\
\code{weights}[k] &= \prod_{d=0}^{D'} \left( \left( 1 - \code{bit}_{d}(k) \right) \left( 1 - (x[d] - \floor{x_d}) \right) + \code{bit}_{d}(k) (x[d] - \floor{x_d}) \right).
\end{align*}
The above holds for the $D'=1$ case by
  the initialization and inspection of the loop. It is straightforward to
  verify that if the above hold for $D'$, then they also hold for $D'+1$. Then by induction it holds for $D' = D-1$,  as claimed.
\end{proof}

\subsection{Simplex Linear Interpolation}\label{sec:simplex}
For speed, we propose using a more efficient linear interpolation for lattice regression that linearly interpolates each $x$ from only $D+1$ of the $2^D$ surrounding vertices. Many different linear interpolation strategies have been proposed to interpolate look-up tables using only a subset of the $2^D$ vertices (for a review, see \citet{KangBook}). However, with most strategies it is too computationally expensive to determine exactly \emph{which} of the vertices should be used to interpolate each $x$. The wonder of \emph{simplex interpolation} is that it takes only $O(D \log D)$ operations to determine the $D+1$ vertices needed to interpolate any given $x$, and then only $O(D)$ operations to interpolate the identified $D+1$ vertices.  A comparison of simplex and multilinear interpolation is given in Figure \ref{fig:sandm}  for the same look-up table parameters.

Simplex interpolation was proposed in the color management literature by \citet{Kasson:93}, and independently later by \citet{Rovatti:98}.  Simplex interpolation is also known as the \emph{Lovasz extension} in submodular optimization, where it is used to extend a function defined on the vertices of a unit hypercube to be defined on its interior \citep{bachBook}. 

\begin{figure}[t]
\begin{center}
\begin{tabular}{ccl}
\includegraphics[height=.3\textwidth]{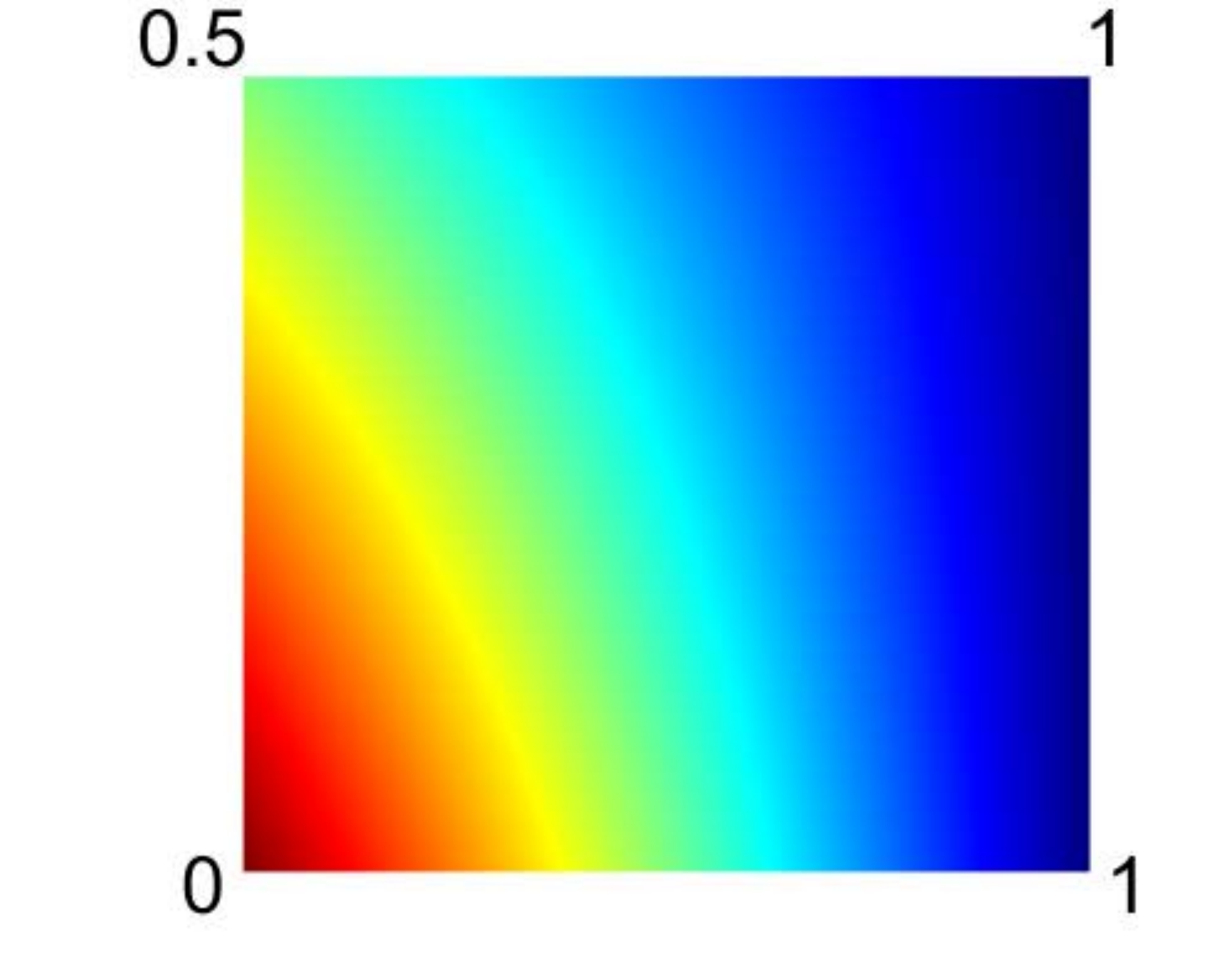} &
\includegraphics[height=.3\textwidth]{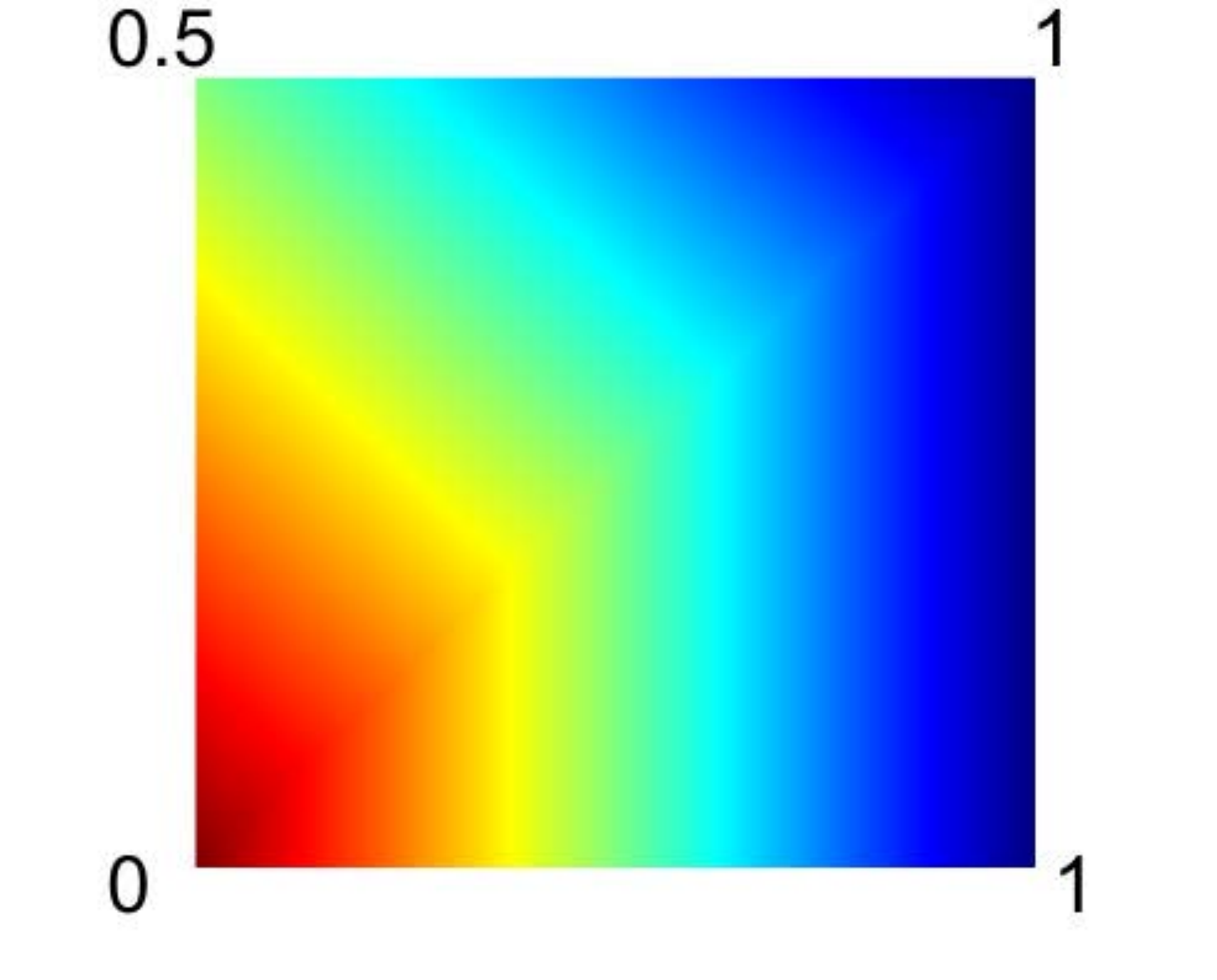} &
\includegraphics[height=.3\textwidth]{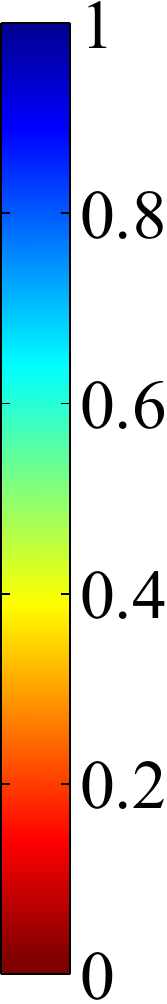} \\
(a) Multilinear Interpolation & (b) Simplex Interpolation &
\end{tabular}
\end{center}
\caption{Both pictures show a $2 \times 2$ look-up table with parameters $0,
0.5, 1$ and $1$. Figure (a) interpolates the look-up table using multilinear
interpolation. Figure (b) interpolates the
look-up table using simplex interpolation, which splits the unit square into two simplices (the upper and lower triangle) and interpolates within each. The function is still continuous because the points along the diagonal are interpolated from only the two corner vertices. Both interpolations produce monotonic functions over both features.}\label{fig:sandm}
\end{figure}

After reviewing how simplex interpolation works, we show in Section \ref{sec:simplexMonotonicity} that it requires the same constraints for monotonicity as multilinear interpolation, and then we discuss how its rotational dependence impacts machine learning in Section \ref{sec:simplexVsMultilinear}. We give example runtime comparisons in Section \ref{sec:runtimes}.

\subsubsection{Partitioning of the Unit Hypercube Into Simplices}\label{sec:simplices}
Simplex interpolation implicitly partitions the hypercube into the set of D! congruent simplices that satisfy the following: each simplex includes the all 0's vertex,  one vertex  is all zeros but has a single 1, one vertex is all zeros but has two 1's, and so on, ending with one vertex that is all 1's, for a total of $D+1$ vertices in each simplex. Figure \ref{fig:simplexFigure} shows the partitioning for the $D=2$  and $D=3$ unit hypercubes.  

This decomposition can also be described by the hyperplanes $x_k = x_r$ for $1 \leq k \leq r \leq D$ \citep{Schmidt:07}. \citet{Knop:73} discussed this decomposition as a special case of Eulerian partitioning of the hypercube, and \citet{Mead:79} showed this is the smallest possible equivolume decomposition of the unit hypercube.

\begin{figure*}[t]
\begin{center}
\begin{tabular}{ m{0.41\textwidth} m{0.41\textwidth} }
\includegraphics[width=.2\textwidth]{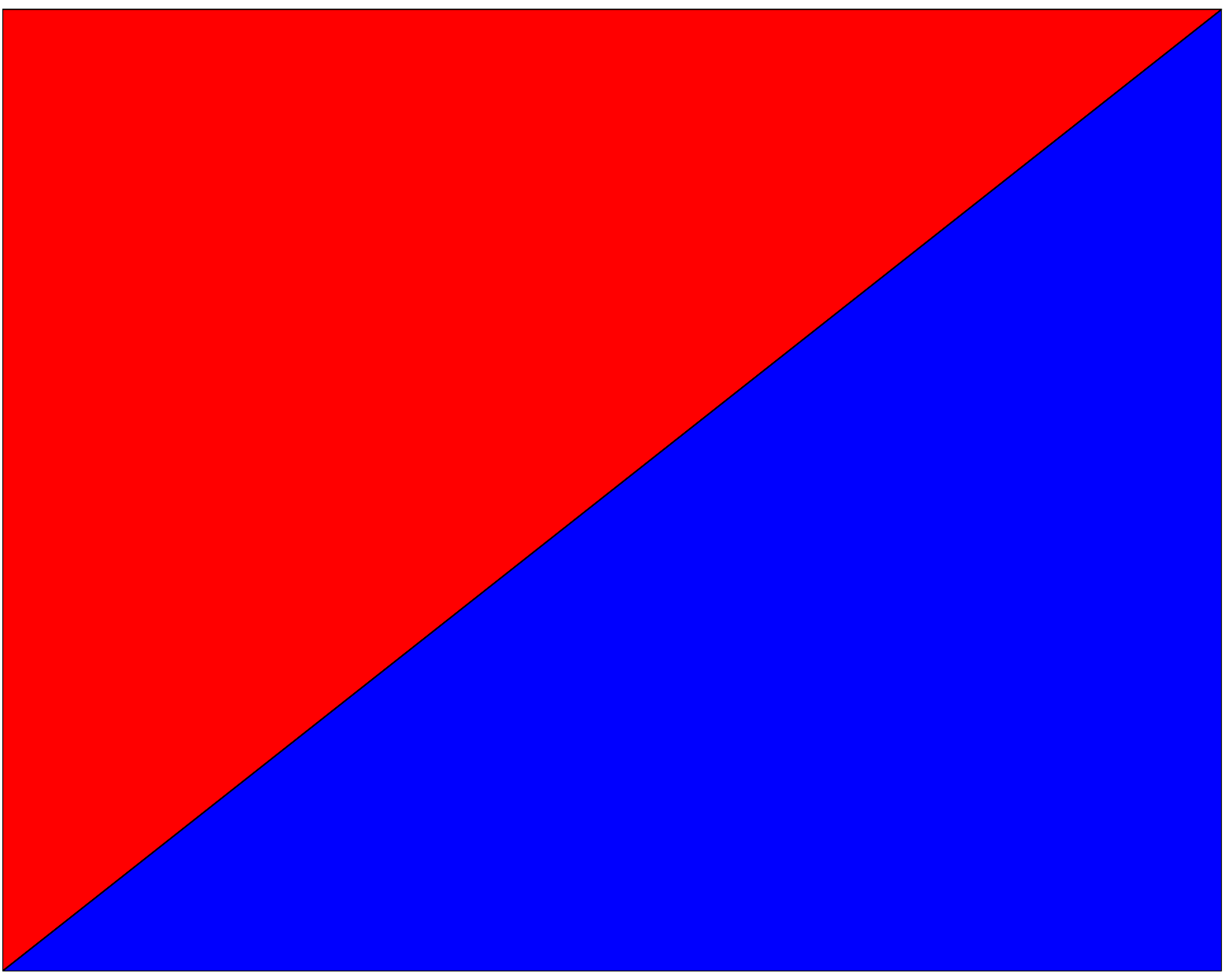} &
\includegraphics[width=.3\textwidth]{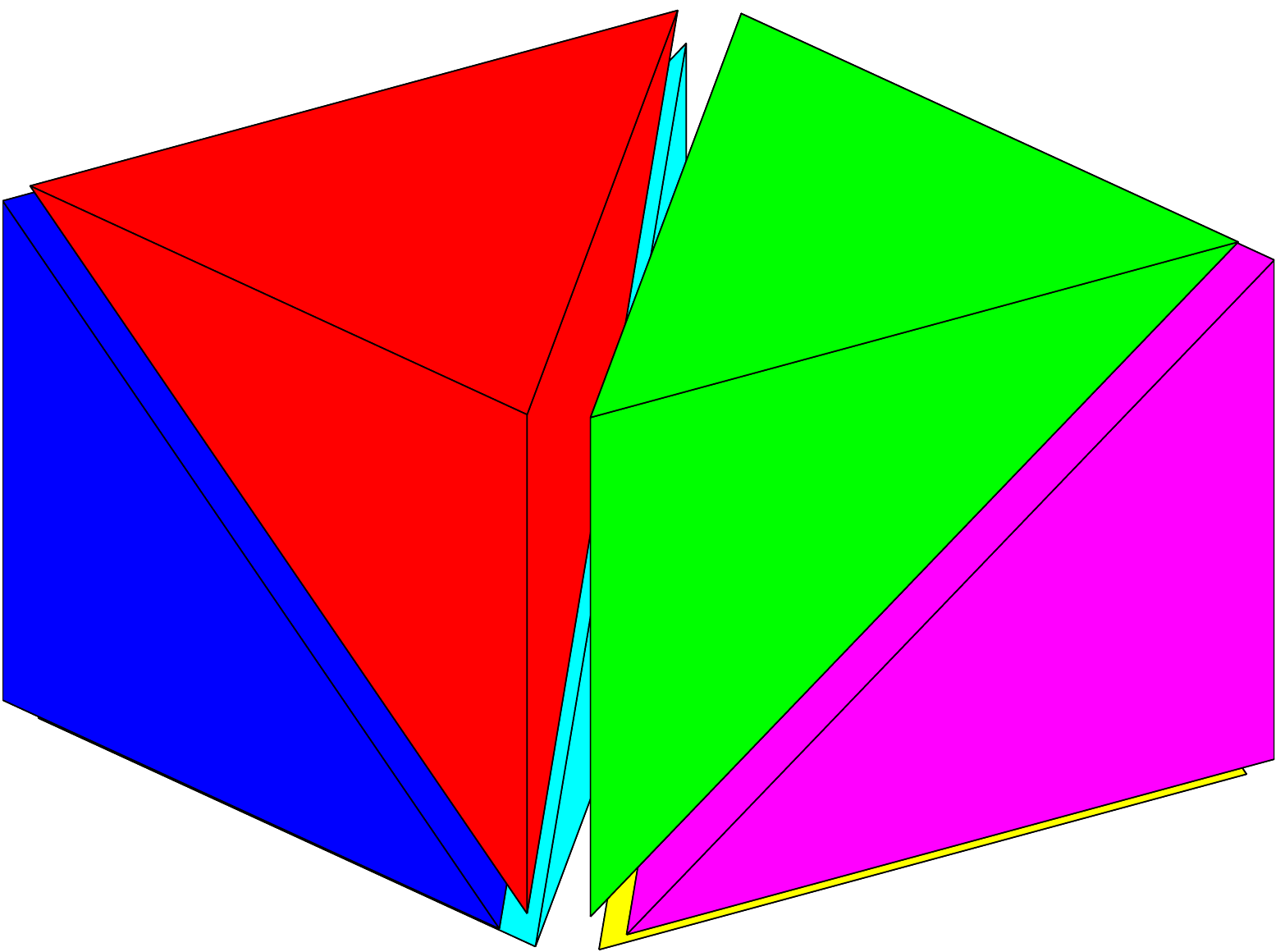}
\end{tabular}
\end{center}
\caption{\textbf{Left:} For the unit square, there are two simplices, one is defined by the three vertices [0 0], [0 1], and [1 1], and the other is defined by the three vertices [0 0], [1 0], and [1 1].  \textbf{Right:}  For the unit cube there are 3! = 6 simplices, each defined by four vertices. The first has vertices: [0 0 0],  [0 0 1],  [0 1 1] , [1 1 1]. The second has vertices: [0 0 0],  [0 0 1],  [1 0 1],  [1 1 1]. And so on. All six simplices have vertices [0 0 0] and [1 1 1], and thus share the diagonal between those two vertices. \\}
\label{fig:simplexFigure}
\end{figure*}

\subsubsection{Simplex Interpolation}\label{sec:simplexInterpolation}
Given $x \in [0,1]^D$, the $D+1$ vertices that specify the simplex that contains $x$ can be computed in $O(D \log D)$ operations by sorting the $D$ values of the feature vector $x$, and then the $d$th simplex vertex has ones in the first $d$ sorted components of $x$. For example, if $x =$[.8  .2  .3],  the $D+1$ vertices of its simplex are [0 0 0], [1 0 0], [1 0 1], [1 1 1]. 

Let $V$ be the $D+1$ by $D$ matrix whose $d$th row is the $d$th vertex of the simplex containing $x$. Then the simplex interpolation weights $\psi(x)$ must satisfy the linear interpolation equations given in (\ref{eqn:linInterp}) such that $\begin{bmatrix} V^T\\ \one^T \end{bmatrix} \psi(x) = \begin{bmatrix} x\\ 1 \end{bmatrix}$. Thus $\psi(x) = \begin{bmatrix} V^T\\ \one^T \end{bmatrix}^{-1} \begin{bmatrix} x\\ 1 \end{bmatrix}$, where because of the highly structured nature of the simplex decomposition the required inverse always exists, and has a simple form such that  $\psi(x)$ is the difference of sequential sorted components of $x$.  For example, for a $2 \times 2$ lattice, and an $x$ such that $x[0] > x[1]$, the simplex interpolation weights $\psi(x)$ are $1 - x[0], x[0] - x[1], x[1]$ on the vertices $[0, 0], [1, 0], [1, 1]$, respectively. The general formula is detailed in Algorithm \ref{alg:simplex}; for more mathematical details see \citet{Rovatti:98}. 



\begin{algorithm*}[ht!]
\begin{pseudocode}
\codename $\code{CalculateSimplexInterpolationWeightsAndParameterIndices}(x)$\\
\codeline Compute the sorted order $\pi$ of the components of $x$ such that $x[\pi[k]]$ is the $k$th largest value of $x$,\\ \codeline \> \> \> that is, $x[\pi[1]]$ is the largest value of $x$, etc.\\
\codeline Initialize $\code{index} = 0$, $\code{indices}[] = [\code{index}]$, $\code{weights}[] = [1]$ \\
\codeline For $d = 1$ to $D$:   \\
\codeline \>  Update $\code{index} = \code{index} + s_{\pi[d]}$ \\
\codeline \>  Append $\code{index}$ to $\code{indices}$ \\
\codeline \>  Update $\code{weights}[d] = \code{weights}[d] - x[\pi[d]]$ \\
\codeline \>  Append $x[\pi[d]]$ to $\code{weights}$\\
\codeline Return $\code{indices}$ and $\code{weights}$
\end{pseudocode}
\caption{Computes the simplex interpolation weights and corresponding vertex indices for a unit lattice cell $[0,1]^D$ and an $x \in [0,1]^D$. Let the lattice parameters be indexed such that $s_d = 2^d$ is the difference in the indices of the parameters corresponding to any two vertices that are adjacent in the $d$th dimension, for example, for the $2 \times 2$ lattice, order the vertices [0 0], [1 0], [0 1], [1 1] and index the corresponding lattice parameters in that order.}
\label{alg:simplex}
\end{algorithm*}

\subsubsection{Simplex Interpolation and Monotonicity}\label{sec:simplexMonotonicity}
We show that the same linear inequality constraints that guarantee monotonicity for multilinear interpolation also guarantee monotonicity with simplex interpolation:

\begin{lemma}[Monotonic Constraints with Simplex Interpolation]
Let $f(x)= \theta^T\phi(x)$ for $\phi(x)$ given in Algorithm \ref{alg:simplex}. The partial derivative $\partial f(x)/\partial x[d] >0$ iff $\theta_{k} > \theta_{k'}$ for all $k,k'$ such that $v_k[d] =0$, $v_{k'}[d] = 1$, and $v_k[m] = v_{k'}[m]$ for all $m \neq d$. 
\end{lemma}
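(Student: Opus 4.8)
The plan is to mirror the two-part structure used in the proof of the multilinear Monotonicity Constraints lemma: first establish necessity by evaluating $f$ at lattice vertices, then establish sufficiency by computing the partial derivative explicitly on each simplex using the closed-form weights produced by Algorithm~\ref{alg:simplex}. The essential new ingredient is that, unlike the multilinear case where $\partial f/\partial x[d]$ is a convex combination $\sum_{k,k'}\alpha_k(\theta_{k'}-\theta_k)$ of all the relevant adjacent-parameter differences, here within any single simplex the derivative collapses to a \emph{single} such difference, determined by the rank of feature $d$ in the sorted order of $x$.

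For necessity, I would fix an adjacent pair $v_k, v_{k'}$ differing only in coordinate $d$ with $v_k[d]=0$ and $v_{k'}[d]=1$, and consider the hypercube edge joining them, along which only $x[d]$ varies from $0$ to $1$ while the other coordinates stay at their binary values. At the two endpoints all interpolation weight concentrates on a single vertex, so $f(v_k)=\theta_k$ and $f(v_{k'})=\theta_{k'}$ exactly as in the multilinear proof; if $\partial f/\partial x[d]>0$ throughout, then $f$ restricted to the edge increases from $\theta_k$ to $\theta_{k'}$, forcing $\theta_{k'}>\theta_k$. This shows each adjacent-pair constraint is necessary.

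For sufficiency, I would fix a simplex, i.e. a sorted order $\pi$ with $x[\pi[1]]\ge\cdots\ge x[\pi[D]]$, and write $f$ using the weights of Algorithm~\ref{alg:simplex}:
\begin{equation*}
f(x) = (1-x[\pi[1]])\,\theta_{v_0} + \sum_{p=1}^{D-1}(x[\pi[p]]-x[\pi[p+1]])\,\theta_{v_p} + x[\pi[D]]\,\theta_{v_D},
\end{equation*}
where $v_p$ is the simplex vertex with ones in the positions $\pi[1],\dots,\pi[p]$. Letting $p$ be the rank of the target feature, so $\pi[p]=d$, the variable $x[d]$ appears in exactly two summands, and differentiating gives $\partial f/\partial x[d] = \theta_{v_p}-\theta_{v_{p-1}}$ (the boundary cases $p=1$ and $p=D$ fall under the same formula with $v_0$ the all-zeros and $v_D$ the all-ones vertex). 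I would then observe that $v_{p-1}$ and $v_p$ differ precisely in coordinate $d$ --- $v_{p-1}[d]=0$, $v_p[d]=1$, and they agree on every $m\ne d$ --- so this difference is governed by one of the hypothesized adjacent-pair constraints and is therefore positive. Since this holds on the interior of every simplex and $f$ is continuous across simplex boundaries (the weights on a shared face depend only on its shared vertices), positivity of the derivative on each piece lifts to monotonicity of $f$ globally.

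The main obstacle is the combinatorial bookkeeping of the sufficiency step: correctly reading off the two surviving derivative terms from the sorted-difference weight formula and verifying that the two vertices indexing them are genuinely an adjacent pair in direction $d$, uniformly over the boundary cases and over all $D!$ simplices. A useful sanity check I would include is that, as the sorted order and the rank $p$ range over all possibilities, the pairs $(v_{p-1},v_p)$ realize every adjacent pair in direction $d$; this both confirms that no constraint is superfluous and re-derives necessity, yielding exactly the same constraint set as the multilinear lemma.
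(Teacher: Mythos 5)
Your proof is correct and shares the paper's high-level skeleton---piecewise linearity over the $D!$ simplices plus continuity across simplex boundaries yields global monotonicity---but it is substantially more complete than the paper's own proof, which runs only a few lines. The paper simply asserts that ``because the parameters are constrained to be increasing, each such linear function is monotonically increasing,'' leaving implicit exactly the step you make explicit: that on the simplex determined by the sorted order $\pi$, the derivative collapses to the single adjacent-pair difference $\partial f/\partial x[d] = \theta_{v_p} - \theta_{v_{p-1}}$, where $p$ is the rank of feature $d$, and that $(v_{p-1}, v_p)$ is genuinely an adjacent pair in direction $d$ (so the hypothesized constraints apply); your closed-form expansion of $f$ in the sorted-difference weights, including the boundary cases $p=1$ and $p=D$, checks out. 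Moreover, the paper gives no argument at all for the ``only if'' direction despite the lemma being stated as an iff; your vertex-evaluation argument inherited from the multilinear lemma, together with your closing observation that every adjacent pair in direction $d$ is realized as some $(v_{p-1},v_p)$ as $\pi$ and $p$ vary, supplies that missing half and simultaneously confirms that no constraint is redundant. One small point worth flagging: as printed, the lemma states $\theta_{k} > \theta_{k'}$ with $v_k[d]=0$ and $v_{k'}[d]=1$, the reverse orientation of the multilinear lemma; this is evidently a typo (the paper's own proof speaks of increasing parameters), and your proof correctly uses the intended orientation $\theta_{k'} > \theta_{k}$.
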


\begin{proof}
Note that Algorithm \ref{alg:simplex} linearly interpolates from $D+1$ vertices at a time, and thus the resulting function is linear over each simplex. Because the parameters are constrained to be increasing, each such linear function is monotonically increasing.  Further, $f(x)$ is continuous for all $x$, because any $x$ on a boundary between simplices only has nonzero interpolation weight on the vertices defining that boundary. In conclusion, the function is piecewise monotonic and continuous, and thus monotonic everywhere.
\end{proof}

\subsubsection{Using Simplex Interpolation for Machine Learning}\label{sec:simplexVsMultilinear}

\begin{figure}[t!]
\begin{center}
\includegraphics[width=.7\textwidth]{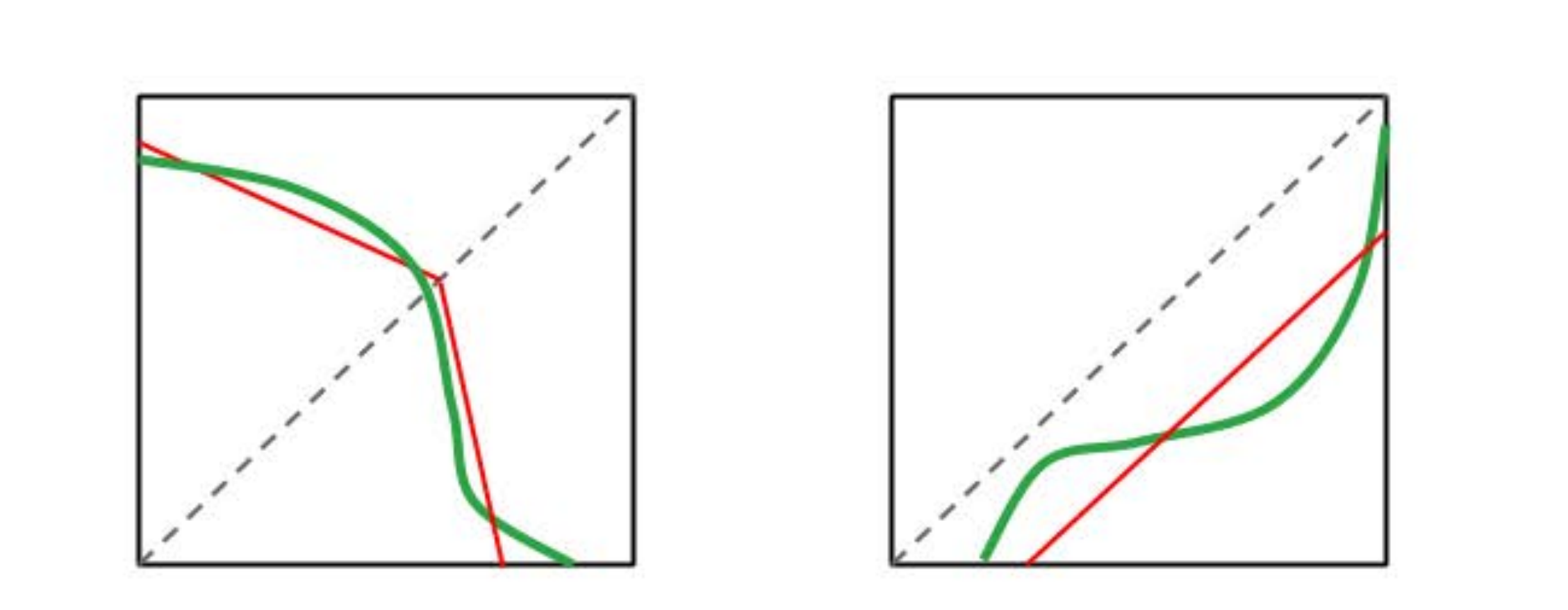}
\end{center}
\caption{Illustrates rotational dependence of simplex interpolation for a $2 \times 2$ lattice and its impact on a binary classification problem. Green thick line denotes the true decision boundary of a binary classification problem. Red thin lines denote the piecewise linear decision boundary fit by lattice regression using simplex interpolation. Dotted gray line separates the two simplices; the function is locally linear over each simplex. \textbf{Left:} The true decision boundary (green) crosses the two simplices. The simplex decision boundary (red) has two linear pieces and can fit the green boundary well. \textbf{Right:}  The same green boundary has been rotated ninety degrees, and now lies entirely in one simplex. The simplex decision boundary (in red) is linear within each simplex, and hence has less flexibility to fit the true green decision boundary.}
\label{fig:simplexBad}
\end{figure}

Simplex interpolation produces a locally linear continuous function made-up of $D!$ hyperplanes defined around the main diagonal axis of the unit hypercube. Compared to multilinear interpolation, simplex interpolation is not as smooth (though continuous), and it is rotationally-dependent. 

For low-dimensional regression problems using a look-up table with many cells, performance of the two interpolation methods has been found to be similar, particularly if one is using a fine-grained lattice with many cells. For example, in a comparison by \citet{SunZhouComparison} for the three-dimensional regression problem of color managing an LCD monitor, multilinear interpolation of a $9 \times 9 \times 9$ look-up table (also called trilinear interpolation in the special case of three-dimensions) produced around $1\%$ worse average error than simplex interpolation, but the maximum error with multilinear interpolation was only $60\%$ of the maximum simplex interpolation error. Another study by \citet{Kang:95} using simulations concluded that the interpolation errors of these methods was ``about the same.''

However, when using a coarser lattice like $2^D$, as we have found useful in practice for machine learning, the rotational dependence of simplex interpolation can cause problems because the flexibility of the interpolated function $f(x)$ differs in different parts of the feature space. Figure \ref{fig:simplexBad} illustrates this for a binary classifier on two features.

To address the rotational dependence, we recommend using prior knowledge to define the features positively or negatively in a way that aligns the simplices' shared diagonal axis along the assumed slope of $f(x)$.  If there are monotonicity constraints, this is done by specifying each feature so that it is monotonically increasing, rather than monotonically decreasing. For binary classification, features should be specified so that the feature vector for the most prototypical example of the negative class is the all-zeros feature vector, and the feature vector for the most prototypical example of a positive class is the all-ones feature vector. This should put the decision boundary as orthogonal to the shared diagonal axis as possible, providing the interpolated function the most flexibility to model that decision boundary.  In addition, for low-dimensional problems, using a finer-grained lattice will produce more flexibility overall, so that the flexibility within each lattice cell is less of an issue. 

Following these guidelines, we surprisingly and consistently find that simplex interpolation of $2^D$ lattices is roughly as accurate as multilinear interpolation, and much faster for $D \geq 8$. This is demonstrated in the case studies of Section \ref{sec:experiments} (runtime comparisons given in Section \ref{sec:runtimes}).

\section{Regularizing the Lattice Regression To Be More Linear}\label{sec:regularizers}
We propose a new regularizer that takes advantage of the lattice structure and encourages the fitted function to be more linear by penalizing differences in parallel edges: 
\begin{equation}\label{eqn:torsion}
R_{\textrm{torsion}}(\theta) = \sum_{d = 1}^D \sum_{\substack{\tilde{d} = 1 \\ \tilde{d} \neq d}}^D \: \: \: \sum_{\substack{r,s,t,u \textrm{ such that}\\ v_r \textrm{ and } v_s \textrm{ adjacent in dimension } d,\\ v_t \textrm{ and } v_u \textrm{ adjacent in dimension } d, \\ v_r \textrm{ and } v_t \textrm{ adjacent in dimension } \tilde{d}}} ((\theta_r - \theta_s)  - (\theta_t - \theta_u))^2.
\end{equation}

This regularizer penalizes how much the lattice function twists from side-to-side, and hence we refer to this as the \emph{torsion} regularizer. The larger the weight on the torsion regularizer in the objective function, the more linear the lattice function will be over each $2^D$ lattice cell.

Figure \ref{fig:regularizerFigure} illustrates the torsion regularizer and compares it to previously proposed lattice regularizers, the standard graph Laplacian \citep{Garcia:09} and graph Hessian  \citep{Garcia:12}.  As shown in the figure, for multi-cell lattices, the torsion and graph Hessian regularizers make the function more linear in different ways, and may both be needed to closely approximate a linear function. Like the graph Laplacian and graph Hessian regularizers, the proposed torsion regularizer is convex but not strictly convex, and can be expressed in quadratic form as $\theta^T K \theta$, where $K$ is a positive semidefinite matrix.

\begin{figure}[h!]
\begin{center}
\includegraphics[width=\textwidth]{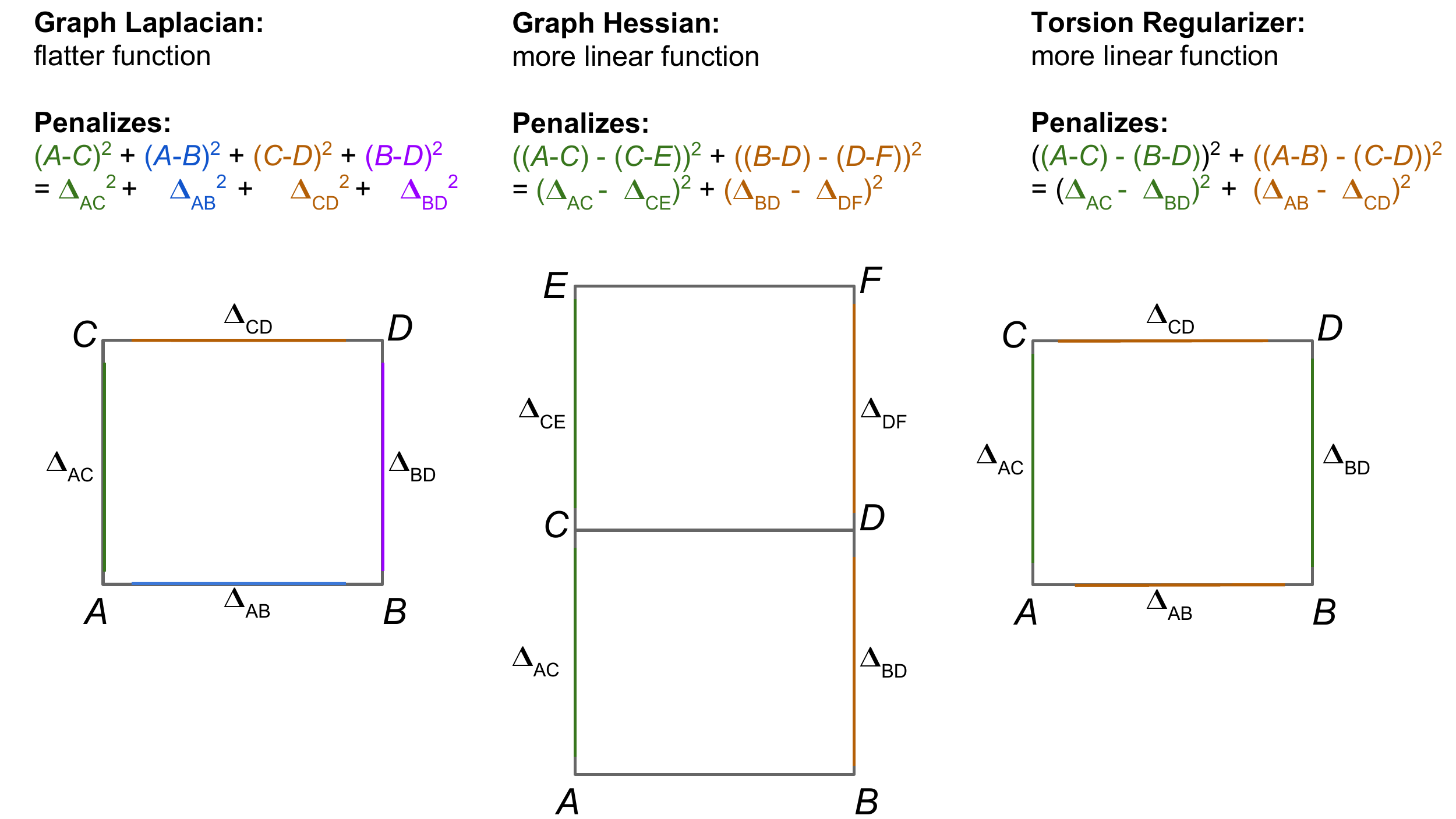}
\end{center}
\caption{Comparison of lattice regularizers. The lattice parameters are denoted by  $A, B, C, D, E$, and $F$.  The deltas indicate the  differences between adjacent parameters along each edge, and thus each delta is the slope of the function along its edge.  Each color corresponds to a different additive term in a regularizer.  The graph Laplacian regularizer \emph{(left)} minimizes the sum of the squared slopes, producing a flatter function.  The graph Hessian regularizer \emph{(middle)} minimizes the change in slope in each direction of a multi-cell lattice, keeping the function from bending too much between lattice cells.  The proposed torsion regularizer \emph{(right)} minimizes the change in slope between sides of the lattice, for each direction, minimizing the twisting of the function.}
\label{fig:regularizerFigure}
\end{figure}

\section{Jointly Learning Feature Calibrations}\label{sec:calibration}

One can learn arbitrary bounded functions with a sufficiently fine-grained lattice, but increasing the number of lattice vertices $M_d$ for the $d$th feature multiplicatively grows the total number of parameters $M = \prod_d M_d$.  However, we find in practice that if the features are first each transformed appropriately, then many problems require only a $2^D$ lattice to capture the feature interactions. For example, a feature that measures distance might be better specified as $\log$ of the distance.  Instead of relying on a user to determine how to best transform each feature, we automate this feature pre-processing by augmenting our function class with $D$ one-dimensional transformations $c_d(x[d])$ that we learn jointly
with the lattice, as shown in Figure \ref{fig:calibrationBlockDiagram}.

\subsection{Calibrating Continuous Features} \label{sec:numericalCalibration}
We calibrate each continuous feature with a one-dimensional monotonic piecewise linear function, as
illustrated in Figure \ref{fig:exampleCalibrations}. Our approach is similar to
the work of \citet{Jebara:07}, which jointly learns monotonic piecewise linear
one-dimensional transformations and a linear function.

This joint estimation makes the objective non-convex, discussed further in
Section \ref{sec:nonconvex}. To simplify estimating the parameters, we treat
the number of changepoints $C_d$ for the $d$th feature as a hyperparameter, and
fix the $C_d$ changepoint locations (also called knots) at equally-spaced quantiles of the feature values.  The changepoint values are then optimized jointly with the lattice parameters,
detailed in Section \ref{sec:nonconvex}.

\begin{figure}[h!]
\begin{center}
\includegraphics[width=\textwidth]{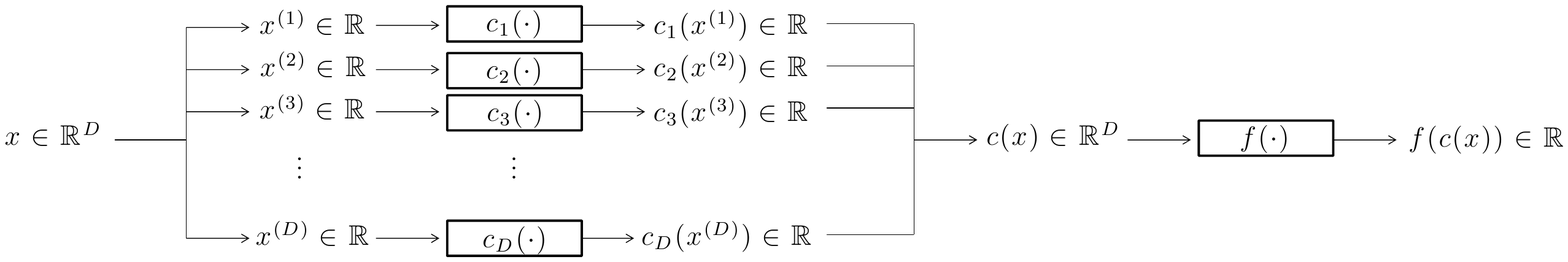}
\end{center}
\caption{Block diagram showing one-dimensional calibration functions
  $\{c_d(\cdot)\}$ applied to each feature before using a lattice $f(\cdot)$ to learn feature-interactions.}
\label{fig:calibrationBlockDiagram}
\end{figure}

\begin{figure}[h]
\begin{center}
\begin{tabular}{cc}
\includegraphics[width=2.0in]{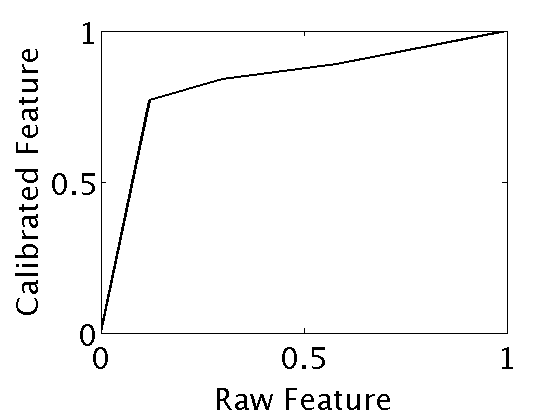} & \includegraphics[width=2.0in]{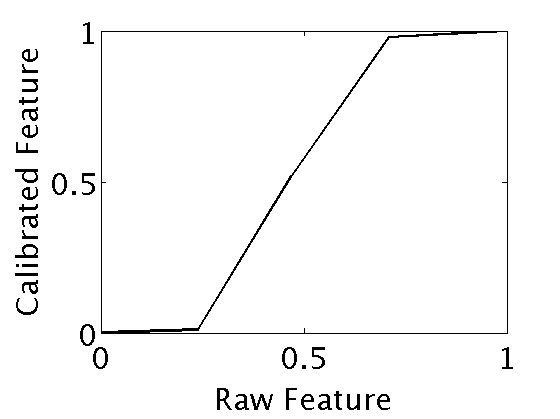} \\
Distance Calibration & Address Similarity Calibration
\end{tabular}
\end{center}
\caption{Learned one-dimensional piecewise linear calibration functions for a distance and
  address-similarity feature for the business-matching case study in Section
  \ref{sec:businessMatching}. \textbf{Left:} The raw distance is measured in
  meters, and its calibration has a log-like effect. \textbf{Right:} The raw address feature is calibrated with a sigmoid-like transformation.}
\label{fig:exampleCalibrations}
\end{figure}

\subsection{Calibrating Categorical Features} \label{sec:categoricalCalibration}
If the $d$th feature is categorical, we propose using a calibration function $c_d(\cdot)$ to map each category to a real value in $[0, M_d - 1]$. That is, let the set of possible categories for the $d$th feature be denoted $\mathcal{G}_d$, then $c_d: \mathcal{G}_d \rightarrow [0,M_d - 1]$. Figure \ref{fig:countryLattice} shows an example lattice with a categorical country feature that has been calibrated to lie on $[0,2]$.  If prior knowledge is given about the ordering of the original discrete values or categories, then partial or full pairwise constraints can be added on the mapped values to respect the known ordering information. These can be expressed as additional sparse linear constraints on pairs of parameters.

\begin{figure}[t]
\begin{center}
\includegraphics[width=4.8in]{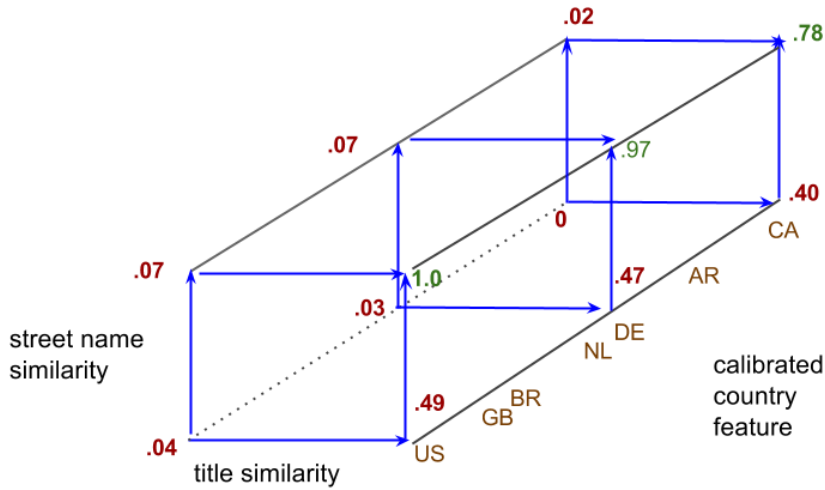}
\end{center}
\caption{
  A $2 \times 2 \times 3$ lattice illustrating calibrating a categorical
  feature. In this example, each sample is a pair of business listings, and
  the problem is to classify whether the two listings are about the same
  business, based on the similarity of their street names,  titles, and the
  country. A  score $f(x)$ is interpolated from the parameters corresponding
  to the vertices of the $2 \times 2 \times 2$ lattice cell in which $x$ lies,
  then thresholded at $0.5$. The red parameter values are below the
  matching threshold of $0.50$, and the green parameters are above the matching
  threshold. The blue arrows denote that the lattice was constrained to be monotonically
  increasing in the street name similarity and the title similarity. In this
  toy example, we only show the calibrated values for a few countries: US maps
  to 0, Great Britain maps to .3, Brazil to .4, Netherlands to .9, Germany to
  1, Argentina to 1.5, and Canada to 2. One can interpret this lattice as
  modeling three classifiers, sliced along the country vertices: a classifier
  for country = 0, one for country = 1, and one for country = 2. Samples from
  Argentina (AR) are interpolated equally from the parameters where country = 1
  and country = 2. Samples from Great Britain, and Netherlands are interpolated
  from the two classifiers specified at country = 0 and 1, with Netherlands
  putting the most weight on the classifier where country = 1.
}
\label{fig:countryLattice}
\end{figure}

\section{Calibrating Missing Data and Using Missing Data Vertices} \label{sec:missing}
We propose two supervised approaches to handle missing values in the training or test set. 

First, one can do a supervised imputation of missing data values by calibrating a missing data value for each feature. This is the same approach proposed for calibrating categorical values in Section \ref{sec:categoricalCalibration}:  learn the numeric value in $[0, M_d - 1]$ to impute if the $d$th feature is missing that minimizes the structural risk minimization obejctive. In this approach, missing data is handled by a calibration function $c_d(\cdot)$, and like the other calibration function parameters. Other researchers have also considered joint training of classifiers and imputations for missing data, for example \citet{vanEsbroeck:2014} and \citet{Carin:2007}.

Second, a more flexible option is to give missing data its own \emph{missing data vertices} in the lattice, as shown in Figure \ref{fig:missingFinal}. This is similar to a decision tree handling a missing data value by splitting a node on whether that feature is missing.  For example, the non-missing feature values can be scaled to $[0, M_d-2]$, and if the data is missing is it mapped to $M_d-1$. This increases the number of parameters but gives the model the flexibility to handle missing data differently than non-missing data.  For example, missing the street number in a business description may correlate with lower quality information for all the features. 

To regularize the lattice parameters corresponding to missing data vertices, we apply the graph regularizers detailed in Section \ref{sec:regularizers}. These could be use to tie any of the parameters to the missing data parameters. In our experiments, for the purposes of graph regularization, we treat the missing data vertices as though they were adjacent to the minimum and maximum vertices of that feature in the lattice.

With either of these two proposed strategies, linear inequalities can be added on the appropriate parameters (the calibrator parameters in the first proposal, or the missing data vertex parameters in the second proposal) to ensure that the function value for missing data is bounded by the minimum and maximum function values, that is, that missing $x[d]$ never produces a smaller $f(x)$ than $x[d] = 0$, nor a larger $f(x)$ than $x[d] = M_d$.

\begin{figure}[t]
\begin{center}
\includegraphics[width=3.5in]{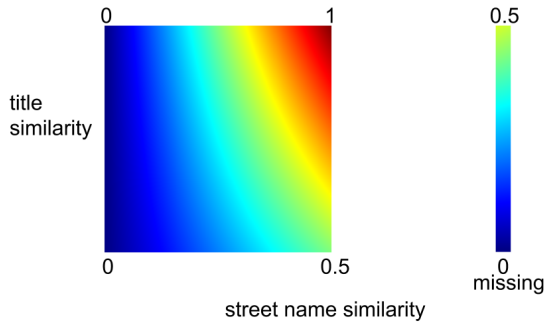}
\end{center}
\caption{Illustration of handling missing data by giving it its own slice of the lattice. In this example, the title similarity and street name similarity values are in $[0,1]^2$, or the street name similarity value is missing. The lattice has $3 \times 2 = 6$ parameters, with the parameter values shown. Given a feature vector where title is $0.5$ and street name similarity is missing, the two parameters corresponding to missing street name similarity would be interpolated to produce the output $f(x) = 0.25$. }
\label{fig:missingFinal}
\end{figure}

\section{Large-Scale Training}\label{sec:bigData}
For convex loss functions $\ell (\theta)$ and convex regularizers $R(\theta)$, any solver for convex problems with linear inequality constraints can be used to optimize the lattice parameters $\theta$ in (\ref{eqn:latticeRegressionMonotonic}).  However, for large $n$ and for even relatively small $D$, training the proposed calibrated monotonic lattices is challenging due to the number of constraints, the number of terms in the graph-regularizers, and the non-convexity created by using calibration functions.

In this section we discuss various standard and new strategies we found useful in practice: our use of stochastic gradient descent (SGD), stochastic handling of the regularizers, parallelizing-and-averaging for distributed training, handling the large number of constraints in the context of SGD, and finally some details on how we optimize the non-convex problem of training the calibrator functions and the lattice parameters.  Throughout this section,  we assume the standard setting of (\ref{eqn:latticeRegressionMonotonic}); the generalization to the pairwise ranking problem of (\ref{eqn:latticeRegressionRanking}) is straightforward. 

\subsection{SGD and Reducing Variance of the Subgradients}
To scale to a large number of samples $n$, we used SGD for all our experiments. For each SGD iteration $t$, a labeled training sample $(x_i, y_i)$
is sampled uniformly from the set of training sample pairs. One finds the corresponding subgradient of (\ref{eqn:latticeRegressionMonotonic}), and takes a tiny step in its negative gradient direction. (The resulting parameters may then violate the constraints, which we discuss in Section \ref{sec:howWeProject}.)

A straightforward SGD implementation for (\ref{eqn:latticeRegressionMonotonic}) would use the subgradient:
\begin{equation}\label{eqn:naiveSGD}
  \Delta = \nabla_{\theta} \ell \left( \theta^T \phi\left( x_i \right), y_i  \right) + \nabla_{\theta} R\left( \theta \right),
\end{equation}
where the $\nabla_{\theta}$ operator finds an arbitrary subgradient of its
argument w.r.t. $\theta$.  Ideally, these subgradients should be cheap-to-compute, so each iteration is fast. The computational cost is dominated by computing the regularizer, if using any of the graph regularizers discussed in Section \ref{sec:regularizers}.  

Because the training example $(x_i,y_i)$ in (\ref{eqn:naiveSGD}) is randomly sampled, the above subgradient is a realization of a stochastic subgradient whose expectation is equal to the true gradient. The number of iterations needed for the SGD to converge depends on the squared Euclidean norms of the stochastic subgradients~\citep{NemirovskiJuLaSh09}, with larger norms resulting
in slower convergence. The expected squared norm of the stochastic subgradient can be decomposed into the sum of two terms: the squared
expected subgradient magnitude, and the variance. We can do little about the expected magnitude, but we can improve the trade-off between the computational cost of each subgradient and the variance of the stochastic subgradients. In the next two sub-sections, we describe two such strategies.

\subsubsection{Mini-Batching} \label{sec:minibatch}
We reduce the variance of the stochastic subgradient's loss term by mini-batching over multiple random samples ~\citep{DekelGiShXi12}.  Let
$\mathcal{S}_{\ell}$ denote a set of $k_{\ell}$ training indices sampled uniformly with replacement from $1, \ldots, n$, then the mini-batched subgradient is:
\begin{equation}
  \Delta = \frac{1}{k_{\ell}} \sum_{i \in \mathcal{S}_{\ell}} \nabla_{\theta}
  \ell\left( \theta^T \phi\left( x_i \right), y_i \right) + \nabla_{\theta}
  R\left( \theta \right).
\end{equation}
This simultaneously reduces the variance and increases the computational cost
of the loss term by a factor of $k_{\ell}$. For sufficiently small $k_{\ell}$, this is a net win because 
differentiating the regularizer is the dominant computational term.

\subsubsection{Stochastic Subgradients for Regularizers}
We propose to reduce the computational cost of each SGD iteration by randomly sampling the additive terms of the regularizer, for regularizers that can be expressed as a sum of terms: $R(\theta) = \sum_{j=1}^m r_j(\theta)$. For example, for a $2^D$ lattice, each calculation of the graph Laplacian regularizer subgradient sums over $m = D 2^{D-1}$ terms, and the graph torsion regularizer subgradient sums over $m = D (D-1) 2^{D-3}$ terms.

 Let $\mathcal{S}_R$ denote a set of $k_R$ indices sampled uniformly with replacement from $1, l\dots,m$, then define the subgradient:
\begin{equation}
 \Delta = \frac{1}{k_{\ell}} \sum_{i \in \mathcal{S}_{\ell}} \nabla_{\theta}
  \ell\left( \theta^T \phi\left( X_i \right), Y_i \right) + \frac{m}{k_R}
  \sum_{j \in \mathcal{S}_R} \nabla_{\theta} r_j\left( \theta \right). \label{eqn:goodSubGradient}
\end{equation}

While this makes the subgradient's regularizer term stochastic, and hence increases the subgradient variance, we find that good choices of $k_{\ell}$ and $k_R$ in (\ref{eqn:goodSubGradient}) can produce a useful tradeoff between the computational cost of computing each subgradient and the number of SGD iterations needed for acceptable converge. For example, in one real-world application using torsion regularization, the choice of $k_R = 1024$ and $k_{\ell} = 1$ led to a 150$\times$ speed-up in training and produced statistically indistinguishable accuracy on a held-out test set. 

\subsection{Parallelizing and Averaging}\label{sec:parallelize}
For a large number of training samples $n$, one can split the $n$ training samples into $K$ sets, then independently and in-parallel train a lattice on each of the $K$ sets. Once trained, the vector lattice parameters for the $K$ lattices can simply be averaged.  This parallelize-and-average approach was investigated for large-scale training of linear models by \citet{MannBigFast}. Their results showed similar accuracies to distributed gradient descent, but $1000\times$ less network traffic and reduced wall-clock time for large datasets. In our implementation of the parallelize-and-average approach we do multiple syncs: averaging the lattices, then sending out the averaged lattice to parallelized workers to keep improving with further training.  We illustrate the performance and speed-up of this simple parallelize-and-average for learning monotonic lattices in Section \ref{sec:youtube} and Section \ref{sec:runtimes}.  A more complicated implementation of this strategy would use the alternating direction method of multipliers with a consensus constraint \citep{BoydADMM:2010}, but that requires an additional regularization towards a local copy of the most recent consensus parameters. 

\subsection{Jointly Optimizing Lattice and Calibration Functions}\label{sec:nonconvex}
To learn a \emph{calibrated} monotonic lattice, we  jointly optimize the calibration functions and the lattice parameters. Let $x$ denote a feature vector with $D$ components, each of which is either a  continuous or categorical value (discrete features can be modeled either as continuous features or categorical as the user sees fit).  Let  $c_d(x[d]; \alpha^{(d)})$ be a calibration function that acts on the $d$th component of $x$ and has parameters $\alpha^{(d)}$.

If the $d$th feature is continuous, we assume it has a bounded domain such that $x[d] \in [l_d, u_d]$ for finite $l_d,u_d \in \mathbb{R}$. Then the $d$th calibration function $c_d(x[d]; \alpha^{(d)})$ is a monotonic piecewise linear transform with fixed knots at $l_d$, $u_d$, and the $C_d-2$ equally-spaced quantiles of $d$th feature over the training set. Let the first and last knots of the piecewise linear function map to the lattice bounds $0$ and $M_d - 1$ respectively (as shown in Figure \ref{fig:exampleCalibrations}), that is, if $C_d = 2$ then $c_d(x[d]; \alpha^{(d)})$ simply linearly scales the raw range $[l_d, u_d]$ to the lattice domain $[0, M_d - 1]$ and there are no parameters $\alpha^{(d)}$.  For $C_d > 2$, the parameters $\alpha^{(d)} \in [0, M_d -1]^{C_d - 2}$ are the $C_d - 2$ output values of the piecewise linear function for the middle $C_d - 2$ knots. 

If the $d$th feature is categorical with finite category set $\mathcal{G}_d$ such that $x[d] \in \mathcal{G}_d$, then the $d$th calibration function maps the categories to the lattice span such that $c_d(x[d]; \alpha^{(d)}): \mathcal{G}_d \rightarrow [0, M_d-1]$ and the parameters are the $| \mathcal{G}_d | $ categorical mappings such that  $c_d(x[d]; \alpha^{(d)}) = \alpha^{(d)}[k]$ if $x[d]$ belongs to category $k$ and $\alpha^{(d)} \in [0, M_d -1]^{|\mathcal{G}_d |}$.

Let $c(x; \alpha)$ denote the vector function with $d$th component function $c_d(x[d]; \alpha^{(d)})$, and note $c(x; \alpha)$ maps a feature vector $x$ to the domain $\mathcal{M}$ of the lattice function. Use $e_d$ to denote the standard unit basis vector that is one for the $d$th component and zero elsewhere with length $D$, then one can write:
\begin{equation}\label{eqn:calibrationFull}
c(x; \alpha) = \sum_{d=1}^D e_d c_d(e_d^T x; \alpha^{(d)}),
\end{equation}

Then the proposed calibrated monotonic lattice regression objective expands the monotonic lattice regression objective  (\ref{eqn:latticeRegressionMonotonic}) to: 

\begin{equation*}\label{eqn:calibratedLatticeRegression} 
\arg \min_{\theta,\alpha} \sum_{i=1}^n \ell (y_i, \theta^T \phi(c(x_i, \alpha)) + R(\theta) \textrm{ s.t. }  A \theta \leq b \textrm{ and } \tilde{A} \alpha \leq \tilde{b}, 
\end{equation*}
where each row of $A$ specifies a monotonicity constraint for a pair of adjacent lattice parameters (as before), and each row of $\tilde{A}$ similarly specifies a monotonicity constraint for a pair of adjacent calibration parameters for one of the piecewise linear calibration functions.

This turns the convex optimization problem (\ref{eqn:latticeRegressionMonotonic}) into a non-convex problem that is
marginally convex in the lattice parameters $\theta$ for fixed $\alpha$, but not necessarily convex with respect to $\alpha$ even if $\theta$ is fixed. Despite the non-convexity of the objective, in our experiments we found sensible and effective solutions by using projected SGD, updating $\theta$ and $\alpha$ with the appropriate stochastic subgradient for each $x_i$. Calculate the subgradient w.r.t. $\theta$ holding $\alpha$ constant, essentially the same as before. Calculate the subgradient w.r.t  $\alpha$ by holding $\theta$ constant and using the chain rule:
\begin{equation}
\frac{\partial \theta^T \phi(c(x_i, \alpha))}{\partial \alpha^{(d)}} = \frac{\partial \theta^T \phi(c(x_i, \alpha))}{\partial c(x_i, \alpha)} \frac{\partial c(x_i, \alpha)}{\partial \alpha^{(d)}}.
\end{equation}

If the $d$th feature is categorical, the partial derivative is $1$ for the calibration mapping parameter corresponding to the category of $x_i[d]$ and zero otherwise: 
\begin{equation}
\frac{\partial c(x_i, \alpha) }{\partial \alpha^{(d)}[k]} = 1 \textrm{ if } x_i[d] \textrm{ is the $k$th category and $0$ otherwise}. 
\end{equation}

If the $d$th feature is continuous,  then the parameters $\alpha^{(j)}[d]$ are the
values of the calibration function at the knots of the piecewise linear function. If $x_i[d]$ lies 
between the $k$th and $(k+1)$th knots at (fixed) positions $\beta_k$ and $\beta_{k+1}$, then
\begin{align*}
\frac{\partial c(x_i, \alpha) }{\partial \alpha^{(d)}[k]} = \frac{(\beta_{k+1} - x_i[d])}{(\beta_{k+1} - \beta_k)} \\
\frac{\partial c(x_i, \alpha) }{\partial \alpha^{(d)}[k + 1]} = \frac{(x_i[d] - \beta_k)}{(\beta_{k+1} - \beta_k)}, 
\end{align*}
and the partial derviative is zero for all other components of $\alpha^{(d)}$. After taking an SGD step that updates $\alpha^{(d)}[k]$ and $\alpha^{(d)}[k + 1]$, the $\alpha^{(d)}$ may violate the monotonicity constraints that ensure a monotonic calibration function, which can be fixed with a projection onto the constraints (see Section \ref{sec:howWeProject} for details).

A standard strategy with nonconvex gradient descent is to try multiple random initializations of the parameters. We did not explore this avenue; instead we simply try to initialize sensibly. Each lattice parameter is initialized to be the sum of its monotonically increasing components (multiply by -1 for any monotonically decreasing components) so that the lattice initialization respects the monotonicity constraints and is a linear function. The piecewise linear calibration functions are initialized to scale linearly to $[0, M_d-1]$. The categorical calibration parameters are ordered by their mean label, then spaced uniformly on $[0, M_d-1]$ in that order.

\subsection{Large-Scale Projection Handling}\label{sec:howWeProject}
Standard projected stochastic gradient descent projects the parameters onto the constraints after each stochastic gradient update. Given the extremely large number of linear inequality constraints needed to enforce monotonicity for even small $D$, we found a full projection each iteration impractical and un-necessary.  We avoid the full projection each iterate by using one of two strategies.

\subsubsection{Suboptimal Projections}
We found that modifying the SGD update to approximate the projection worked well.   Specifically, for each new stochastic subgradient $\eta\Delta$, we create a set of active constraints initialized to $\emptyset$, and, starting from the last parameter values, move along the portion of $\eta\Delta$ that is orthogonal to the current active set until we encounter a constraint, add this constraint to the active set, and then continue until the update $\eta\Delta$ is exhausted or it is not possible to move orthogonal to the current active set. At all times, the parameters satisfy the constraints.  It can be particularly fast because it is possible to exploit the sparsity of the monotonicity constraints (each of which depends on only two parameters) and the sparsity of $\Delta$ (when using simplex interpolation) to optimize the implementation.

But, this strategy is sub-optimal because we do not remove any constraints from the active set during each iteration, and thus parameters can ``get stuck'' at  a corner of the feasible set, as illustrated in Figure \ref{fig:jiggle}. In practice, we found such problems resolve themselves because the stochasticity of the subsequent stochastic gradients eventually jiggles the parameters free. Experimentally, we found this suboptimal strategy to be very effective and to produce statistically similar objective function values and test accuracies more optimal approaches.  All of the experimental results reported in this paper used this strategy. See Section \ref{sec:runtimes} for example runtimes.

\begin{figure}[t!]
\begin{center}
\begin{tabular}{cc}
\includegraphics[width=2.0in]{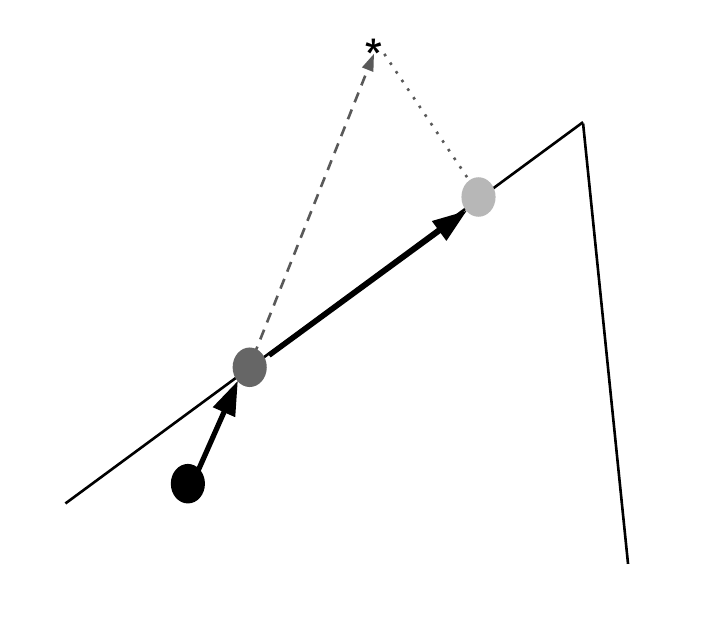} &  \includegraphics[width=2.0in]{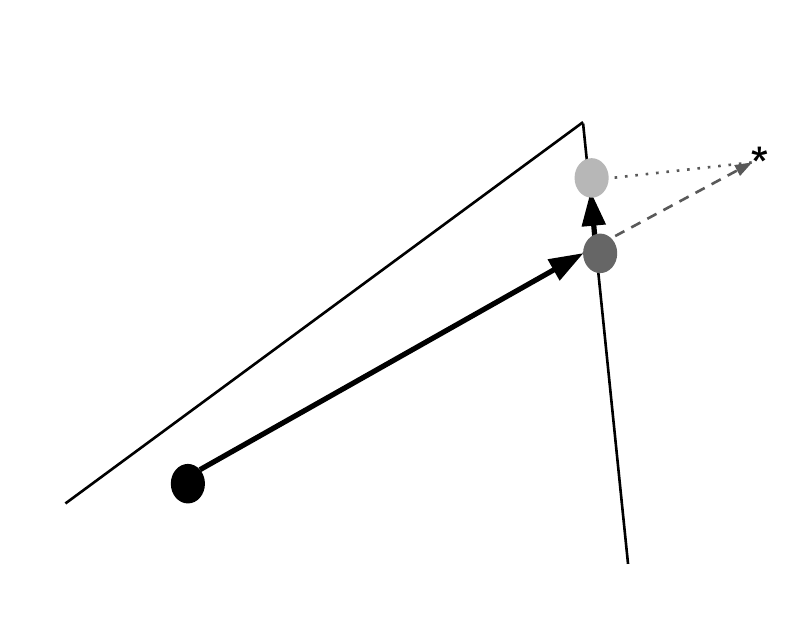} \\
(a) & (b) \\
\includegraphics[width=3.0in]{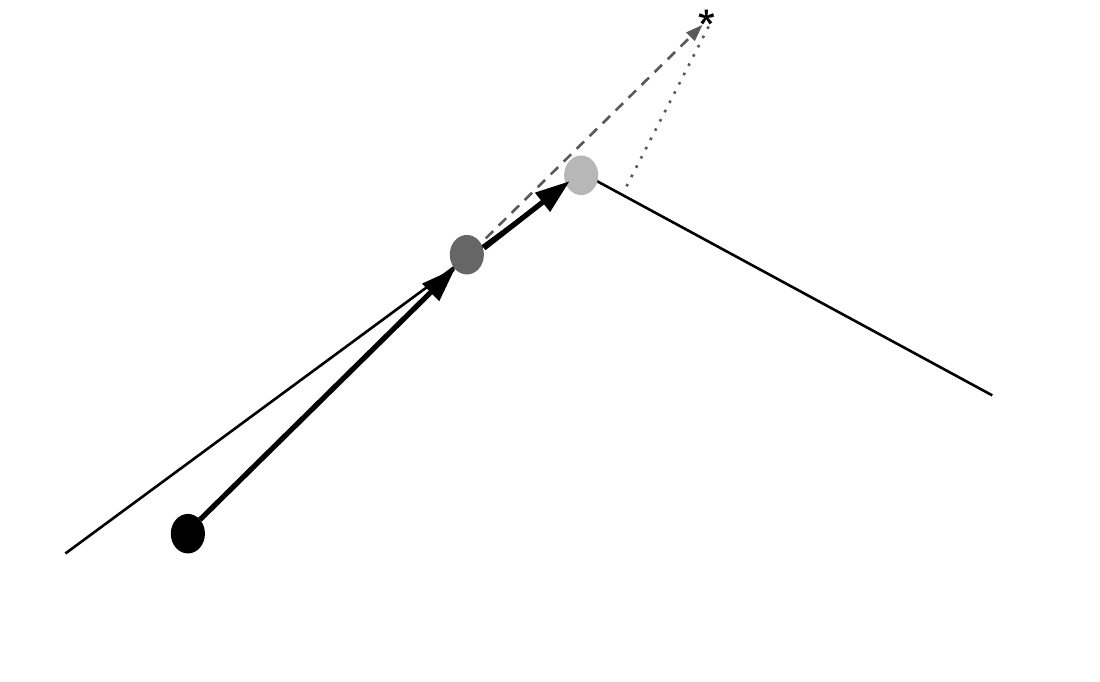} &  \includegraphics[width=3.0in]{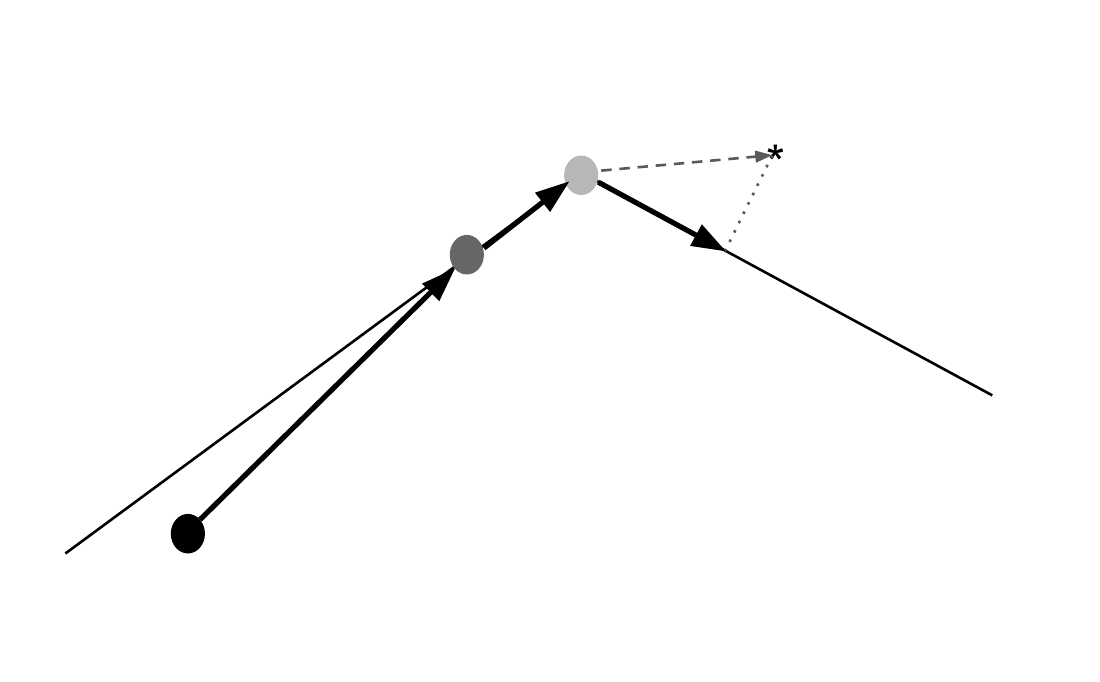} \\
(c) & (d) \\
\end{tabular}
\end{center}
\caption{Four examples of the suboptimal projection stochastic gradient descent step described in Section \ref{sec:howWeProject}. In each case, the constraints are marked by thin solid lines, the black dot represents the parameters at the end of the last SGD iteration, and the new full stochastic gradient descent update is marked by the dashed line, ending in a star.  The optimal projection of the star onto the constraints is marked by the dotted line.   The stochastic gradient is followed until it hits a constraint, and then the component of the remaining gradient orthogonal to the active constraint is applied. The update ends at the light gray dot. In cases (a) and (b), the resulting light dot is the optimal projection of the star onto the constraints.  But in case (c), first one constraint is hit, and then another constraint is hit, and the update gets stuck at the corner of the feasible set without being able to apply all of the stochastic gradient. The resulting light gray dot is \emph{not} the projection of the star onto the constraints, hence the projection for this iteration is suboptimal. However, it is likely that a future stochastic gradient will jiggle the optimization loose, as pictured in (d), producing an update that is again an optimal projection of the latest stochastic gradient.}
\label{fig:jiggle}
\end{figure}

\subsubsection{Stochastic Constraints with LightTouch}
An optimal approach we compared with for handling large-scale constraints is called \emph{LightTouch} \citep{LightTouchArxiv2015}. At each iteration, LightTouch does not project onto any constraints, but rather moves the constraints into the objective, and applies a random subset of constraints each iteration as stochastic gradient updates to the parameters, where the distribution over the constraints is learned as the optimization proceeds to focus on constraints that are more likely to be active. This replaces the per-iteration projections with cheap gradient updates. Intermediate solutions may not satisfy all the constraints, but one full projection is performed at the very end to ensure final satisfaction of the constraints. Experimentally, we found LightTouch to generally lead to faster convergence (see \citet{LightTouchArxiv2015} for its theoretical convergence rate), while producing similar experimental results to the above approximate projected SGD.  LightTouch does require a more complicated implementation to effectively learn the distribution over the constraints. 

\subsubsection{Adapting Stepsizes with Adagrad}
One can generally improve the speed of SGD with adagrad~\citep{Duchi:2011}, even for nonconvex problems \citep{Gupta:2014}. Adagrad decays the step-size adaptively for each parameter, so that parameters updated more often or with larger magnitude gradients have a smaller step size.  We found adagrad did speed up convergence slightly, but required more complicated implementation to correctly handle the constraints, as the projections must be with respect to the adagrad norm rather than the Euclidean norm. We experimented with approximating the adagrad norm projection with the Euclidean projection, but found this approximation resulted in poor convergence. 

\section{Case Studies}\label{sec:experiments}
We present a series of experimental case studies on real world problems to demonstrate different aspects of the proposed methods, followed by some example runtimes for interpolation and training in Section \ref{sec:runtimes}.

Previous datasets used to evaluate monotonic algorithms have been small, both in the number of samples and the number of dimensions, as detailed in Table \ref{tab:relatedWork}. In order to produce statistically significant experimental results, and to better demonstrate the practical need for monotonicity constraints, we use real-world case studies with relatively large datasets, and for which the application engineers have confirmed that they expect or want the learned function to be monotonic with respect to some subset of features.  The datasets used are detailed in Table \ref{tab:datasetSummary}, and include datasets with eight thousand to 400 million samples, and nine to sixteen features, most of which are constrained to be monotonic.

The case studies demonstrate that for problems where the monotonicity assumption is warranted, the proposed calibrated monotonic lattice regression produces similar accuracy to random forests. Random forests is an unconstrained method that consistently provides competitive results on benchmark datasets, compared to many other types of machine learning methods \citep{rfsRock:14}).  

Because any bounded function can be expressed using a sufficiently fine-grained interpolation look-up table, we expect that with appropriate use of regularizers, monotonic lattice regression will perform similarly to other guaranteed monotonic methods that use a flexible function class and are appropriately regularized, such as monotonic neural nets (see \ref{sec:monotonicNeuralNets}). However, of guaranteed monotonic methods, the only monotonic strategy that has been demonstrated to scale to the number of training samples and the number of features treated in our case studies is linear regression with non-negative coefficients (see Table \ref{tab:relatedWork}). 




\subsection{General Experimental Details}
We used ten-fold cross-validation on each training set to choose hyperparameters, including: whether to use graph Laplacian regularization or torsion regularization, how much regularization (in powers of ten), whether to calibrate missing data or use a missing data vertex, the number of change-points if feature calibration was used from the choices: $\{2, 3, 5, 10, 20, 50\}$, and the number of vertices for each feature was started at $2$ and increased by 1 as long as cross-validation accuracy increased.  The step size was tuned using ten-fold cross-validation and choices were powers of 10; it was usually chosen to be one of $\{.01, .1, 1\}$.  If calibration functions were used, a hyperparameter was used to scale the step size for the calibration function gradients compared to the lattice function gradients; this calibration step size scale was also chosen using ten-fold cross-validation and powers of 10, and was usually chosen to be one of $\{.01, .1, 1, 10\}$. Multilinear interpolation was used unless it is noted that simplex interpolation was used. The loss function was squared error, unless noted that logistic loss was used.

Comparisons were made to random forests \citep{Breiman:01}, and to linear models, with either the logistic loss (logistic regression) or squared error loss (linear regression), and a ridge regularizer on the linear coefficients, with any categorical or missing features converted to Boolean features. All comparisons were trained on the same training set, hyperparameters were tuned using cross-validation, and tested on the same test set.  Statistical significance was measured using a binomial statistical significance test with a p-value of .05 on the test samples rated differently by two models.

\begin{table}
\begin{center}
\begin{tabular}{l|rrrrr}
& \# Training  & \# Test  &  & \# Lattice  \\ 
Dataset & Samples & Samples & \# Features  &Parameters \\ 
\hline
Business Matching & 8,000 & 4,000 & 9 & 1728 \\
Ad--Query Matching & 235,996 & 58,224 & 5  &  32 \\
Rendering Classifier & 20,000 & 2,500 & 16 & 65,536 \\
Fusing Pipelines & 1.6 million & 390k & 12 & 24,576 \\
Video Ranking & 400 million & 25 million & 12 & 531,441\\
\end{tabular}
\end{center}
\caption{Summary of datasets used in the case studies.}\label{tab:datasetSummary}
\end{table}

\subsection{Case Study: Business Entity Resolution} \label{sec:businessMatching}
In this case study, we compare the relative impact of several of our proposed extensions to lattice regression.  The business entity resolution problem is to determine if two business descriptions refer to the same real-world business. This problem is also treated by \citet{Dalvi:2014}, where they focus on defining a good title similarity. Here, we consider only the problem of fusing different  similarities (such as a title similarity and phone similarity) into one score that predicts whether a pair of businesses are the same business.  The learned function is required to be monotonically increasing in seven attribute similarities, such as the similarity between the two business titles and the similarity between the street names. There are two other features with no monotonicity constraints, such as the geographic region, which takes on one of 14 categorical values. Each sample is derived from a pair of business descriptions, and a label provided by an expert human rater indicating whether that pair of business descriptions describe the same real-world business. We measure accuracy in terms of whether a predicted label matches the ground truth label, but in actual usage, the learned function is also used to rank multiple matches that pass the decision threshold, and thus a strictly monotonic function is preferred to a piecewise constant function.  The training and test sets, detailed in Table \ref{tab:datasetSummary}, were randomly split from the complete labeled set. Most of the samples were drawn using active sampling, so most of the samples are difficult to classify correctly.

Table \ref{tab:businessMatching} reports results. The linear model performed poorly, because there are many important high-order interactions between the features.  For example, the pair of businesses might describe two pizza places at the same location,  one of which recently closed, and the other recently opened.  In this case, location-based features will be strongly positive, but the classifier must be sensitive to low title similarity to determine the businesses are different. On the other hand, high title similarity is not sufficient to classify the pair as the same, for example, two Starbucks cafes across the street from each other in downtown London. 

The lattice regression model was first optimized using cross-validation, and then we made the series of minor changes (with all else held constant) listed in Table \ref{tab:businessMatching} to illustrate the impact of these changes on accuracy. First, removing the monotonicity constraints resulted in a statistically significant drop in accuracy of half a percent.  Thus it appears the monotonicity constraints are successfully regularizing given the small amount of training data and the known high Bayes error in some parts of the feature space. Lattice regression without the monotonicity constraints performed similarly to random forests (and not statistically significantly better), as expected due to the similar modeling abilities of the methods. 

The cross-validated lattice was $3 \times 3 \times 3 \times 2^6$, where the first three features used a missing data vertex (so the non-missing data is interpolated from a $2^9$ lattice).  Calibrating the missing values for those three features instead of using missing data vertices statistically significantly dropped the accuracy from $81.9\%$ to $80.7\%$. (However, if one subsamples the training set down to 3000 samples, then the less flexible option of calibrating the missing values works better than using missing data vertices.) 

The cross-validated calibration used five changepoints for two of the four continuous features, and no calibration for the two other continuous features. Figure \ref{fig:exampleCalibrations} shows the calibrations learned in the optimized lattice regression.   Removing the continuous signal calibration resulted in a statistically significant drop in accuracy. 

Another important proposal of this paper is calibrating categorical features to real-valued features. For this problem, this is applied to a feature specifying which of 14 possible geographical categories the businesses are in.  Removing this geographic feature statistically significantly reduced the accuracy by half a percent.

The amount of torsion regularization was cross-validated to be $10^{-4}$. Changing to graph Laplacian and re-optimizing the amount of regularization decreased accuracy slightly, but not statistically significantly so. This is consistent with what we often find:  torsion is often slightly better, but often not statistically significantly so, than the graph Laplacian regularizer.

Changing the multilinear interpolation to simplex interpolation (see Section \ref{sec:simplex}) dropped the accuracy slightly, but not statistically significantly.  For some problems we even see simplex interpolation provide slightly better results, but generally the accuracy difference between simplex and multilinear interpolation is negligible. 

\begin{table}
\begin{center}
\begin{tabular}{lrc}
& Test Set Accuracy & Monotonic Guarantee? \\ \hline
Linear Model & 66.6\%  & yes \\
Random Forest & 81.2\% & no \\
Lattice Regression, Optimized  & 81.9\% & yes \\
... Remove Monotonicity Constraints & 81.4\% &  no \\
... Calibrate All Missing Data   & 80.7\% & yes \\
... Remove Calibration & 81.1\% & yes\\
... Remove the Geographic Feature & 81.4\% & yes\\
... Change to Graph Laplacian & 81.7\% & yes \\
... Change to Simplex Interpolation & 81.6\% &  yes \\
\end{tabular}
\end{center}
\caption{Comparison on a business entity resolution problem.} \label{tab:businessMatching}
\end{table}

\subsection{Case Study: Scoring Ad--Query Pairs}
In this case study, we demonstrate the potential of the calibration functions. The goal is to score how well an ad matches a web search query, based on five different features that each measure a different notion of a good match. The score is required to be monotonic with respect to all five features. The labels are binary, so this is trained and tested as a classification problem. The train and test sets were independently and identically distributed, and are detailed in Table \ref{tab:datasetSummary}. 

Results are shown in Table \ref{tab:adQueryResults}. The cross-validated lattice size was $2 \times 2 \times 2 \times 2 \times 2$, and the calibration functions each used 5 changepoints. Removing the calibration functions and re-cross-validating the lattice size resulted in a larger lattice sized  $4 \times 4 \times 4 \times 4 \times 4$, and slightly worse (but not statistically significantly worse) accuracy.  In total, the uncalibrated lattice model used $1024$ parameters, whereas the calibrated lattice model used only $57$ parameters. We hypothesize that the smaller calibrated lattice will be more robust to feature noise and drift in the test sample distribution than the larger uncalibrated lattice model. In general, we find that the one-dimensional calibration functions are a very efficient way to capture the flexibility needed, and that in conjunction with good one-dimensional calibrations, only coarse-grained (e.g. $2^D$) lattices are needed.  

Both with and without calibration functions, the lattice regression models were statistically significantly better than the linear model. The random forest performed well, but was not statistically significantly better than the lattice regression. 

A boosted stumps model was also trained for this problem. See Fig. \ref{fig:StumpsVsLattice}  for a comparison of two-dimensional slices of the boosted stumps and lattice functions. The boosted stumps'  test set accuracy was relatively low at $75.4\%$. In practice, the goal of this problem is to have a score useful for ranking candidates as well as determining if they are a sufficiently good match. Even with many trees, this model produces many ties due its piecewise-constant surface.  In addition, the live experiments with the boosted stumps showed that the output was problematically sensitive to feature noise, which would cause samples near the boundary of two piecewise constant surfaces to experience fluctuating scores. 

\begin{table} 
\begin{center}
\begin{tabular}{lrc}
& Test Set Accuracy & Monotonic Guarantee?  \\ \hline
Linear Model & 77.2\% & yes \\  
Random Forests & 78.8\% & no \\
Lattice Regression & 78.7\% & yes \\ 
... Remove Continuous Signal Calibration & 78.4\%  & yes \\ 
\end{tabular}
\end{center}
\caption{Comparison on an ad-query scoring problem.} \label{tab:adQueryResults}
\end{table}

\begin{figure}[t]
\begin{center}
\begin{tabular}{cc}
\includegraphics[width=1.5in,height=1.5in]{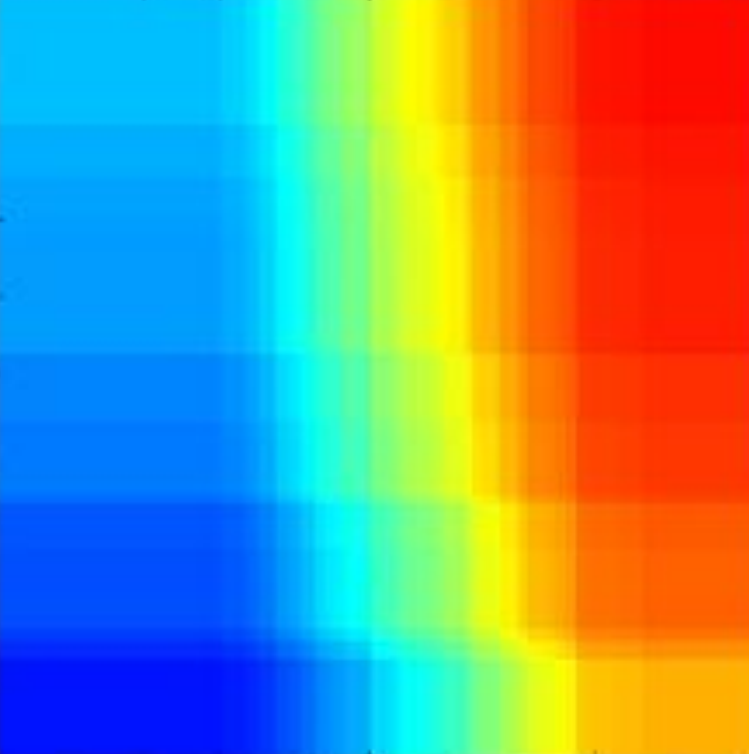} & \includegraphics[width=1.5in,height=1.5in]{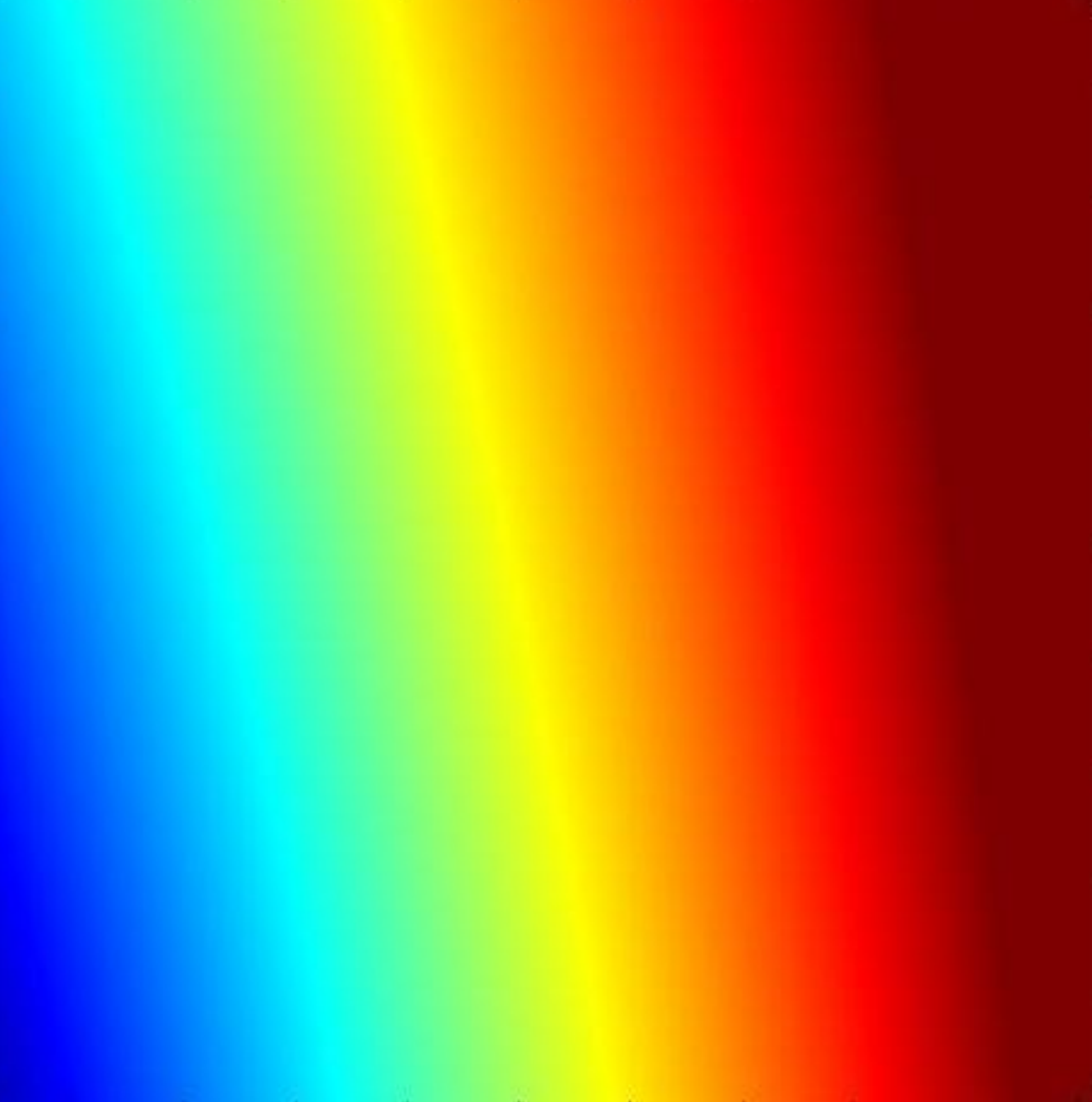}\\
Boosted Stumps & Lattice Regression
\end{tabular}
\end{center}
\caption{Slices of the learned ad-query matching functions for boosted stumps and a $2\times 2 \times 2 \times 2 \times 2$ lattice regression, plotted as a function of two of the five features, with median values chosen for the other three features. The boosted stumps required hundreds of stumps to approximate the function, and the resulting function is piecewise constant, creating frequent ties when ranking a large number of ads for a given query, despite a priori knowledge that the output should be strictly monotonic in each of the features.}
\label{fig:StumpsVsLattice}
\end{figure}

\subsection{Case Study: Rendering Classifier}
This case study demonstrates training a flexible function (using a lattice) that is monotonic with respect to fifteen features. The goal is to score whether a particular display element should be rendered on a webpage. The score is required to be monotonic in fifteen of the features, and there is a sixteenth Boolean feature that is not constrained.  The training and test sets (detailed in Table \ref{tab:datasetSummary}) consisted almost entirely of samples known to be difficult to correctly classify (hence the rather low accuracies).

We used a fixed $2^{16}$ lattice size, a fixed $5$ changepoints per feature for the six continuous signals (the other ten signals were Boolean), and no graph regularization, so no hyperparameters were optimized for this case study.  Simplex interpolation was used for speed.  A single training loop through the 20,000 training samples took around five minutes on a Xeon-type Intel desktop using a single-threaded C++ implementation with sparse vectors, with the training time dominated by the constraint handling.  Training in total took around five hours. 

Results in Table \ref{tab:renderingResults} show substantial gains over the linear model, while still producing a monotonic, smooth function. The lattice regression was also statistically significantly better than random forests, we hypothesize due to the regularization provided by the monotonicity constraints which is important in this case due to the difficulty of the problem on the given examples and the relatively small number of training samples.

\begin{table}
\begin{center}
\begin{tabular}{lrc}
& Test Set Accuracy & Monotonic Guarantee?  \\ \hline
Linear Model & 54.6\% & yes \\
Random Forest & 61.3\% & no \\
Lattice Regression & 63.0\% & yes \\
\end{tabular}
\end{center}
\caption{Comparison on a rendering classifier.} \label{tab:renderingResults}
\end{table}

\subsection{Case Study: Fusing Pipelines}
While this paper focuses on learning monotonic functions, we believe it is also the first paper to propose applying lattice regression to classification problems, rather than only regression problems. With that in mind, we include this case study demonstrating that lattice regression \emph{without constraints} also performs similarly to random forests on a real-world large-scale multi-class problem. 

The goal in this case study is to fuse the predictions from two pipelines, each of which makes a prediction about the likelihood of seven user categories based on a different set of high-dimensional features.  Because each pipeline's probability estimates sum to one, only the first six probability estimates from each pipeline are needed as features to the fusion, for a total of twelve features. The training and test set were split by time, with the older 1.6 million samples used for training, and the newest 390,000 samples used as a  test set. 

The lattice was trained with a multi-class logistic loss, and used simplex interpolation for speed. The cross-validated model was a $2^{12}$ lattice for six of the output classes (with the probability of the seventh class being subtracted from one) and no calibration functions, resulting in a total of $2^{12} \times 6 = 24,576$ parameters.  

The results are reported in Table \ref{tab:ageResults}. Even though Pipeline 2 alone is $6.5\%$ more accurate than Pipeline 1 alone, the test set accuracy can be increased by fusing the estimates from both pipelines, with a small  improvement in accuracy by lattice regression over the random forest classifier, logistic regression, or simply averaging the two pipeline estimates.  

\begin{table}
\begin{center}
\begin{tabular}{lrc}
& Test Set Accuracy Gain\\
&  on top of Pipeline 1 Accuracy  \\ \hline
Pipeline 2 Only & 6.5\% \\
Average the Two Pipeline Estimates & 7.4\%  \\
Fuse with Linear Model & 8.5\%  \\
Fuse with Random Forest & 9.3\% \\
Fuse with Lattice Regression & 9.7\%   \\
\end{tabular}
\end{center}
\caption{Comparison on fusing user category prediction pipelines.} \label{tab:ageResults}
\end{table}

\subsection{Case Study: Video Ranking and Large-Scale Learning}\label{sec:youtube}
This case study demonstrates large-scale training of a large monotonic lattice and learning from ranked pairs.  The goal is to learn a function to rank videos a user might like to watch, based on the video they have just watched.  Experiments  were performed on anonymized data from YouTube. 

Each feature vector $x_i$ is a vector of features about a pair of videos, $x_i = h(v_j, v_k)$, where $v_j$ is the watched video, $v_k$ is a candidate video to watch next, and $h$ is a function that takes a pair of videos and outputs a twelve-dimensional feature vector $x_i$. For example, a feature might be the number of times that video $v_j$ and video $v_k$ were watched in the same session.

Each of the twelve features was specified to be positively correlated with users viewing preference, and thus we constrained the model to be monotonically increasing with respect to each.  Of course, human preference is complicated and these monotonicity constraints cannot fully model human judgement.  For example, knowing that a video that has been watched many times is generally a very good indicator that it is good to suggest, and yet a very popular video at some point will flare out and become less popular.

Monotonicity constraints can also be useful to enforce secondary objectives.  For example, all other features equal, one might prefer to serve fresher videos. While users in the long-run want to see fresh videos, they may preferentially click on familiar videos, thus click data may not capture this desire.  This secondary goal can be enforced by constraining the learned function to be monotonic in a feature that measures video freshness. This achieves a multi-objective function without overly-complicating or distorting the training label definition.

There are billions of videos in YouTube, and thus many many pairs of watched-and-candidate videos to score and re-score as the underlying feature values change over time. Thus it is important the learned ranking functions to be cheap to evaluate, and so we use simplex interpolation for its evaluation speed; see Section \ref{sec:runtimes} for comparison of evaluation speeds.


We trained to minimize the ranked pairs objective from (\ref{eqn:latticeRegressionRanking}), such that the learned function $f$ is trained for the goal of minimizing pairwise ranking errors,
\begin{equation*}
f(h(v_j, v_k^+)) > f(h(v_j, v_k^-)),
\end{equation*}
for each training event consisting of a watched video $v_j$, and a pair of candidate videos $v_k^+$ and $v_k^-$ where there is information that a user who has just watched video $v_j$ prefers to watch $v_k^+$ next  over $v_k^-$.

\subsubsection{Which Pairs of Candidate Videos?}
A key question is which sample pairs of candidate videos  $v_k^+$ and $v_k^-$ should be used as the preferred and unpreferred videos for a given watched video $v_j$. We used anonymized click data from YouTube's current video-suggestion system. For each watched video $v_j$, if a user clicked a suggested video in the second position or below, then we took the clicked video as the preferred video $v_k^+$, and the video suggested right above the clicked video as the unpreferred video $v_j^-$.  We call this choice of  $v_k^+$ and $v_k^-$ a \emph{bottom-clicked pair}. This choice is consistent with the findings of \citet{Joachims:2005}, whose eye-tracking experiments on webpage search results showed that users on average look at least at one result above the clicked result, and that these pairs of preferred/unpreferred samples correlated strongly with explicit relevance judgements. Also, using bottom-clicked pairs removes the \emph{trust bias} that users know they are being presented with a ranked list and prefer samples that are ranked-higher \citep{Joachims:2005}. In a set of preliminary experiments, we also tried training using either a randomly sampled video as $v_k^-$, or the video just after the clicked video, and then tested on bottom-clicked pairs. Those results showed test accuracy on bottom-clicked pairs was up to $1\%$ more accurate if the training set only included the bottom-clicked pairs, even though that meant fewer training pairs.

An additional goal (and one that is common in commercial large-scale machine learning systems for various practical reasons) is for the learned ranking function to be as similar to the current ranking function as possible.  That is, we wish to minimize changes to the current scoring if they do not improve accuracy; such accuracy-neutral changes are referred to as \emph{churn}. To reduce churn, we added in additional pairs that reflect the decisions of the current ranking function. Each of these pairs also takes the clicked video as the preferred $v_k^+$, but sets the unpreferred video $v_k^-$ to be the video that the current system ranked ten candidates lower than the clicked video.  The dataset is a 50-50 mix of these churn-reducing pairs and bottom-clicked pairs.

\subsubsection{More Experimental Details}
The dataset was randomly split into mutually exclusive training, test, and validation sets of size 400 million, 25 million, and 25 million pairs, respectively. To ensure privacy, the dataset only contained the feature vector, and no information identifying the video or user. The disadvantage of that is the train, test and validation sets are likely to have some samples from the same videos and same users.  However, in total the datasets capture millions of unique users and unique watched videos.

We used a fixed $3^{12}$ lattice, for a total of 531,441 parameters. The pre-processing functions were fixed in this case, so no calibration functions were learned. We compared training on  increasingly-larger randomly-sampled subsets of the 400 million training set (see Figure \ref{fig:youtube} for training set sizes).   We compared training on a single worker to the parallelize-and-average strategy explained in Section \ref{sec:parallelize}. Parallel results were parallelized over 100 workers.  The stepsize was chosen independently for each training set based on accuracy on the validation set. 

We report results with and without monotonicity constraints. For the unconstrained results, each training (single or parallel) touched each sample in the training set once. For the  monotonic results (single or parallelized), each sample was touched ten times, and minibatching was used with a minibatch size of 32 stochastic gradients. Logistic loss was used. 

\subsubsection{Results}
Figure \ref{fig:youtube} compares test set accuracy for single and parallelized training for different amounts of training data, with and without monotonicity constraints.  For each dataset, the single and parallel training saw the same total number of training samples and were allowed the same total number of stochastic gradient updates.

On the click data test set, not using monotonicity constraints (the dark lines) is about  $.5\%$ better at pairwise accuracy than if we constrain the function to be monotonic. However, in live experiments that required ranking all videos (not only ones that had been top-ranked in the past, and hence possibly clicked on), models trained with monotonicity constraints showed better performance on the actual measures of user-engagement (as opposed to the training metric of pairwise accuracy). This discrepancy appears to be due to the biased sampling of the click data we train (and test on offline), as the click-data has a biased distribution over the feature space compared to the distribution of all videos which must get ranked in reality.  The biased distribution of the click data appears to cause parameters in sparser regions of the feature space to be non-monotonic in an effort to increase the flexibility (and accuracy) of the function in the denser regions, thus increasing the accuracy on the click data. Enforcing monotonicity helps address this sampling bias problem by not allowing the training to ignore the accuracy in sparser regions that are important in practice to accurately rank all videos.

Even though there are 500k parameters to train, the click-data accuracy is already very good with only 500k training samples, and test accuracy increases only slightly when trained on 400 million samples compared to 10 million samples. This is largely because the click-data samples are densely clustered in the feature space, and with simplex interpolation, only a small fraction of the 500k parameters control the function over the dense part of the feature space. 

The darker lines of Figure \ref{fig:youtube} show the parallelization versus single-machine results \emph{without} monotonicity constraints.  Unconstrained, the parallelized runs appear to perform slightly better to the single-machine training given the same number of training samples (and the same total number of gradient updates). We hypothesize this slight improvement is due to some noise-averaging across the 100 parallelized trained lattices. 

\begin{figure}[ht!]
\begin{center}
\begin{tabular}{l}
\includegraphics[width=1.05\textwidth]{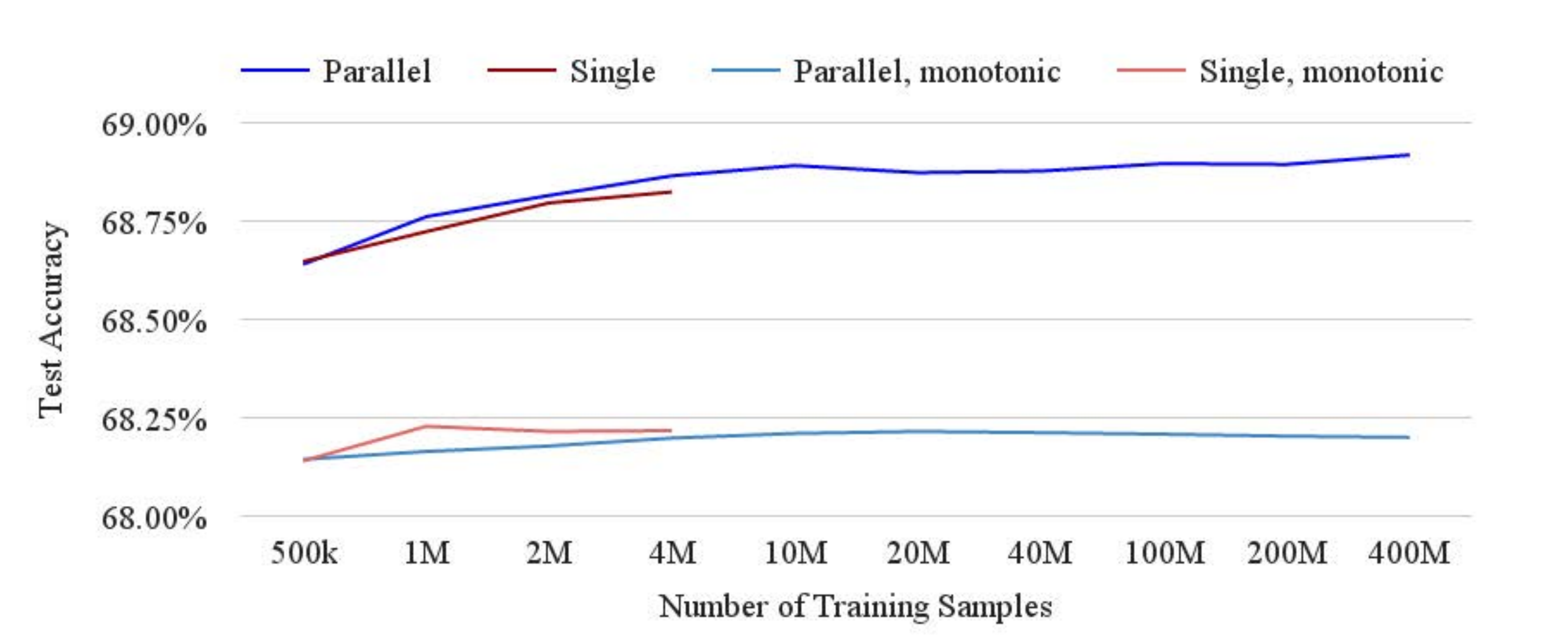} 
\end{tabular}
\end{center}
\caption{Comparison of training with a single worker versus 100 workers in parallel, as a function of training set size. } \label{fig:youtube}
\end{figure}

The lighter lines of Figure \ref{fig:youtube}  show the parallelization versus single-machine results \emph{with} monotonicity constraints.  Trained on 500k pairs, the parallelized training and single-machine monotonic  training produce the same test accuracy. However, as the training set size increases, the parallelized training takes more data to achieve the same accuracy as the single-machine training. We believe this is because averaging the 100 monotonic lattices is a convex combination of lattices likely on the edge of the monotonicity constraint set, producing an average lattice in the interior of the constraint set, that is, the averaged lattice is over-constrained. 

\subsection{Run Times} \label{sec:runtimes}
We give some timing examples for the different interpolations and for training. 

Figure \ref{fig:evalTime} shows average evaluation times for multilinear and simplex interpolation of one sample from a $2^D$ lattice for $D = 4$ to $D = 20$ using a single-threaded 3.5GHz Intel Ivy Bridge processor. Note the multilinear evaluation times are reported on a log-scale, and on a log scale the evaluation time increases roughly linearly in $D$, matching the theoretical $O(2^D)$ complexity given in Section \ref{sec:fast}.  The simplex evaluation times scale roughly linearly with $D$, consistent with the theoretical $O(D \log D)$ complexity. For $D=6$ features, simplex interpolation is already three times faster than multilinear. With $D=20$ features, the simplex interpolation is still only 750 nanoseconds, but the multilinear interpolation is about $15,000$ times slower, at around 12 milliseconds.

Training times are difficult to report in an accurate or meaningful way due to the high-variance of running on a large, shared, distributed cluster. Here is one example: with every feature constrained to be monotonic, a single worker training one loop of a $2^{12}$ lattice on 4 million samples usually takes around 15 minutes, whereas with 100 parallelized workers one loop through 400 million samples (4 million samples for each worker) usually takes around 20 minutes. Large step-sizes can take much longer than smaller stepsizes, because larger updates tend to violate more monotonicity constraints and thus require more expensive projections.  Minibatching is particularly effective at speeding up training because the averaged batch of stochastic gradients reduces the number of monotonicity violations and the need for projections. Without monotonicity constraints, training is generally $10 \times$ to $1000 \times$ faster, depending on how non-monotonic the data is. 

\begin{figure}[h!]
\begin{center}
\begin{tabular}{cc}
\includegraphics[width=.5\textwidth]{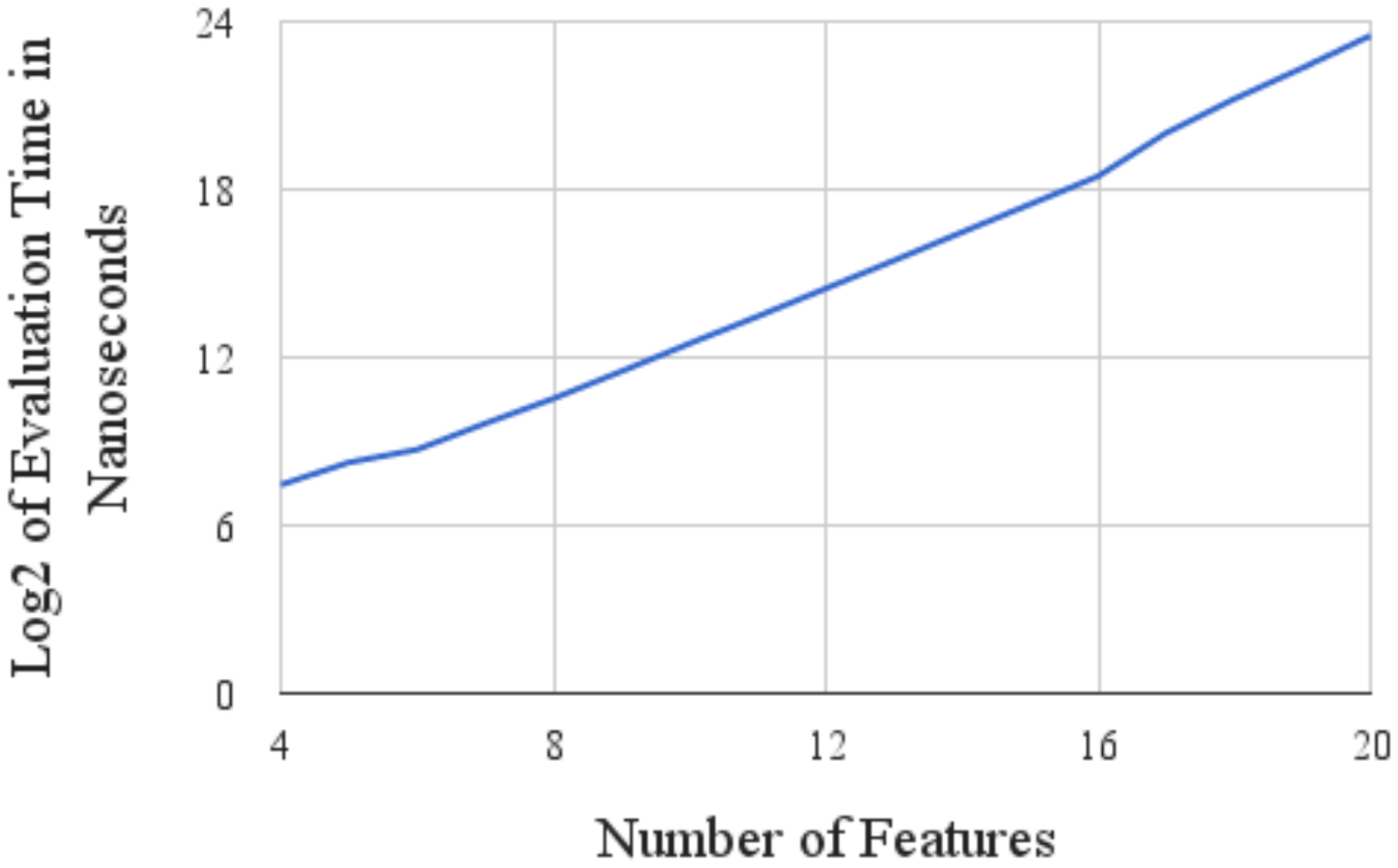} & \includegraphics[width=.5\textwidth]{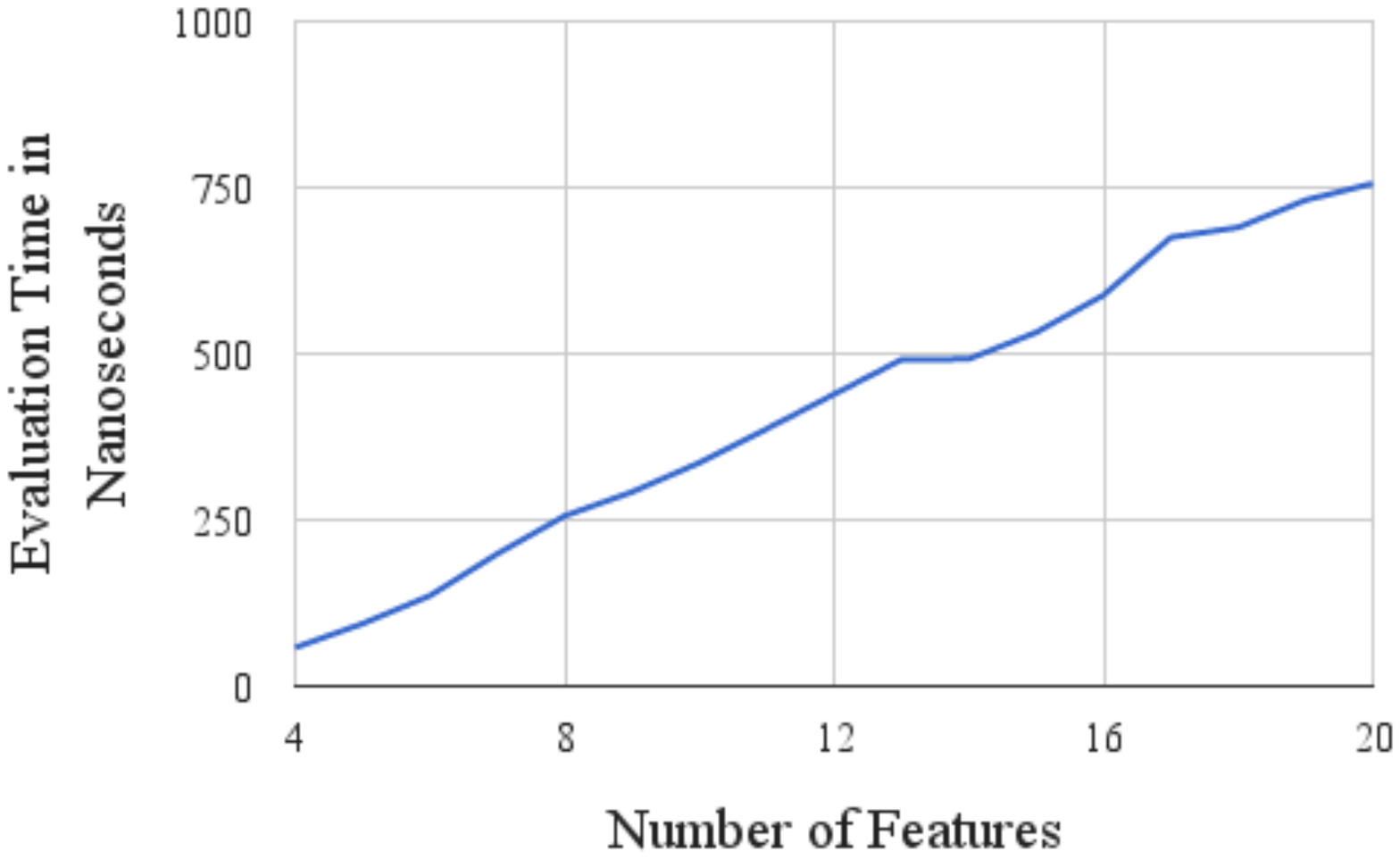} \\
(a) Multilinear Interpolation & (b) Simplex Interpolation  \\
\end{tabular}
\end{center}
\caption{Average evaluation time to interpolate a sample from a $2^D$ lattice. Figure (a) shows the multilinear interpolation time on a $\log_2$ scale in nanoseconds. Figure (b) shows the much faster simplex interpolation time in nanoseconds. Simplex interpolation is $10\times$ faster than multilinear for $D=9$ features, about $100\times$ faster for $D=13$ features,  and over $1,000\times$ faster for $D=17$ features. } \label{fig:evalTime}
\end{figure}

\section{Discussion and Some Open Questions} \label{sec:discussion}
We have proposed an approach to effectively learn flexible, monotonic functions for low-dimensional machine learning problems of classification, ranking, and regression. We addressed a number of practical issues, including interpretability, evaluation speed, automated pre-processing of features, missing data, and categorical features. Experimental results show statistically significant state-of-the-art performance on the largest training sets and largest number of features published for monotonic methods. 

Practical experience has shown us that being able to check and ensure monotonicity helps users trust the model, and leads to models that generalize better. For us, the monotonicity constraints have come from engineers who believe the output should be monotonic in the feature.  In the absence of clear prior information about monotonicity, it may be tempting to use the direction of a linear fit to specify a monotonic direction and then use monotonicity as a regularizer.  \citet{Sill:08} point out that using the linear regression coefficients for this purpose can be misleading if features are correlated and not jointly Gaussian. 

For classifiers, requiring the function to be monotonic is a stronger requirement than needed to guarantee the decision boundary is monotonic.  We have seen in practice that this can occur: one trains an unconstrained lattice, find that it is non-monotonic, but that the thresholded function $g(x) = I_{f(x) > 0}$ \emph{is} monotonic. It is an open question how this could be enforced and whether it would be useful. 

One surprise was that for practical machine learning problems like those of Section \ref{sec:experiments}, we find a simple $2^D$ lattice is sufficient to capture the interactions of $D$ features, especially if we jointly optimize $D$ one-dimensional feature calibration functions. When we began this work, we expected to have to use much more fine-grained lattices with many vertices in each feature, or perhaps irregular lattices to achieve state-of-the-art accuracy. In fact, calibration functions can effectively linearize the data with respect to the label, making a $2^D$ lattice sufficiently flexible for most of the problems we have encountered.

For some cases, a $2^D$ lattice is too flexible. We reduced lattice flexibility with new regularizers: monotonicity, and the torsion regularizer that encourages a more linear model. While good for interpretability and accuracy, these regularization strategies do not reduce the model size. 

For a large number of features $D$, the exponential model size of a $2^D$ lattice is a memory issue. On a single machine, training and evaluating with a few million parameters is viable, but this still limits the approach to not much more than $D=20$ features. An open question is how such large models could be sparsified, and if useful sparsification approaches could also provide additional useful regularization.

A second surprise was that simplex interpolation provides similar accuracy to multilinear interpolation. The rotational dependence of simplex interpolation seemed at first troubling, but the proposed approach of aligning the shared axis of the simplices with the main increasing axis of the function appears to solve this problem in practice. The geometry of the simplices at first seemed odd in that it produces a locally linear surface over elongated simplices. However, this partitioning turns out to work well because it provides a very flexible piecewise linear decision boundary.  Lastly, we found that the theoretical $O(D \log D)$ computational complexity does result in orders of magnitude faster interpolation than multilinear interpolation as $D$ increases.

A common practical issue in machine learning is handling categorical data. We proposed to learn a mapping from mutually exclusive categories to feature values,  jointly with the other model parameters. We found categorical-mapping to be interpretable, flexible, and accurate. The proposed categorical mapping can be viewed as learning a one-dimensional embedding of the categories. Though we generally only needed two vertices in the lattice for continuous features, for categorical features we often find it helpful to use more vertices (a finer-grained lattice) for more flexibility. Some preliminary experiments learning two-dimensional embeddings of categories  (that is, mapping one category to $[0,1]^2$) showed promise, but we found this required more careful initialization and handling of the increased non-convexity.

Learning the monotonic lattice is a convex problem, but composing the lattice and the one-dimensional calibration functions creates a non-convex objective.  We used only one initialization of the lattice and calibrators for all our experiments, but tuned the stepsize of the stochastic gradient descent separately for the set of lattice parameters and the set of calibration parameters. In some cases we saw a substantial sensitivity of the accuracy to the initial SGD stepsizes. We hypothesize that this is caused by some interplay of the relative stepsizes and the relative size of the local optima.

We employed a number of strategies to speed up training. One of the biggest speed-ups comes from randomly sampling the additive terms of the graph regularizers, analogous to the random sampling of the additive terms of the empirical loss that SGD uses.  We showed that a parallelize-and-average strategy works for training the lattices.  The largest computational bottleneck remains the projections onto the monotonicity constraints. Mini-batching the samples reduces the number of projections and provides speed-ups, but a faster approach to optimization given possibly hundreds of thousands of constraints would be valuable.

While we focused on constraints that impose monotonicity, imposing other constraints on the lattice may be useful.  For example, one could learn a submodular function from noisy samples by imposing submodularity constraints on the parameters.  Or, to guarantee that change from a prior model is not too large (for churn reduction, or consistency), one can constrain the lattice parameters to be within a fixed margin of the prior model's prediction for the corresponding vertex.  If more than three (or more) vertices are used for a feature, then one can enforce Brooks'  Law \citep{Brooks:75}, in which the function is constrained to first increase but then decrease as one input is increased, all other inputs held constant. Brooks' Law is most famous for modeling the productivity of software teams as programmers are added - Brooks argued that at some point adding more programmers can make the project slower not faster, due to increased communication overhead and difficulties of achieving consensus on decisions, a phenomenon referred to as the \emph{the mythical man-month}. In fact, Brooks' Law may be useful for modeling a breadth of real-world relationships.  For example, an extraordinarily large number of watches of a video suggests that everyone has already seen it, and that it may not be as good a recommendation as a video with slightly fewer watches. Or if predicting a used car's value, one expects the value to decrease with the car's age up to a certain point, after which it becomes a \emph{classic} car and (all else held constant), its value increases with age. Combined with the proposed calibration functions that determine how to map raw values to the constrained lattice, it should be possible to learn effective functions with such complicated constraints for real-world problems. 


\section{Acknowledgments}
We thank Sugato Basu, David Cardoze, James Chen, Emmanuel Christophe, Brendan Collins,  Mahdi Milani Fard, James Muller, Biswanath Panda, and Alex Vodomerov for help with experiments and helpful discussions.

\bibliography{references}

\begin{thebibliography}{76}
\providecommand{\natexlab}[1]{#1}
\providecommand{\url}[1]{\texttt{#1}}
\expandafter\ifx\csname urlstyle\endcsname\relax
  \providecommand{\doi}[1]{doi: #1}\else
  \providecommand{\doi}{doi: \begingroup \urlstyle{rm}\Url}\fi

\bibitem[Abu-Mostafa(1993)]{AbuMostafa:93}
Y.~S. Abu-Mostafa.
\newblock A method for learning from hints.
\newblock In \emph{Advances in Neural Information Processing Systems}, pages
  73--80, 1993.

\bibitem[Archer and Wang(1993)]{Archer:93}
N.~P. Archer and S.~Wang.
\newblock Application of the back propagation neural network algorithm with
  monotonicity constraints for two-group classification problems.
\newblock \emph{Decision Sciences}, 24\penalty0 (1):\penalty0 60--75, 1993.

\bibitem[Bach(2013)]{bachBook}
F.~Bach.
\newblock Learning with submodular functions: A convex optimization
  perspective.
\newblock \emph{Foundations and Trends in Machine Learning}, 6\penalty0 (2),
  2013.

\bibitem[Barlow et~al.(1972)Barlow, Bartholomew, Bremner, and Brunk]{Barlow:72}
R.~E. Barlow, D.~J. Bartholomew, J.~M. Bremner, and H.~D. Brunk.
\newblock \emph{Statistical inference under order restrictions; the theory and
  application of isotonic regression}.
\newblock Wiley, New York, USA, 1972.

\bibitem[{Ben-David}(1992)]{BenDavid:92}
A.~{Ben-David}.
\newblock Automatic generation of symbolic multiattribute ordinal knowledge
  based {DSS}: methodology and applications.
\newblock \emph{Decision Sciences}, pages 1357--1372, 1992.

\bibitem[{Ben-David}(1995)]{benDavid:95}
A.~{Ben-David}.
\newblock Monotonicity maintenance in information-theoretic machine learning
  algorithms.
\newblock \emph{Machine Learning}, 21:\penalty0 35--50, 1995.

\bibitem[{Ben-David} et~al.(1989){Ben-David}, Sterling, and Pao]{BenDavid:89}
A.~{Ben-David}, L.~Sterling, and Y.~H. Pao.
\newblock Learning and classification of monotonic ordinal concepts.
\newblock \emph{Computational Intelligence}, 5\penalty0 (1):\penalty0 45--49,
  1989.

\bibitem[Boyd et~al.(2010)Boyd, Parikh, Chu, Peleato, and
  Eckstein]{BoydADMM:2010}
S.~Boyd, N.~Parikh, E.~Chu, B.~Peleato, and J.~Eckstein.
\newblock Distributed optimization and statistical learning via the alternating
  direction method of multipliers.
\newblock \emph{Foundations and Trends in Machine Learning}, 3\penalty0 (1),
  2010.

\bibitem[Breiman(2001)]{Breiman:01}
L.~Breiman.
\newblock Random forests.
\newblock \emph{Machine Learning}, 45\penalty0 (1):\penalty0 5--32, 2001.

\bibitem[Brooks(1975)]{Brooks:75}
F.~Brooks.
\newblock \emph{The Mythical {Man-Month}}.
\newblock Addison Wesley, New York, USA, 1975.

\bibitem[Casillas et~al.(2002)Casillas, Cordon, Herrera, and
  (Eds.)]{Casillas:02}
J.~Casillas, O.~Cordon, F.~Herrera, and L.~Magdalena (Eds.).
\newblock Trade-off between accuracy and interpretability in fuzzy rule-based
  modelling.
\newblock \emph{Physica-Verlag}, 2002.

\bibitem[Chandrasekaran et~al.(2005)Chandrasekaran, Ryu, Jacob, and
  Hong]{Chandrasekaran:05}
R.~Chandrasekaran, Y.~U. Ryu, V.~S. Jacob, and S.~Hong.
\newblock Isotonic separation.
\newblock \emph{INFORMS Journal on Computing}, 17\penalty0 (4):\penalty0
  462--474, 2005.

\bibitem[Cotter et~al.(2015)Cotter, Gupta, and Pfeifer]{LightTouchArxiv2015}
A.~Cotter, M.~R. Gupta, and J.~Pfeifer.
\newblock {A Light Touch for Heavily Constrained SGD}.
\newblock \emph{{arXiv preprint}}, 2015.
\newblock URL \url{http://arxiv.org/abs/1512.04960}.

\bibitem[Dalvi et~al.(2014)Dalvi, Olteanu, Raghavan, and Bohannon]{Dalvi:2014}
N.~Dalvi, M.~Olteanu, M.~Raghavan, and P.~Bohannon.
\newblock Deduplicating a places database.
\newblock \emph{Proc. ACM WWW Conf.}, 2014.

\bibitem[Daniels and Velikova(2010)]{Daniels:2010}
H.~Daniels and M.~Velikova.
\newblock Monotone and partially monotone neural networks.
\newblock \emph{{IEEE} Trans. Neural Networks}, 21\penalty0 (6):\penalty0
  906--917, 2010.

\bibitem[Dekel et~al.(2012)Dekel, {Gilad-Bachrach}, Shamir, and
  Xiao]{DekelGiShXi12}
O.~Dekel, R.~{Gilad-Bachrach}, O.~Shamir, and L.~Xiao.
\newblock Optimal distributed online prediction using mini-batches.
\newblock \emph{Journal Machine Learning Research}, 13\penalty0 (1):\penalty0
  165--202, January 2012.

\bibitem[Duchi et~al.(2011)Duchi, Hazan, and Singer]{Duchi:2011}
J.~Duchi, E.~Hazan, and Y.~Singer.
\newblock Adaptive subgradient methods for online learning and stochastic
  optimization.
\newblock \emph{Journal Machine Learning Research}, 12:\penalty0 2121--2159,
  2011.

\bibitem[Dugas et~al.(2000)Dugas, Bengio, B\'elisle, Nadeau, and
  Garcia]{Dugas:NIPS}
C.~Dugas, Y.~Bengio, F.~B\'elisle, C.~Nadeau, and R.~Garcia.
\newblock Incorporating second-order functional knowledge for better option
  pricing.
\newblock In \emph{Advances in Neural Information Processing Systems {(NIPS)}},
  2000.

\bibitem[Dugas et~al.(2009)Dugas, Bengio, B\'elisle, Nadeau, and
  Garcia]{Dugas:JMLR}
C.~Dugas, Y.~Bengio, F.~B\'elisle, C.~Nadeau, and R.~Garcia.
\newblock Incorporating functional knowledge in neural networks.
\newblock \emph{Journal Machine Learning Research}, 2009.

\bibitem[Duivesteijn and Feelders(2008)]{Feelders:08}
W.~Duivesteijn and A.~Feelders.
\newblock Nearest neighbour classification with monotonicity constraints.
\newblock \emph{Proc. {European} Conf. Machine Learning}, pages 301--316, 2008.

\bibitem[Feelders(2010)]{Feelders:10}
A.~Feelders.
\newblock Monotone relabeling in ordinal classification.
\newblock \emph{Proc. {IEEE} Conf. Data Mining}, pages 803--808, 2010.

\bibitem[{Fernandez-Delgado} et~al.(2014){Fernandez-Delgado}, Cernadas, Barro,
  and Amorim]{rfsRock:14}
M.~{Fernandez-Delgado}, E.~Cernadas, S.~Barro, and D.~Amorim.
\newblock Do we need hundreds of classifiers to solve real world classification
  problems?
\newblock \emph{Journal Machine Learning Research}, 2014.

\bibitem[Garcia and Gupta(2009)]{Garcia:09}
E.~K. Garcia and M.~R. Gupta.
\newblock Lattice regression.
\newblock In \emph{Advances in Neural Information Processing Systems {(NIPS)}},
  2009.

\bibitem[Garcia et~al.(2010)Garcia, Feldman, Gupta, and Srivastava]{Garcia:10}
E.~K. Garcia, S.~Feldman, M.~R. Gupta, and S.~Srivastava.
\newblock Completely lazy learning.
\newblock \emph{IEEE Trans. Knowledge and Data Engineering}, 22\penalty0
  (9):\penalty0 1274--1285, Sept. 2010.

\bibitem[Garcia et~al.(2012)Garcia, Arora, and Gupta]{Garcia:12}
E.~K. Garcia, R.~Arora, and M.~R. Gupta.
\newblock Optimized regression for efficient function evaluation.
\newblock \emph{IEEE Trans. Image Processing}, 21\penalty0 (9):\penalty0
  4128--4140, Sept. 2012.

\bibitem[Garcia et~al.(2009)Garcia, Fernandez, Luengo, and Herrera]{Herrera:09}
S.~Garcia, A.~Fernandez, J.~Luengo, and F.~Herrera.
\newblock A study of statistical techniques and performance measures for
  genetics-based machine learning: accuracy and interpretability.
\newblock \emph{Soft Computing}, 13:\penalty0 959--977, 2009.

\bibitem[Good(1965)]{Good:65}
I.~J. Good.
\newblock \emph{The Estimation of Probabilities: An Essay on Modern Bayesian
  Methods}.
\newblock MIT Press, 1965.

\bibitem[Gruber et~al.(2007)Gruber, Holzer, and Ruepp]{bogosort}
H.~Gruber, M.~Holzer, and O.~Ruepp.
\newblock Sorting the slow way: an analysis of perversely awful randomized
  sorting algorithms.
\newblock In \emph{Fun with Algorithms}, pages 183--197. Springer, 2007.

\bibitem[Gupta et~al.(2014)Gupta, Bengio, and Weston]{Gupta:2014}
M.~Gupta, S.~Bengio, and J.~Weston.
\newblock Training highly multiclass classifiers.
\newblock \emph{Journal Machine Learning Research}, 2014.

\bibitem[Gupta et~al.(2006)Gupta, Gray, and Olshen]{GuptaGrayOlshen}
M.~R. Gupta, R.~M. Gray, and R.~A. Olshen.
\newblock Nonparametric supervised learning by linear interpolation with
  maximum entropy.
\newblock \emph{IEEE Trans. Pattern Analysis and Machine Intelligence},
  28\penalty0 (5):\penalty0 766--781, 2006.

\bibitem[Hastie and Tibshirani(1990)]{HastieTibshirani:90}
T.~Hastie and R.~Tibshirani.
\newblock \emph{Generalized Additive Models}.
\newblock Chapman Hall, New York, 1990.

\bibitem[Hastie et~al.(2001)Hastie, Tibshirani, and Friedman]{HTF}
T.~Hastie, R.~Tibshirani, and J.~Friedman.
\newblock \emph{The Elements of Statistical Learning}.
\newblock Springer-Verlag, New York, 2001.

\bibitem[Holmes and Heard(2003)]{Holmes:03}
C.~C. Holmes and N.~A. Heard.
\newblock Generalized monotonic regression using random change points.
\newblock \emph{Statistics in Medicine}, 22:\penalty0 623--638, 2003.

\bibitem[Howard and Jebara(2007)]{Jebara:07}
A.~Howard and T.~Jebara.
\newblock Learning monotonic transformations for classification.
\newblock In \emph{Advances in Neural Information Processing Systems}, 2007.

\bibitem[Ishibuchi and Nojima(2007)]{Ishibuchi:07}
H.~Ishibuchi and Y.~Nojima.
\newblock Analysis of interpretability-accuracy tradeoff of fuzzy systems by
  multiobjective fuzzy genetics-based machine learning.
\newblock \emph{International Journal of Approximate Reasoning}, 44:\penalty0
  4--31, 2007.

\bibitem[Joachims et~al.(2005)Joachims, Granka, Pan, Hembrooke, and
  Gay]{Joachims:2005}
T.~Joachims, L.~Granka, B.~Pan, H.~Hembrooke, and G.~Gay.
\newblock Accurately interpreting clickthrough data as implicit feedback.
\newblock \emph{Proc. SIGIR}, 2005.

\bibitem[Kang(1995)]{Kang:95}
H.~R. Kang.
\newblock Comparison of three-dimensional interpolation techniques by
  simulations.
\newblock \emph{{SPIE Vol.} 2414}, 1995.

\bibitem[Kang(1997)]{KangBook}
H.~R. Kang.
\newblock \emph{Color Technology for Electronic Imaging Devices}.
\newblock SPIE Press, USA, 1997.

\bibitem[Kasson et~al.(1993)Kasson, Plouffe, and Nin]{Kasson:93}
J.~Kasson, W.~Plouffe, and S.~Nin.
\newblock A tetrahedral interpolation technique for color space conversion.
\newblock \emph{{SPIE Vol.} 1909}, 1993.

\bibitem[Kay and Ungar(2000)]{KayUngar:2000}
H.~Kay and L.~H. Ungar.
\newblock Estimating monotonic functions and their bounds.
\newblock \emph{{AIChE} Journal}, 46\penalty0 (12):\penalty0 2426--2434, 2000.

\bibitem[Knop(1973)]{Knop:73}
R.~E. Knop.
\newblock A note on hypercube partitions.
\newblock \emph{Journal of Combinatorial Theory, Ser. A}, 15\penalty0
  (3):\penalty0 338--342, 1973.

\bibitem[Kotlowski and Slowinski(2009)]{Kotlowski:09}
W.~Kotlowski and R.~Slowinski.
\newblock Rule learning with monotonicity constraints.
\newblock In \emph{Proceedings International Conference on Machine Learning},
  2009.

\bibitem[Lauer and Bloch(2008)]{lauer:2008r}
F.~Lauer and G.~Bloch.
\newblock Incorporating prior knowledge in support vector regression.
\newblock \emph{Machine Learning}, 70\penalty0 (1):\penalty0 89--118, 2008.

\bibitem[Liao et~al.(2007)Liao, Li, and Carin]{Carin:2007}
X.~Liao, H.~Li, and L.~Carin.
\newblock Quadratically gated mixture of experts for incomplete data
  classification.
\newblock \emph{Proc. {ICML}}, 2007.

\bibitem[Liu(2011)]{LiuBook}
{T.-Y.} Liu.
\newblock \emph{Learning to Rank for Information Retrieval}.
\newblock Springer, 2011.

\bibitem[{Magdon-Ismail} and Sill(2008)]{Sill:08}
Malik {Magdon-Ismail} and J.~Sill.
\newblock A linear fit gets the correct monotonicity directions.
\newblock \emph{Machine Learning}, pages 21--43, 2008.

\bibitem[Mann et~al.(2009)Mann, {McDonald}, Mohri, Silberman, and
  Walker]{MannBigFast}
G.~Mann, R.~{McDonald}, M.~Mohri, N.~Silberman, and D.~D. Walker.
\newblock Efficient large-scale distributed training of conditional maximum
  entropy models.
\newblock \emph{Advances in Neural Information Processing Systems {(NIPS)}},
  2009.

\bibitem[Mead(1979)]{Mead:79}
D.~G. Mead.
\newblock Dissection of the hypercube into simplexes.
\newblock \emph{Proc. Amer. Math. Soc.}, 76:\penalty0 302--304, 1979.

\bibitem[Minin et~al.(2010)Minin, Velikova, Lang, and Daniels]{Minin:2010}
A.~Minin, M.~Velikova, B.~Lang, and H.~Daniels.
\newblock Comparison of universal approximators incorporating partial
  monotonicity by structure.
\newblock \emph{Neural Networks}, 23\penalty0 (4):\penalty0 471--475, 2010.

\bibitem[Mukarjee and Stern(1994)]{Mukarjee:94}
H.~Mukarjee and S.~Stern.
\newblock Feasible nonparametric estimation of multiargument monotone
  functions.
\newblock \emph{Journal of the American Statistical Association}, 89\penalty0
  (425):\penalty0 77--80, 1994.

\bibitem[Neelon and Dunson(2004)]{Neelon:04}
B.~Neelon and D.~B. Dunson.
\newblock Bayesian isotonic regression and trend analysis.
\newblock \emph{Biometrics}, 60:\penalty0 398--406, 2004.

\bibitem[Nemirovski et~al.(2009)Nemirovski, Juditsky, Lan, and
  Shapiro]{NemirovskiJuLaSh09}
A.~Nemirovski, A.~Juditsky, G.~Lan, and A.~Shapiro.
\newblock Robust stochastic approximation approach to stochastic programming.
\newblock \emph{SIAM Journal on Optimization}, 19\penalty0 (4):\penalty0
  1574--1609, January 2009.

\bibitem[Neumann et~al.(2013)Neumann, Rolf, and Steil]{neumann:2013}
K.~Neumann, M.~Rolf, and J.~J. Steil.
\newblock Reliable integration of continuous constraints into extreme learning
  machines.
\newblock \emph{International Journal of Uncertainty, Fuzziness and
  Knowledge-Based Systems}, 21\penalty0 (supp02):\penalty0 35--50, 2013.

\bibitem[Nock(2002)]{Nock:02}
R.~Nock.
\newblock Inducing interpretable voting classifiers without trading accuracy
  for simplicity: Theoretical results, approximation algorithms, and
  experiments.
\newblock \emph{Journal Artificial Intelligence Research}, 17:\penalty0
  137--170, 2002.

\bibitem[{Osei-Bryson}(2007)]{Osei:07}
{K.-M.} {Osei-Bryson}.
\newblock Post-pruning in decision tree induction using multiple performance
  measures.
\newblock \emph{Computers and Operations Research}, 34:\penalty0 3331--3345,
  2007.

\bibitem[Potharst and Feelders(2002{\natexlab{a}})]{Feelders:02}
R.~Potharst and A.~J. Feelders.
\newblock Classification trees for problems with monotonicity constraints.
\newblock \emph{{ACM SIGKDD} Explorations}, pages 1--10, 2002{\natexlab{a}}.

\bibitem[Potharst and Feelders(2002{\natexlab{b}})]{Pardoel:02}
R.~Potharst and A.~J. Feelders.
\newblock Pruning for monotone classification trees.
\newblock \emph{Springer Lecture Notes on Computer Science}, 2810:\penalty0
  1--12, 2002{\natexlab{b}}.

\bibitem[Qu and Hu(2011)]{QuHu:11}
{Y.-J.} Qu and {B.-G.} Hu.
\newblock Generalized constraint neural network regression model subject to
  linear priors.
\newblock \emph{IEEE Trans. on Neural Networks}, 22\penalty0 (11):\penalty0
  2447--2459, 2011.

\bibitem[Ramsay(1998)]{Ramsay:98}
J.~O. Ramsay.
\newblock Estimating smooth monotone functions.
\newblock \emph{Journal of the Royal Statistical Society, Series B},
  60:\penalty0 365--375, 1998.

\bibitem[R\"atsch et~al.(2006)R\"atsch, Sonnenburg, and Sch\"afer]{Ratsch:06}
G.~R\"atsch, S.~Sonnenburg, and C.~Sch\"afer.
\newblock Learning interpretable {SVM}s for biological sequence classification.
\newblock \emph{BMC Bioinformatics}, 7, 2006.

\bibitem[Riihim{\"a}ki and Vehtari(2010)]{riihimaki:2010}
J.~Riihim{\"a}ki and A.~Vehtari.
\newblock Gaussian processes with monotonicity information.
\newblock In \emph{International Conference on Artificial Intelligence and
  Statistics}, pages 645--652, 2010.

\bibitem[Rovatti et~al.(1998)Rovatti, Borgatti, and Guerrieri]{Rovatti:98}
R.~Rovatti, M.~Borgatti, and R.~Guerrieri.
\newblock A geometric approach to maximum-speed $n$-dimensional continuous
  linear interpolation in rectangular grids.
\newblock \emph{IEEE Trans. on Computers}, 47\penalty0 (8):\penalty0 894--899,
  1998.

\bibitem[Schimdt and Simon(2007)]{Schmidt:07}
F.~Schimdt and R.~Simon.
\newblock Some geometric probability problems involving the {Eulerian} numbers.
\newblock \emph{Electronic Journal of Combinatorics}, 4\penalty0 (2), 2007.

\bibitem[Sharma and Bala(2002)]{BalaBook}
G.~Sharma and R.~Bala.
\newblock \emph{Digital Color Imaging Handbook}.
\newblock CRC Press, New York, 2002.

\bibitem[Shively et~al.(2009)Shively, Sager, and Walker]{Shively:09}
T.~S. Shively, T.~W. Sager, and S.~G. Walker.
\newblock A {Bayesian} approach to non-parametric monotone function estimation.
\newblock \emph{Journal of the Royal Statistical Society, Series B},
  71\penalty0 (1):\penalty0 159--175, 2009.

\bibitem[Shukla and Tripathi(2012)]{Shukla:12}
P.~K. Shukla and S.~P. Tripathi.
\newblock A review on the interpretability-accuracy trade-off in evolutionary
  multi-objective fuzzy systems {(EMOFS)}.
\newblock \emph{Information}, 2012.

\bibitem[Sill(1998)]{Sill:98}
J.~Sill.
\newblock Monotonic networks.
\newblock \emph{Advances in Neural Information Processing Systems {(NIPS)}},
  1998.

\bibitem[Sill and {Abu-Mostafa}(1997)]{Sill:97}
J.~Sill and Y.~S. {Abu-Mostafa}.
\newblock Monotonicity hints.
\newblock \emph{Advances in Neural Information Processing Systems {(NIPS)}},
  pages 634--640, 1997.

\bibitem[Spiegelhalter and {Knill-Jones}(1984)]{Spiegelhalter:84}
D.~J. Spiegelhalter and R.~P. {Knill-Jones}.
\newblock Statistical and knowledge-based approaches to clinical decision
  support systems, with an application in gastroenterology.
\newblock \emph{Journal of the Royal Statistical Society A}, 147:\penalty0
  35--77, 1984.

\bibitem[Spouge et~al.(2003)Spouge, Wan, and Wilbur]{Spouge:03}
J.~Spouge, H.~Wan, and W.~J. Wilbur.
\newblock Least squares isotonic regression in two dimensions.
\newblock \emph{Journal of Optimization Theory and Applications}, 117\penalty0
  (3):\penalty0 585--605, 2003.

\bibitem[Strannegaard(2012)]{Strannegaard:12}
C.~Strannegaard.
\newblock Transparent neural networks.
\newblock \emph{Proc. Artificial General Intelligence}, 2012.

\bibitem[Sun and Zhou(2012)]{SunZhouComparison}
B.~Sun and S.~Zhou.
\newblock Study on the {3D} interpolation models used in color conversion.
\newblock \emph{{IACSIT} Intl. Journal Engineering and Technology}, 4\penalty0
  (1), 2012.

\bibitem[{van Esbroeck} et~al.(2014){van Esbroeck}, Singh, Rubinfeld, and
  Syed]{vanEsbroeck:2014}
A.~{van Esbroeck}, S.~Singh, I.~Rubinfeld, and Z.~Syed.
\newblock Evaluating trauma patients: Addressing missing covariates with joint
  optimization.
\newblock \emph{Proc. {AAAI}}, 2014.

\bibitem[Villalobos and Wahba(1987)]{Wahba:87}
M.~Villalobos and G.~Wahba.
\newblock Inequality-constrained multivariate smoothing splines with
  application to the estimation of posterior probabilities.
\newblock \emph{Journal of the American Statistical Association}, 82\penalty0
  (397):\penalty0 239--248, 1987.

\bibitem[Wang(1994)]{wang:94}
S.~Wang.
\newblock A neural network method of density estimation for univariate unimodal
  data.
\newblock \emph{Neural Computing \& Applications}, 2\penalty0 (3):\penalty0
  160--167, 1994.

\bibitem[Yu and Xiao(2012)]{Yu:09}
L.~Yu and J.~Xiao.
\newblock Trade-off between accuracy and interpretability: experience-oriented
  fuzzy modeling via reduced-set vectors.
\newblock \emph{Computers and Mathematics with Applications}, 57:\penalty0
  885--895, 2012.

\end{thebibliography}
\end{document}